\documentclass[3p]{elsarticle}

\usepackage{natbib}
\usepackage{caption}
\usepackage{graphicx}
\usepackage{makecell}
\usepackage{subfig}
\usepackage{amsmath}
\usepackage{amssymb}
\usepackage{amsthm}
\usepackage{algorithm}
\usepackage{algorithmic}
\usepackage{threeparttable}
\usepackage{multirow}
\usepackage{bm}
\usepackage{cases}
\usepackage{array}
\usepackage{url}

\usepackage{textcomp}
\usepackage{xcolor}
\usepackage{booktabs}
\usepackage{float}
\usepackage{verbatim}

\newtheorem{definition}{Definition}
\newtheorem{corollary}{Corollary}

\newtheorem{theorem}{Theorem}

\newtheorem{lemma}{Lemma}
\newtheorem{proposition}{Proposition}

\journal{Artificial Intelligence}

\begin{document}
	
	\begin{frontmatter}
		\title{Multi-Party Multi-Objective Optimization as Consensus Search: Runtime Analysis of Cross-Party Recombination}
		
		\author[1]{Xiaolei Fang}
		\ead{202383250038@nuist.edu.cn}
		\author[1]{Peilan Xu\corref{cor1}}
		\ead{xpl@nuist.edu.cn}
		\author[2]{Wenjian Luo}
		\ead{luowenjian@hit.edu.cn}
		\cortext[cor1]{Corresponding author.}
		
		\address[1]{School of Artificial Intelligence, Nanjing University of Information Science and Technology, Nanjing 210044, China}
		\address[2]{Guangdong Provincial Key Laboratory of Novel Security Intelligence Technologies, Institute of Cyberspace Security, School of Computer Science and Technology, Harbin Institute of Technology, Shenzhen 518055, China.}
		
		\begin{abstract}
			Multi-party multi-objective optimization problems (MPMOPs) require consensus among autonomous decision makers and therefore differ from flattened many-objective formulations. Existing runtime theory for multi-objective evolutionary algorithms is largely tailored to single-party Pareto-front approximation and does not directly explain common-solution search in MPMOPs. We investigate cross-party recombination in two representative settings. On MP-JCG, a pseudo-Boolean benchmark with an explicit gap region, we prove that a payoff-guided mutation baseline faces a gap-crossing bottleneck requiring \(\Theta(n^2)\) expected fitness evaluations. In contrast, an analytical CPR-NSGA-II variant discovers both common Pareto-optimal solutions in \(O(n\log n)\) expected evaluations by directly assembling complementary prefix and suffix templates distributed across party populations. Comparing this with the flattened four-objective formulation F-JCG, our full-front coverage analysis illustrates the additional coverage burden introduced by flattening. For BPBOMST, the bi-party, two-objective-per-party specialization of the multi-party multi-objective minimum spanning tree problem, we develop a layered support-cover analysis. For each common Pareto objective vector, the symmetric average projection induces an auxiliary bi-objective MST instance, and suitable support representatives yield a \(2\lambda\)-common approximation cover with \(\lambda\in[1,2]\). We further derive an instance-parameterized expected runtime bound for a representative-pool CPR-NSGA-II variant using edge-union recombination and uniform repair. This bound separates the effects of local auxiliary-front filling, cross-party recombination shortcuts, and edge-union repair ambiguity. Overall, our analysis demonstrates that cross-party recombination can efficiently assemble complementary information from different parties to construct common solutions.
		\end{abstract}
		
		\begin{keyword}
			Multi-party Multi-objective Optimization \sep Evolutionary Algorithms \sep Runtime Analysis \sep Cross-party Recombination
		\end{keyword}
		
	\end{frontmatter}
	
	\section{Introduction}
	\label{sec:introduction}
	
		Many real-world decision-making problems are inherently multi-party, in which multiple autonomous decision makers (DMs) operate over a shared decision space while pursuing distinct and often conflicting objectives. Representative examples include reservoir flood-control scheduling balancing flood safety, hydropower generation, and ecological protection \cite{chuntianThreepersonMultiobjectiveConflict2002}, cross-basin water diversion involving upstream suppliers, downstream consumers, and regulatory authorities \cite{zhangNegotiationBasedMultiObjectiveMultiParty2012a}, and UAV path planning in which operational efficiency must comply with regulatory safety constraints \cite{chenEvolutionaryBipartyMultiobjective2024}. Similar multi-party structures also arise in network design, where different agents or organizations may evaluate the same graph through different edge preferences, costs, or reliability requirements \cite{li2023multiagentMSTCover,darmann2011finding,darmann2016condorcet}. Across these settings, no single DM can fully impose a decision unilaterally, and acceptable solutions must instead be reached through interaction and compromise among multiple parties.
	
	Traditionally, such problems are often simplified by assuming a centralized DM that aggregates all party-wise objectives into a single multi-objective optimization problem (MOP). While this centralized modeling facilitates the application of classical multi-objective evolutionary algorithms (MOEAs), it neglects the autonomy and strategic independence of individual parties, thereby altering the underlying search target. Multi-party multi-objective optimization problems (MPMOPs) \cite{liuEvolutionaryApproachMultiparty2020} depart from this assumption by retaining party-wise objective vectors over a shared decision space, ensuring that each party evaluates the same solution through its own objectives. From this perspective, optimization shifts from merely approximating a Pareto front (PF) to identifying common solutions when they exist. Accordingly, a central target in MPMOPs is to identify common Pareto-optimal solutions, defined as the intersection of the individual Pareto sets of all DMs. This distinction highlights that MPMOPs must be analyzed as consensus-search problems rather than as ordinary PF-approximation tasks.
	
	Population-based metaheuristics have demonstrated strong capability in approximating well-distributed PFs for MOPs \cite{tian2021survey}. Accordingly, many existing evolutionary algorithms for MPMOPs are built by extending classical MOEAs, such as NSGA-II \cite{debFastElitistMultiobjective2002} and MOEA/D \cite{zhangMOEAMultiobjectiveEvolutionary2007}, and then incorporating additional multi-party interaction mechanisms. These mechanisms can be organized into three broad categories. First, some studies modify the selection stage to explicitly incorporate multi-party selection pressure, for example through multi-party non-dominated sorting based on party-specific ranks or dominance levels \cite{liuEvolutionaryApproachMultiparty2020,sheNewEvolutionaryApproach2021,she2023}, or through indicator-based selection using specially designed multi-party indicators \cite{songIndicatorBasedEvolutionary2024}. Second, multi-party information interaction can also be realized within a single population, either by alternating the active party over time \cite{changMultipartyMultiobjectiveOptimization2022,sun2025runtime} or by exploiting multi-party sorting information to guide recombination among individuals \cite{chen2025}. Third, multi-population collaborative frameworks maintain one population for each party and employ inter-population information exchange or recombination to construct candidate common solutions or accelerate their discovery \cite{zhangMultiPartyMultiObjectiveOptimization2025}. This line of work already suggests several algorithmic routes toward multi-party consensus search, but it still leaves open the central theoretical question of when and why cross-party interaction, especially cross-party recombination, can provably accelerate the discovery of common Pareto-optimal solutions.
	
	Despite the rapid development of evolutionary algorithms (EAs) for multi-party multi-objective optimization, the theoretical understanding remains limited, especially compared with the maturing runtime theory of classical EAs and MOEAs \cite{he2001drift,laumanns2002running}. A common simplification is to concatenate all party-wise objectives into a single many-objective optimization problem (MaOP). However, flattening changes not only the number of objectives but also the underlying search target. From the MaOP perspective, increasing the objective dimension is known to weaken Pareto-dominance discrimination, enlarge the set of mutually non-dominated solutions, and reduce the effectiveness of diversity-maintenance mechanisms, making population-based search increasingly difficult \cite{li2015many}. More specifically, recent runtime analyses show that NSGA-II can require exponential time for full-front coverage in many-objective settings \cite{zheng2024}, and that crowding-distance-based selection may become ineffective beyond two objectives on many-objective benchmark problems \cite{doerrDifficultiesNSGAIIManyObjective2025}. For an MPMOP with $M$ parties and $k_p$ objectives for party $p$, flattening yields a $K$-objective MaOP with $K=\sum_{p=1}^{M} k_p$. It therefore inherits the above many-objective difficulties while obscuring the structural fact that the consensus target is the common Pareto set rather than the full flattened front. This distinction is especially important in the bi-party bi-objective case, because such a problem is not simply a four-objective MOP. Its target is not the complete four-objective nondominated set, but the intersection of two party-wise bi-objective Pareto sets. Thus, runtime theories developed for full-front approximation do not adequately characterize algorithms that are explicitly driven toward common solutions. This mismatch motivates a dedicated runtime analysis tailored to MPMOPs.
	
	The theoretical benefits of crossover operators in overcoming mutation bottlenecks have been established in various evolutionary computation settings \cite{jansen2002analysis,jansen2005real,doerr2008crossover}. For multi-objective evolutionary optimization, Qian et al. \cite{qian2013analysis} showed that recombination can accelerate the filling of a Pareto front by recombining diverse solutions preserved in the population. Recent runtime analyses of crossover in multi-objective and many-objective optimization further sharpen this point. They show that crossover can induce substantial speedups when the landscape contains complementary partial structures that can be assembled by recombination, whereas mutation-only dynamics may have to rely on rare valley-crossing events \cite{doerrRuntimeAnalysisNSGAII2023a,doerrTheoreticalAnalysesMultiObjective2021,oprisMaopCrossoverAIJ2026}. This perspective is particularly relevant to multi-population MPMOP frameworks, because party-wise populations can preserve heterogeneous but potentially complementary information. This perspective raises a more specific theoretical question for MPMOPs concerning whether complementary structures maintained by different parties can be assembled by cross-party recombination into common Pareto-optimal solutions. Sun et al. \cite{sun2025runtime} provided an important first step by analyzing the runtime of SEMO and its variants on a benchmark MPMOP instance. However, several limitations remain. First, the benchmark landscape is relatively regular, with symmetric fitness structures for the two parties and no explicit gap region, so it does not isolate a gap-crossing bottleneck in consensus search. Second, the analysis is centered on SEMO rather than on mainstream MOEA frameworks such as NSGA-II. Third, the interaction mechanism mainly relies on alternating optimization and alternating dominance, so the role of cross-party recombination in constructing common Pareto-optimal solutions is not isolated. These limitations indicate that existing analyses still do not explain how cross-party recombination affects consensus search in MPMOPs. It also remains unclear whether analogous cross-party recombination effects can be formalized in structured combinatorial settings such as multi-party multi-objective minimum spanning tree problems, where feasible solutions must satisfy global connectivity constraints.
	
	The goal of this paper is to advance the theoretical understanding of EAs for MPMOPs by focusing on representative problem classes and widely adopted algorithmic frameworks. In particular, we investigate how multi-party interaction can be operationalized through cross-party recombination and shared evaluation of offspring across parties. To this end, we introduce a multi-population NSGA-II framework, called cross-party recombination NSGA-II (CPR-NSGA-II), in which each party maintains its own population and common-solution discovery is promoted by crossover across party populations together with shared offspring evaluation. The analysis is organized around two settings. The first is a pseudo-Boolean benchmark that isolates a cross-party gap-crossing bottleneck. The second is a bi-party bi-objective spanning-tree setting that examines common approximation in a graph-constrained combinatorial search space. The main contributions of this paper are summarized as follows.
	
	First, we introduce a pseudo-Boolean benchmark, MP-JCG (multi-party jump--count with gap). This benchmark combines structural ingredients from COCZ~\cite{doerr2025tight} and OJZJ~\cite{doerrTheoreticalAnalysesMultiObjective2021}, and is also motivated by the pseudo-Boolean benchmark tradition in runtime analysis of MOEAs~\cite{laumannsRunningTimeAnalysis2004a,qian2013analysis} and by recent pseudo-Boolean MPMOP analysis~\cite{sun2025runtime}, while introducing an explicit gap region that makes the discovery of common Pareto-optimal solutions non-trivial. On this problem, we analyze a payoff-guided mutation baseline inspired by the payoff-guided process in~\cite{sun2025runtime} and show that mutation-only search faces a gap-crossing bottleneck with expected fitness-evaluation complexity $\Theta(n^2)$. We then analyze the MP-JCG instantiation of CPR-NSGA-II and prove an $O(n\log n)$ expected fitness-evaluation bound for discovering the two common Pareto-optimal solutions under the analyzed setting. We further compare this result with the flattened four-objective formulation F-JCG, for which the expected full-front coverage complexity is shown to be $O(n^3+n^{k+1})$. These results show that cross-party recombination can replace a rare mutation event by directly assembling complementary prefix and suffix templates made available across party populations.
	
	Second, we extend the analysis to BPBOMST, the bi-party, two-objective-per-party specialization of the multi-party multi-objective minimum spanning tree problem. This setting is motivated by the bi-objective MST problem, a canonical combinatorial testbed for multi-objective evolutionary optimization, for which recombination-based Pareto-front coverage has been studied~\cite{qian2013analysis}, and NSGA-II runtime guarantees have recently been established~\cite{cerf2023}. It is also related to multi-agent network design problems in which several agents evaluate or compare the same graph through different edge preferences, rankings, or weights~\cite{li2023multiagentMSTCover,darmann2011finding,darmann2016condorcet}. The resulting problem is neither a standard bi-objective MST nor a flattened four-objective MST. In a bi-objective MST, one decision maker evaluates each tree through two objectives, yielding a two-dimensional Pareto front. In a flattened four-objective MST, the four party-wise objectives are concatenated, and the target becomes the full four-objective nondominated set. In BPBOMST, by contrast, each party has its own bi-objective Pareto set, and the search target is the set of spanning trees that are Pareto-optimal for both parties simultaneously.
	
	For BPBOMST, we develop a layered support-cover analysis. For each common Pareto objective vector, the symmetric average projection induces an auxiliary bi-objective MST. Suitable support representatives in the induced auxiliary spaces provide party-level factor-$2$ cover certificates, and lifting these certificates back to the original four objectives yields a $2\lambda$-common approximation mechanism with $\lambda\in[1,2]$. Thus, when the lifting loss satisfies $\lambda=1$, the analysis recovers a $2$-common approximation guarantee analogous to the classical bi-objective convex-cover argument, while the worst case $\lambda=2$ gives a $4$-common approximation cover. We further derive an instance-parameterized expected runtime bound for a representative-pool version of CPR-NSGA-II with edge-union recombination and uniform repair. The bound separates the local effort required to fill auxiliary convex-front segments, the part of this effort bypassed by CPR-good shortcuts, and the repair ambiguity introduced by edge-union recombination. This formulation explains when cross-party recombination accelerates common-cover construction by assembling complementary edge structures across parties. We also analyze the flattened counterpart and show, through a worst-case counting argument, that flattening can induce substantial nondominated-set growth and population-capacity pressure. In particular, a $K$-objective flattened formulation may require an archive or population of size $\Omega((n w_{\max})^{K-1})$ in a front-preserving elitist analysis, illustrating the dimension-driven burden introduced by flattening.
	
	Beyond the two concrete problem classes, the analysis identifies a common mechanism for the MPMOP settings studied in this paper. The key mechanism is cross-party complementarity, in which different decision makers may maintain or make available distinct but compatible fragments of a common solution, and cross-party recombination can assemble these fragments directly, replacing rare mutation-driven gap-crossing or multi-edge correction events. The results on MP-JCG and BPBOMST indicate that, in the analyzed settings, cross-party recombination acts not merely as a generic diversity-enhancing operator, but as a consensus-construction mechanism under explicit structural conditions.
	
	The remainder of the paper is organized as follows. Section~\ref{sec:preliminaries} introduces the MPMOP formulation, notation, and complexity convention used throughout the paper. Section~\ref{sec:mpjcg-analysis} studies MP-JCG and compares CPR-NSGA-II with a mutation-only baseline and with flattening. Section~\ref{sec:bpbomst-analysis} develops the common approximation and instance-parameterized runtime analysis for BPBOMST. Section~\ref{sec:experiments} provides supporting empirical observations. Section~\ref{sec:conclusion} discusses the mechanism, limitations, and implications of the analysis, and concludes the paper.
	
	\section{Preliminaries}
	\label{sec:preliminaries}
	
	In this section, we formalize multi-party multi-objective optimization, introduce the solution concepts used throughout the paper, and specify the NSGA-II abstraction and complexity convention adopted in the runtime analysis.
	
	\subsection{Problem Formulation and Solution Concepts}
	
	We first formalize MPMOPs and the associated party-wise and common Pareto concepts.
	
	\begin{definition}[Multi-party multi-objective optimization problem (MPMOP)] 
		\label{def:MPMOP}
		Following~\cite{liuEvolutionaryApproachMultiparty2020}, let $M \in \mathbb{N}$ denote the number of parties. For each party $p \in \{1,2,\ldots,M\}$, let $k_p\in\mathbb{N}$ denote the number of objectives of party $p$, and let $F_p : \mathcal{X} \to \mathbb{R}^{k_p}$ denote its objective vector, where
		\begin{equation}\label{eqt:MOPs}
			F_p(\mathbf{x}) = \left(f_{p,1}(\mathbf{x}), f_{p,2}(\mathbf{x}), \ldots, f_{p,k_p}(\mathbf{x})\right), \quad p = 1, 2, \ldots, M.
		\end{equation}
		An MPMOP is formulated as
		\begin{equation}\label{eqt:MPMOPs}
			\max_{\mathbf{x} \in \mathcal{X}} \mathcal{F}(\mathbf{x}) = \left(F_1(\mathbf{x}), F_2(\mathbf{x}), \ldots, F_M(\mathbf{x})\right),
		\end{equation}
		where the parties share the feasible decision space $\mathcal{X}$ but evaluate solutions through their own objective vectors.
	\end{definition}
	
	Unless stated otherwise, the definitions in this subsection use a maximization convention. In later sections, the pseudo-Boolean benchmark MP-JCG follows this convention directly. For BPBOMST in Section~\ref{sec:bpbomst-analysis}, we work with minimization; there, all dominance statements are interpreted with the inequality directions reversed. This convention switch is made explicit in the corresponding section whenever needed.
	
	For comparison, an MPMOP can be transformed into a standard multi-objective optimization problem by concatenating all party-wise objectives into a single objective vector. We formalize this transformation through the following flattening operation.
	
	\begin{definition}[Flattened multi-objective optimization problem (FMOP)]
		\label{def:flatten}
		Consider the MPMOP in Definition~\ref{def:MPMOP}. Let $K=\sum_{p=1}^{M} k_p$. The flattened objective vector is obtained by concatenating all party-wise objectives:
		\[
		\mathbf{f}(\mathbf{x})=
		\bigl(
		f_{1,1}(\mathbf{x}),\ldots,f_{1,k_1}(\mathbf{x}),
		f_{2,1}(\mathbf{x}),\ldots,f_{2,k_2}(\mathbf{x}),
		\ldots,
		f_{M,1}(\mathbf{x}),\ldots,f_{M,k_M}(\mathbf{x})
		\bigr)\in\mathbb{R}^K.
		\]
		The corresponding FMOP is
		\[
		\max_{\mathbf{x}\in\mathcal{X}} \mathbf{f}(\mathbf{x}).
		\]
	\end{definition}
	
	The FMOP will later serve as a comparison baseline. In the analysis sections below, we show that flattening may enlarge the search target and weaken selection pressure in ways that are unfavorable for consensus discovery.
	
	In the MPMOP formulation, however, solution quality is evaluated separately for each party, which motivates the party-wise dominance concepts introduced below. We therefore define Pareto domination separately for each party.
	
	\begin{definition}[Party-wise Pareto domination]
		\label{def:Domination}
		Given two solutions $\mathbf{x}, \mathbf{x}' \in \mathcal{X}$ and a party $p$, we say that $\mathbf{x}$ weakly dominates $\mathbf{x}'$ with respect to party $p$, denoted $\mathbf{x} \succeq_p \mathbf{x}'$, if
		\[
		f_{p,j}(\mathbf{x}) \ge f_{p,j}(\mathbf{x}'), \quad j \in \{1,\ldots,k_p\}.
		\]
		If, additionally, the inequality is strict for at least one objective, then $\mathbf{x}$ dominates $\mathbf{x}'$ with respect to party $p$, denoted $\mathbf{x} \succ_p \mathbf{x}'$.
	\end{definition}
	
	The Pareto set induced by party $p$ is defined as
	\[
	\mathcal{PS}_p=\bigl\{\mathbf{x}\in\mathcal{X}\mid \nexists\,\mathbf{x}'\in\mathcal{X}\text{ such that }\mathbf{x}'\succ_p \mathbf{x}\bigr\},
	\]
	and the corresponding Pareto front is $\mathcal{PF}_p=F_p(\mathcal{PS}_p)$.
	
	In multi-party settings, a central solution concept of interest is the possibly empty set of solutions that are Pareto-optimal for all parties simultaneously.
	
	\begin{definition}[Common Pareto set]
		\label{def:CommonPS}
		The \emph{common Pareto set} of an MPMOP is defined as
		\begin{equation}
			\label{eq:commonPS}
			\mathcal{PS}_{\mathrm{com}} = \bigcap_{p=1}^M \mathcal{PS}_p.
		\end{equation}
		A solution $\mathbf{x} \in \mathcal{X}$ is called \emph{common Pareto-optimal} if $\mathbf{x} \in \mathcal{PS}_{\mathrm{com}}$.
	\end{definition}
	
	In the main runtime analyses of this paper, we assume that the common Pareto set is non-empty. This non-emptiness assumption is imposed only for the analyzed instances and is not part of the general MPMOP definition. For some arguments, it is also convenient to use a dominance relation that jointly accounts for all parties.
	
	\begin{definition}[Multi-party Pareto domination]
		\label{def:Multi-Party_Domination}
		Given two solutions $\mathbf{x}, \mathbf{x}' \in \mathcal{X}$, we say that $\mathbf{x}$ multi-party weakly dominates $\mathbf{x}'$, denoted $\mathbf{x} \succeq_{\mathrm{MP}} \mathbf{x}'$, if
		\[
		\mathbf{x} \succeq_p \mathbf{x}' \quad \text{for all } p\in\{1,\ldots,M\}.
		\]
		If, additionally, there exists at least one party $p_0$ such that $\mathbf{x} \succ_{p_0} \mathbf{x}'$, then $\mathbf{x}$ multi-party dominates $\mathbf{x}'$, denoted $\mathbf{x} \succ_{\mathrm{MP}} \mathbf{x}'$.
	\end{definition}
	
	\subsection{NSGA-II and Population Update}
	\label{subsec:nsgaii}
	
	NSGA-II~\cite{debFastElitistMultiobjective2002} is a widely used multi-objective evolutionary algorithm based on non-dominated sorting and crowding distance. In this paper, the CPR-NSGA-II variants analyzed later use the NSGA-II population-update mechanism within party-wise populations. We therefore recall the basic NSGA-II procedure and its population update. Because MP-JCG and BPBOMST use different encodings and variation operators, the offspring-generation step is specified in the corresponding analysis sections.
	
	\begin{algorithm}[H]
		\caption{Basic NSGA-II procedure}
		\label{alg:NSGA-II}
		\begin{algorithmic}[1]
			\REQUIRE Population size $N$, search space $\mathcal{X}$, objective functions
			\ENSURE Final population $P$
			\STATE $P \leftarrow$ initialize $N$ feasible solutions from $\mathcal{X}$
			\WHILE{stopping criterion is not met}
			\STATE $P' \leftarrow$ generate $N$ offspring from $P$ using the offspring-generation operator specified for the current problem setting
			\STATE $P \leftarrow \textsc{PopulationUpdate}(P \cup P', N)$
			\ENDWHILE
			\RETURN $P$
		\end{algorithmic}
	\end{algorithm}
	
	Algorithm~\ref{alg:NSGA-II} is used only to recall the standard NSGA-II update structure. The concrete CPR-NSGA-II variants in Sections~\ref{sec:mpjcg-analysis} and~\ref{sec:bpbomst-analysis} add party-wise populations, cross-party recombination, shared offspring evaluation, and, depending on the setting, archives, immigrants, or repair operators.
	
	\begin{algorithm}[H]
		\caption{Population update}
		\label{alg:NSGA-II-Update}
		\begin{algorithmic}[1]
			\REQUIRE Combined population $Q$, target size $N$
			\ENSURE Updated population $P$ with $|P|\le N$
			\STATE $\mathcal{R} = \{R_1, R_2, \ldots, R_v\} \leftarrow$ sort $Q$ into non-dominated fronts
			\STATE $P \leftarrow \emptyset, i \leftarrow 1$
			\WHILE{$i \le v$ and $|P|+|R_i| \le N$}
			\STATE $P \leftarrow P \cup R_i$
			\STATE $i \leftarrow i + 1$
			\ENDWHILE
			\IF{$|P| < N$ and $i \le v$}
			\STATE compute crowding distance for individuals in $R_i$
			\STATE sort $R_i$ based on crowding distance in descending order
			\STATE $P \leftarrow P \cup \{ \text{first } N - |P| \text{ solutions from } R_i \}$
			\ENDIF
			\RETURN $P$
		\end{algorithmic}
	\end{algorithm}
	
	NSGA-II has been studied theoretically on several benchmark and combinatorial multi-objective problems, including settings in which the population size is sufficient to cover the Pareto front and settings involving multimodal fitness landscapes or many-objective search~\cite{zheng2022,cerf2023,doerr2023,bian2022,zheng2024}. These results provide methodological background for the runtime analyses developed later in the paper.
	
	\subsection{Notation Summary}
	
	For convenience and clarity, Table~\ref{tab:notation} summarizes the key mathematical notation used throughout the paper. Some symbols are section-specific; when a symbol is overloaded later, the local definition in that section takes precedence.
	
	\begin{table}[h]
		\caption{Summary of key notation}
		\label{tab:notation}
		\centering
		\resizebox{\columnwidth}{!}{
			\begin{tabular}{ll|ll}
				\toprule
				\textbf{Symbol} & \textbf{Meaning} & \textbf{Symbol} & \textbf{Meaning} \\
				\midrule
				$M$ & Number of parties & $K$ & Flattened objective dimension \\
				$\mathcal X$ & Feasible decision space & $N$ & Population-size parameter \\
				$F_p$ & Objective vector of party $p$ & $\mathcal{PS}_{\mathrm{com}}$ & Common Pareto set \\
				$\mathcal H$ & Gap set in MP-JCG & $i(\mathbf{x}), b(\mathbf{x})$ & Prefix-one / suffix-zero counts in MP-JCG \\
				$\Psi(\mathbf{x})$ & Payoff potential in MP-JCG & $W$ & Upper bound on MST objective values \\
				$Y(T)$ & Joint objective vector in BPBOMST & $A_y^{\mathrm{avg}}$ & Average-projection auxiliary objective \\
				$F_y^A$ & Auxiliary Pareto front & $C_A$ & Total auxiliary convex-front filling size \\
				$C_{\min}^A$ & Smallest nonempty auxiliary segment size & $N_{\mathrm{CPR}}$ & Number of CPR-good segments \\
				$G_{\mathrm{CPR}}$ & Local-filling gain from CPR shortcuts & $\Omega_{\mathrm{CPR}}$ & Total edge-union repair ambiguity \\
				$C_{\mathrm{pw}}$ & Party-wise provider-front size & $p_g$ & CPR probability \\
				\bottomrule
			\end{tabular}
		}
	\end{table}
	
	\subsection{Complexity Measurement Convention}
	
	To avoid ambiguity across sections, we use a fixed complexity-reporting convention throughout the paper.
	\begin{itemize}
		\item We report fitness-evaluation (FE) complexity as the main complexity quantity.
		\item Generation bounds are used as intermediate proof steps and are converted to FE complexity explicitly.
		\item For an algorithm with expected generation complexity $\mathbb{E}[G]$, initialization budget $B_{\mathrm{init}}$, and per-generation evaluation budget $B_{\mathrm{gen}}$, we use
		\[
		\mathbb{E}[T_{\mathrm{FE}}]=B_{\mathrm{init}}+B_{\mathrm{gen}}\,\mathbb{E}[G].
		\]
	\end{itemize}
	
	In our settings, $B_{\mathrm{gen}}=\Theta(N)$, where $N$ denotes the relevant population-size or offspring-budget parameter in the current algorithmic setting. Equivalently, $B_{\mathrm{gen}}$ is proportional to the number of newly evaluated candidates per generation, including offspring and immigrants when present, up to constant factors independent of problem size. Initialization evaluations are counted in $B_{\mathrm{init}}$.
	
	\noindent\textbf{Remark on notation.} In the concrete algorithmic sections, the same letter $N$ is sometimes used for a per-population size parameter. Whenever this occurs, the FE conversion is stated explicitly in the corresponding theorem or corollary to avoid ambiguity.

	\section{Runtime Analysis on MP-JCG}
	\label{sec:mpjcg-analysis}
	
	In the existing theoretical literature on MPMOPs over pseudo-Boolean domains, the BPAOAZ instance proposed by Sun et al.~\cite{sun2025runtime} provides a useful starting point for runtime analysis and is, to the best of our knowledge, one of the few pseudo-Boolean benchmarks explicitly designed for this setting. Owing to its symmetric structure derived from two COCZ components, BPAOAZ admits a comparatively regular fitness landscape, which facilitates theoretical treatment. However, this symmetric construction does not explicitly model separated attraction basins induced by low-fitness regions.
	
	In many multi-party decision scenarios, structural constraints may create low-quality valleys between locally attractive regions, thereby increasing the difficulty of global coordination. To capture this phenomenon while preserving analytical tractability, we introduce a new pseudo-Boolean benchmark termed multi-party jump-count with gap (MP-JCG). The construction combines structural elements of OJZJ~\cite{doerrTheoreticalAnalysesMultiObjective2021} and the gap-based COCZ construction~\cite{doerr2025tight}, and introduces an explicit gap region that makes the discovery of common Pareto-optimal solutions non-trivial. This construction isolates a consensus-search bottleneck in which party-wise structures are individually reachable, whereas their common assembly requires coordinating complementary components across parties.
	
	We first define MP-JCG and characterize the Pareto sets of the two parties and their common Pareto set. These structural results are independent of the algorithmic assumptions used later. We then analyze the corresponding runtime behavior of evolutionary algorithms on this benchmark. Throughout this section, runtime is measured in fitness evaluations, with generation bounds converted according to the convention in Sec.~\ref{sec:preliminaries}.
	
	\subsection{Multi-Party Jump-Count with Gap}
	
	We define MP-JCG as a bi-party bi-objective pseudo-Boolean benchmark. The two parties share the decision space $\{0,1\}^n$ and are constructed from OJZJ and a gap-based COCZ component, respectively; for brevity, we call the second component G-COCZ. The problem is presented in maximization form to match the notation in Sec.~\ref{sec:preliminaries}. The parameter \(k\) specifies the size of the jump/gap structure; throughout this section, unless stated otherwise, we assume \(2\le k\le \lfloor n/2\rfloor\).
	
	\begin{definition}[Multi-Party Jump-Count with Gap (MP-JCG)]
		\label{def:MP-JCG}
		Let $n\ge4$ and $k \in \{2,\dots,\lfloor n/2\rfloor\}$. The MP-JCG is defined as a bi-party bi-objective pseudo-Boolean function
		\[
		\mathrm{MP\mbox{-}JCG} : \{0,1\}^n \to \mathbb{N}_0^2 \times \mathbb{N}_0^2, \quad
		\mathbf{x} = (x_1, \dots, x_n) \mapsto (\mathrm{OJZJ}(\mathbf{x}), \mathrm{G\mbox{-}COCZ}(\mathbf{x})),
		\]
		where $\mathbb{N}_0=\{0,1,2,\ldots\}$.
		
		For any interval $a \le b$, define
		\[
		|\mathbf{x}_a^b|_1 = \sum_{i=a}^b x_i, \qquad
		|\mathbf{x}_a^b|_0 = \sum_{i=a}^b (1-x_i), \qquad
		|\mathbf{x}|_1 = |\mathbf{x}_1^n|_1, \qquad
		|\mathbf{x}|_0 = |\mathbf{x}_1^n|_0 .
		\]
		
		\noindent\textbf{Party 1.} $\mathrm{OJZJ}(\mathbf{x}) = (f_{1,1}(\mathbf{x}), f_{1,2}(\mathbf{x}))$ is defined by
		\[
		f_{1,1}(\mathbf{x}) =
		\begin{cases}
			k + |\mathbf{x}|_1, & \text{if } |\mathbf{x}|_1 \le n-k \text{ or } \mathbf{x} = 1^n,\\
			n - |\mathbf{x}|_1, & \text{otherwise},
		\end{cases}
		\qquad
		f_{1,2}(\mathbf{x}) =
		\begin{cases}
			k + |\mathbf{x}|_0, & \text{if } |\mathbf{x}|_0 \le n-k \text{ or } \mathbf{x} = 0^n,\\
			n - |\mathbf{x}|_0, & \text{otherwise}.
		\end{cases}
		\]
		
		\noindent\textbf{Party 2.} $\mathrm{G\mbox{-}COCZ}(\mathbf{x}) = (f_{2,1}(\mathbf{x}), f_{2,2}(\mathbf{x}))$ is defined by
		\[
		f_{2,1}(\mathbf{x}) =
		\begin{cases}
			0, & \mathbf{x} \in \mathcal{H},\\
			|\mathbf{x}|_1, & \text{otherwise},
		\end{cases}
		\qquad
		f_{2,2}(\mathbf{x}) =
		\begin{cases}
			0, & \mathbf{x} \in \mathcal{H},\\
			|\mathbf{x}_1^{\,n-k}|_1 + |\mathbf{x}_{n-k+1}^{\,n}|_0, & \text{otherwise},
		\end{cases}
		\]
		where
		\[
		\mathcal{H} = \left\{\mathbf{x} \in \{0,1\}^n : |\mathbf{x}_1^{\,n-k}|_1 = n-k \ \land\ |\mathbf{x}_{n-k+1}^{\,n}|_0 = 1 \right\}.
		\]
	\end{definition}
	
	Intuitively, the gap set $\mathcal{H}$ represents an unfavorable region of the fitness landscape that blocks a local single-bit route to one of the target common Pareto-optimal solutions. A solution falling into $\mathcal{H}$ is assigned zero values on both Party~2 objectives. The construction captures a simple incompatibility pattern---a solution may be close to the desired structure in Hamming distance, but becomes unacceptable to one party unless a coordinated correction is made.
	
	Fig.~\ref{fig:pareto_fronts_n8} visualizes the objective spaces of both parties for $n=8$ and $k=3$. 
	
	\begin{figure}[htbp]
		\centering
		\includegraphics[width=1.0\linewidth]{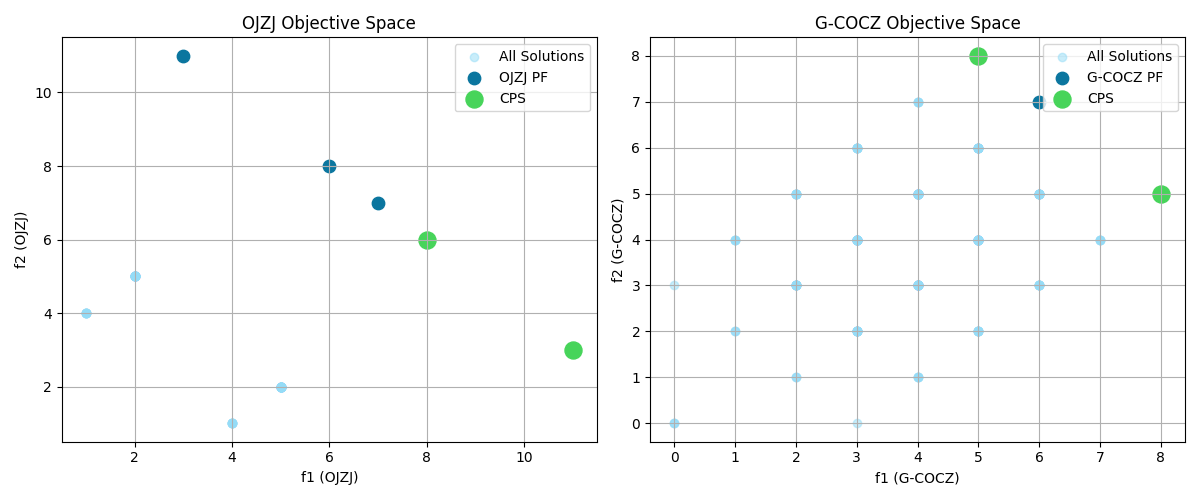}
		\caption{Objective spaces of Party~1 (OJZJ, left) and Party~2 (G-COCZ, right) for $n=8,k=3$. Dark blue points indicate party-wise Pareto-optimal objective vectors, and green points mark the objective vectors of common Pareto-optimal solutions. The gap layer in Party~2 removes the full-prefix points with exactly one suffix zero, creating a local obstruction for single-bit improvement.}
		\label{fig:pareto_fronts_n8}
	\end{figure}
	
	We next characterize the Pareto sets of the two parties and their common Pareto set.
	
	\begin{proposition}
		\label{prop:ps1-mpjcg}
		For Party~1 of MP-JCG,
		\[
		\mathcal{PS}_1 = \{0^n,1^n\} \cup \left\{\mathbf{x}\in\{0,1\}^n : |\mathbf{x}|_1 \in \{k,\ldots,n-k\}\right\}.
		\]
	\end{proposition}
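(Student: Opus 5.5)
Both objectives of Party~1 depend on $\mathbf{x}$ only through $m=|\mathbf{x}|_1$, since $|\mathbf{x}|_0=n-m$ and the exceptional cases $\mathbf{x}=1^n$, $\mathbf{x}=0^n$ are exactly $m=n$ and $m=0$. So I would first reduce the statement to a one-dimensional problem: write $F_1(\mathbf{x})=(g_1(m),g_2(m))$ and work out $g_1,g_2$ explicitly on the three ranges of $m$. For $k\le m\le n-k$, both conditions $m\le n-k$ and $n-m\le n-k$ hold, so $(g_1,g_2)=(k+m,\,k+n-m)$, and the sum is $n+2k$. At the endpoints, $m=n$ gives $(n+k,\,n-n)=(n+k,0)$, because $|\mathbf{x}|_0=0\le n-k$ puts $f_{1,2}$ in its first branch with value $k$. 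Let me recheck that: for $m=n$ we have $f_{1,1}=k+n$ and $f_{1,2}=k+0=k$, so $F_1(1^n)=(n+k,k)$, and symmetrically $F_1(0^n)=(k,n+k)$. For $n-k<m<n$ we get $g_1=n-m\in[1,k-1]$ and $g_2=k+n-m$. For $0<m<k$ we get $g_1=k+m$ and $g_2=m\in[1,k-1]$. The assumption $k\le\lfloor n/2\rfloor$ guarantees $k\le n-k$, so the middle range is nonempty and the ranges are disjoint.

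The second step is to show that every claimed point is nondominated. For $m\in\{k,\dots,n-k\}$, the vectors lie on the antidiagonal $g_1+g_2=n+2k$. I would check that no vector attained anywhere has coordinate sum exceeding $n+2k$, and that equality holds only on the middle range. The endpoint vectors $(n+k,k)$ and $(k,n+k)$ also have sum $n+2k$, and the gap vectors have sum at most $n-m+k+n-m<n+2k$ (respectively $k+2m<n+2k$). A vector dominating a point on the sum-maximal line would need a strictly larger sum, which is impossible. So all middle-range solutions and $0^n,1^n$ are Pareto-optimal. I would also note that distinct points on this line are mutually incomparable.

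The third step is to show that the gap solutions are dominated. For $n-k<m<n$, compare with $\mathbf{y}=0^n$, which gives $(k,n+k)$. Here $g_1=n-m\le k-1<k$, which goes the wrong way, so I need a better comparator. Instead I would take any $\mathbf{y}$ with $|\mathbf{y}|_1=n-k$, giving $(n,2k)$. Then $n\ge n-m$ and $2k\ge k+n-m$ because $n-m<k$, with strict inequality in the first coordinate. Symmetrically, for $0<m<k$ the comparator $|\mathbf{y}|_1=k$, giving $(2k,n)$, dominates $(k+m,m)$ since $m<k$. This completes both inclusions.

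The only delicate point is careful bookkeeping of the exceptional branches at $m=0,n$ and at the boundaries $m=k$ and $m=n-k$, where the branch conditions switch. I expect that to be the main thing to verify, and it is routine.
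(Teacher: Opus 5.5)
Your proof is correct. It differs from the paper's only in how you certify that the claimed points are nondominated. The paper tabulates the same five ranges of $t=|\mathbf{x}|_1$. It then argues three things: the middle vectors are mutually nondominated because $f_{1,1}$ increases and $f_{1,2}$ decreases in $t$; $0^n$ and $1^n$ are nondominated because each maximizes one objective; and the ranges $0<t<k$ and $n-k<t<n$ are dominated. For that last step it uses exactly your comparators with objective vectors $(2k,n)$ and $(n,2k)$. Your sum argument replaces the first two steps by one global certificate. Every claimed vector attains the value $n+2k$ of $f_{1,1}+f_{1,2}$, no attained vector exceeds it, and a dominating vector would need a strictly larger sum. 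This gives you the cross-family comparisons for free, e.g.\ a middle vector versus $(n+k,k)$ or $(k,n+k)$. The paper leaves these implicit; the comparison with the dominated ranges follows only via transitivity. The paper's route, in exchange, never needs the sums on the dominated ranges. Yours does: there the sums are $2(n-m)+k$ and $k+2m$, both at most $3k-2<n+2k$. Two asides in your write-up are wrong but harmless. First, equality $g_1+g_2=n+2k$ holds on the middle range and at both endpoints, not only on the middle range. Second, your dismissal of $0^n$ as a comparator for $n-k<m<n$ is mistaken: under maximization, $(k,n+k)$ strictly beats $(n-m,k+n-m)$ in both coordinates, since $1\le n-m\le k-1$. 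The comparator $(n,2k)$ you then use works equally well.
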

	
	\begin{proof}
		Let $t=|\mathbf{x}|_1$. The objective vector of Party~1 depends only on $t$ and takes the following form:
		\[
		(f_{1,1}(\mathbf{x}),f_{1,2}(\mathbf{x}))=
		\begin{cases}
			(k,n+k), & t=0,\\
			(k+t,t), & 1\le t\le k-1,\\
			(k+t,k+n-t), & k\le t\le n-k,\\
			(n-t,k+n-t), & n-k+1\le t\le n-1,\\
			(n+k,k), & t=n.
		\end{cases}
		\]
		
		For $t\in\{k,\ldots,n-k\}$, increasing $t$ improves $f_{1,1}$ and decreases $f_{1,2}$, so these objective vectors are mutually non-dominated. The extreme points $0^n$ and $1^n$ are also non-dominated, since each maximizes one objective.
		
		It remains to show that all other points are dominated. If $1\le t\le k-1$, then any point $\mathbf{y}$ with $|\mathbf{y}|_1=k$ satisfies
		\[
		(f_{1,1}(\mathbf{y}),f_{1,2}(\mathbf{y}))=(2k,n)\succ (k+t,t)=(f_{1,1}(\mathbf{x}),f_{1,2}(\mathbf{x})).
		\]
		If $n-k+1\le t\le n-1$, then any point $\mathbf{z}$ with $|\mathbf{z}|_1=n-k$ satisfies
		\[
		(f_{1,1}(\mathbf{z}),f_{1,2}(\mathbf{z}))=(n,2k)\succ (n-t,k+n-t)=(f_{1,1}(\mathbf{x}),f_{1,2}(\mathbf{x})).
		\]
		Hence no point outside the stated set is Pareto-optimal.
	\end{proof}
	
	\begin{proposition}
		\label{prop:ps2-mpjcg}
		For Party~2 of MP-JCG,
		\[
		\mathcal{PS}_2 =
		\left\{
		\mathbf{x}\in\{0,1\}^n :
		|\mathbf{x}_1^{\,n-k}|_1 = n-k,\ 
		|\mathbf{x}_{n-k+1}^{\,n}|_0 \neq 1
		\right\}.
		\]
	\end{proposition}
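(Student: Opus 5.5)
Let $i=|\mathbf{x}_1^{n-k}|_1\in\{0,\ldots,n-k\}$ be the prefix-one count and $b=|\mathbf{x}_{n-k+1}^{n}|_0\in\{0,\ldots,k\}$ the suffix-zero count. The plan mirrors the proof of Proposition~\ref{prop:ps1-mpjcg}: I will show that the Party~2 objective vector depends only on $(i,b)$ and then compare these parameter pairs directly. Outside $\mathcal{H}$ the suffix contains $k-b$ ones, so
\[
(f_{2,1}(\mathbf{x}),f_{2,2}(\mathbf{x}))=(i+k-b,\; i+b),
\]
while $\mathbf{x}\in\mathcal{H}$ (that is, $i=n-k$ and $b=1$) receives $(0,0)$.

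\textbf{Step 1: the claimed set is mutually non-dominated.} For $i=n-k$ and $b\in\{0,2,3,\ldots,k\}$ the vector is $(n-b,\,n-k+b)$. These vectors lie on a line of slope $-1$. Hence increasing $b$ trades one unit of $f_{2,1}$ for one unit of $f_{2,2}$, and no two distinct values of $b$ give comparable vectors.

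\textbf{Step 2: none of them is dominated by any point.} Every point satisfies $f_{2,1}+f_{2,2}\le 2i+k\le 2n-k$, with equality exactly when $i=n-k$ and $\mathbf{x}\notin\mathcal{H}$. Points in $\mathcal{H}$ have value $(0,0)$ and dominate nothing. Any vector that dominates another must have a strictly larger coordinate sum. The claimed points already attain the maximal sum $2n-k$, so they cannot be dominated.

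\textbf{Step 3: every other point is dominated.} There are three cases.
\begin{itemize}
\item If $\mathbf{x}\in\mathcal{H}$, its value $(0,0)$ is dominated by $1^n$, whose value is $(n,n-k)$, since $n-k\ge 1$.
\item If $\mathbf{x}\notin\mathcal{H}$ and $i<n-k$, set one prefix zero to one. This increases both objectives by $1$. The resulting point has prefix count $i+1$, and it avoids $\mathcal{H}$ unless $i+1=n-k$ and $b=1$.
\item In that exceptional subcase ($i=n-k-1$, $b=1$), the value is $(n-2,\,n-k)$. It is dominated by $1^n$, which has value $(n,n-k)$.
\end{itemize}

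The only delicate point is this gap interaction in the third case. Everything else is a direct coordinate comparison, so I expect no real obstacle beyond handling that boundary case explicitly.
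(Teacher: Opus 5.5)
Your proof is correct. It follows the paper's outline: parametrize by the prefix and suffix counts, send the gap layer to $1^n$, dispose of incomplete prefixes, and handle the full-prefix non-gap points. The difference is in how you prove the claimed points are Pareto-optimal.

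The paper exhibits, for every point outside the claimed set, a dominator \emph{inside} that set. For $b\neq 1$ this is the full-prefix point with the same suffix count. For $b=1$, including the gap layer, it is $1^n$. It then checks that the vectors $(n-b,\,n-k+b)$ are mutually non-dominated and concludes. The transitivity step is left implicit: a point dominating a claimed point would itself be dominated by a claimed point.

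You certify optimality directly through the sum $f_{2,1}+f_{2,2}=2i+k$ on non-gap points. Its maximum $2n-k$ over the whole cube is attained exactly on the claimed set. This rules out domination by any point at once and makes your Step~1 redundant. It also lets you use the cheaper one-bit-flip dominator in Step~3, which need not lie in the claimed set.

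The price is the boundary case $i=n-k-1$, $b=1$, where the flip would land in $\mathcal{H}$. The paper sidesteps it by splitting on $b=1$ versus $b\neq 1$ and sending every incomplete-prefix point with $b=1$ to $1^n$.

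Your sum argument depends on the non-gap Party~2 front lying on a single line of slope $-1$, so it is specific to this construction. The paper's scheme (dominator in the set plus transitivity) is the pattern that carries over to fronts without such a linear scalarization.
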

	
	\begin{proof}
		Let
		\[
		a=|\mathbf{x}_1^{\,n-k}|_0,\qquad b=|\mathbf{x}_{n-k+1}^{\,n}|_0.
		\]
		If $\mathbf{x}\notin\mathcal{H}$, then
		\[
		f_{2,1}(\mathbf{x}) = n-a-b,\qquad
		f_{2,2}(\mathbf{x}) = (n-k-a)+b.
		\]
		
		We first show that any point with $a>0$ is dominated. If $a>0$ and $b\neq 1$, define $\mathbf{y}$ as the point with all first $n-k$ bits equal to $1$ and exactly $b$ zero-bits in the last $k$ positions. Then $\mathbf{y}\notin\mathcal{H}$ and
		\[
		f_{2,1}(\mathbf{y})=n-b \ge n-a-b=f_{2,1}(\mathbf{x}),
		\]
		\[
		f_{2,2}(\mathbf{y})=n-k+b \ge n-k-a+b=f_{2,2}(\mathbf{x}),
		\]
		with at least one strict inequality since $a>0$. Thus $\mathbf{y}\succ_2 \mathbf{x}$.
		
		If $a>0$ and $b=1$, then $\mathbf{x}\notin\mathcal{H}$ automatically, and
		\[
		(f_{2,1}(1^n),f_{2,2}(1^n))=(n,n-k)
		\]
		dominates
		\[
		(f_{2,1}(\mathbf{x}),f_{2,2}(\mathbf{x}))=(n-a-1,n-k-a+1).
		\]
		
		Therefore, every Pareto-optimal point must satisfy $a=0$, that is, all first $n-k$ bits are equal to $1$. Among such points, the case $b=1$ is exactly the gap set $\mathcal H$. Every point in $\mathcal H$ has Party~2 objective vector $(0,0)$ and is dominated by $1^n$, whose Party~2 objective vector is $(n,n-k)$. For the remaining points with $a=0$ and $b\neq 1$, the objective vector is
		\[
		(f_{2,1}(\mathbf{x}),f_{2,2}(\mathbf{x}))=(n-b,n-k+b).
		\]
		As $b$ increases, the first objective decreases while the second increases, so these points are mutually non-dominated. Hence the stated set is exactly $\mathcal{PS}_2$.
	\end{proof}
	
	\begin{corollary}
		\label{cor:cps-mpjcg}
		The common Pareto set of MP-JCG is
		\[
		\mathcal{PS}_{\mathrm{com}}=\mathcal{PS}_1\cap\mathcal{PS}_2=\{1^{n-k}0^k,\,1^n\}.
		\]
	\end{corollary}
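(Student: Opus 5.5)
The plan is to intersect the two explicit characterizations from Proposition~\ref{prop:ps1-mpjcg} and Proposition~\ref{prop:ps2-mpjcg}. No dominance argument is needed beyond those two results.

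First we parametrize $\mathcal{PS}_2$. By Proposition~\ref{prop:ps2-mpjcg}, every $\mathbf{x}\in\mathcal{PS}_2$ has the form $\mathbf{x}=1^{n-k}\mathbf{s}$ with a suffix $\mathbf{s}\in\{0,1\}^k$. Writing $b=|\mathbf{s}|_0$, the constraint is $b\in\{0,2,3,\ldots,k\}$. The Party~1 membership test depends only on $t=|\mathbf{x}|_1=n-b$, so the intersection reduces to a condition on $b$ alone.

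Next we apply Proposition~\ref{prop:ps1-mpjcg}. It says $\mathbf{x}\in\mathcal{PS}_1$ if and only if $\mathbf{x}\in\{0^n,1^n\}$ or $k\le t\le n-k$. We split into cases on $b$.
\begin{itemize}
\item If $b=0$, then $\mathbf{x}=1^n\in\mathcal{PS}_1$.
\item If $2\le b\le k-1$, then $t=n-b\in\{n-k+1,\ldots,n-2\}$, so $t>n-k$. Moreover $\mathbf{x}$ is neither $0^n$ (because $n-k\ge k\ge2>0$ ones are present) nor $1^n$. Hence $\mathbf{x}\notin\mathcal{PS}_1$.
\item If $b=k$, then $\mathbf{x}=1^{n-k}0^k$ and $t=n-k$. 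Since $k\le\lfloor n/2\rfloor$, we have $k\le n-k$, so $t\in\{k,\ldots,n-k\}$ and $\mathbf{x}\in\mathcal{PS}_1$.
\end{itemize}
The value $b=1$ is already excluded by $\mathcal{PS}_2$, so no further case is needed. Together these cases give $\mathcal{PS}_{\mathrm{com}}=\{1^{n-k}0^k,1^n\}$.

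The only delicate step is the boundary bookkeeping. We must use $k\le n/2$ to place $b=k$ inside the Party~1 middle band, and use $b\ge2$ to push $t$ strictly above $n-k$ while staying below $n$. We also note that $k\ge2$ guarantees the two listed solutions are distinct.
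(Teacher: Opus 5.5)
Your proof is correct and follows the paper's argument: you parametrize $\mathcal{PS}_2$ by the suffix-zero count $b\neq 1$, reduce Party~1 membership to a condition on the weight $n-b$, and use $k\le n-k$ to isolate $b\in\{0,k\}$. One small correction to your closing remark: the essential use of $k\ge2$ is that $b=k$ is not the excluded value $b=1$, so that $1^{n-k}0^k\in\mathcal{PS}_2$ (the paper says exactly this); distinctness of the two solutions needs only $k\ge1$.
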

	
	\begin{proof}
		By Proposition~\ref{prop:ps2-mpjcg}, every point in $\mathcal{PS}_2$ has all first $n-k$ bits equal to $1$, and the last $k$ bits contain $b\neq 1$ zeros for some $b\in\{0,\dots,k\}$. Hence such a point has Hamming weight $n-b$.
		
		To also belong to $\mathcal{PS}_1$, Proposition~\ref{prop:ps1-mpjcg} implies that either $n-b=n$, which gives $b=0$ and thus $\mathbf{x}=1^n$, or $n-b\in\{k,\ldots,n-k\}$. Since $0\le b\le k$ and $n\ge 2k$, the latter is possible only when $b=k$. Because $k\ge2$, this value is not excluded by the condition $b\neq1$, and it gives $\mathbf{x}=1^{n-k}0^k$.
		
		Therefore,
		\[
		\mathcal{PS}_{\mathrm{com}}=\{1^{n-k}0^k,\,1^n\}.
		\]
	\end{proof}
	
	Corollary~\ref{cor:cps-mpjcg} shows the structural role of the benchmark. Party~2 exposes the full-prefix structure shared by both common solutions, whereas the two common solutions differ only in the constant-size suffix block. This separation between a long prefix structure and a short suffix block is the source of the recombination advantage analyzed below.
	
	\subsection{Runtime Analysis of CPR-NSGA-II on MP-JCG}
	\label{subsec:cpr-mpjcg}
	
	We analyze a bi-population NSGA-II variant, called cross-party recombination NSGA-II (CPR-NSGA-II), on MP-JCG.
	The algorithm maintains one population for each party, preserves party-specific search information through party-wise environmental selection, and allows offspring construction from parents drawn across populations.
	The target event is the discovery of the two common Pareto-optimal solutions
	\[
	\mathcal{PS}_{\mathrm{com}}=\{1^{n-k}0^k,\,1^n\}.
	\]
	An auxiliary common archive is maintained to record evaluated candidates under the multi-party dominance relation. It is not used as an additional parent population in the analysis and is monotone on common Pareto-optimal solutions.
	
	For analytical transparency, we study an analysis-oriented instantiation of CPR-NSGA-II with a constant population-size parameter, rank-based binary tournament without crowding-distance tie-breaking, and uniform truncation within an overflowing last accepted front.
	Each party also receives one uniformly random immigrant per generation. This step makes constant-length suffix building blocks available with constant probability. The asymptotic speedup, however, comes from cross-party recombination rather than from random immigration itself.
	The asymptotic speedup proved below follows from cross-party recombination once complementary prefix and suffix structures are simultaneously available across the two party populations.
	
	Recall that $k\ge2$ by Definition~\ref{def:MP-JCG}. Throughout this subsection, we assume that $k=O(1)$.
	We further assume that the population-size parameter $N$ is a fixed constant satisfying
	\[
	N \ge k+1.
	\]
	We also assume that $p_g=\Omega(1)$ and that $p_c$ is bounded away from both $0$ and $1$.
	Since $k=O(1)$, the population-size parameter is constant.
	
	For the runtime proof, we use duplicate removal in objective space before each party-wise population update. For a candidate set $Q$ and party $i$, let $\operatorname{Rep}_i(Q)$ contain one arbitrary search point for each distinct vector in $F_i(Q)$. The update in Algorithm~\ref{alg:cpr-nsgaii-mpjcg} is written as $\textsc{PopulationUpdate}_i(\operatorname{Rep}_i(Q),N)$. This means that duplicate removal in objective space is applied before the population-update procedure from Sec.~\ref{subsec:nsgaii}. The subscript $i$ indicates that nondominated sorting is computed under the party-wise objective vector $F_i$. In this MP-JCG analysis, an overflowing last accepted front is truncated uniformly, as stated above. This convention removes only objective-vector duplicates and does not change the attainable objective vectors or the common Pareto-optimal solutions. The temporary parent pool $\widehat P_t^i$ still contains the newly sampled immigrant in the generation in which it is sampled.
	
	Parent selection is performed by rank-based binary tournament. Two individuals are sampled uniformly with replacement from the current parent pool, and the one with lower nondomination rank is selected. Ties are broken uniformly at random.
	Environmental selection inserts complete nondominated fronts greedily, and if the next front exceeds the remaining capacity, the remaining slots are filled by uniform random sampling from that front.
	Variation is given by one-point crossover followed by standard bit mutation with mutation rate $1/n$.
	
	For two parents $\mathbf{u},\mathbf{v}\in\{0,1\}^n$, one-point crossover selects a cut position $c$ uniformly from $\{1,\dots,n-1\}$ and returns
	\[
	\textsc{OnePointCrossover}(\mathbf{u},\mathbf{v})
	=
	(\mathbf{u}_1^c,\mathbf{v}_{c+1}^n),
	\]
	that is, the offspring inherits its prefix from the first parent and its suffix from the second parent.
	
	The algorithm is given in Algorithm~\ref{alg:cpr-nsgaii-mpjcg}.
	The common archive is updated by retaining the multi-party nondominated subset of all evaluated candidates seen so far, as described in Algorithm~\ref{alg:update-common-archive-mpjcg}.
	
	\begin{algorithm}[h]
		\caption{CPR-NSGA-II on MP-JCG}
		\label{alg:cpr-nsgaii-mpjcg}
		\begin{algorithmic}[1]
			\REQUIRE Population-size parameter $N$, bi-party objective vectors $F_1,F_2$, inter-party mating probability $p_g\in(0,1)$, crossover probability $p_c\in(0,1)$
			\ENSURE Common archive $A_{\mathrm{com}}$
			\STATE Sample two initial candidate sets $R_0^1$ and $R_0^2$ uniformly at random with $|R_0^1|=|R_0^2|=N$
			\STATE Evaluate all search points in $R_0^1\cup R_0^2$ under both $F_1$ and $F_2$
			\STATE $P_0^1 \leftarrow \textsc{PopulationUpdate}_1(\operatorname{Rep}_1(R_0^1),N)$
			\STATE $P_0^2 \leftarrow \textsc{PopulationUpdate}_2(\operatorname{Rep}_2(R_0^2),N)$
			\STATE $A_{\mathrm{com}} \leftarrow \textsc{UpdateCommonArchive}(R_0^1\cup R_0^2)$
			\FOR{$t=0,1,2,\dots$ until the termination criterion is met}
			\STATE Sample one random immigrant $\mathbf{r}_t^1\sim\mathrm{Unif}(\{0,1\}^n)$ and one random immigrant $\mathbf{r}_t^2\sim\mathrm{Unif}(\{0,1\}^n)$
			\STATE Evaluate $\mathbf{r}_t^1$ and $\mathbf{r}_t^2$ under both $F_1$ and $F_2$
			\STATE $\widehat P_t^1 \leftarrow P_t^1 \cup \{\mathbf{r}_t^1\}$, \qquad $\widehat P_t^2 \leftarrow P_t^2 \cup \{\mathbf{r}_t^2\}$
			\STATE $O_t^1 \leftarrow \emptyset$, \qquad $O_t^2 \leftarrow \emptyset$
			\FOR{each party $i\in\{1,2\}$}
			\FOR{$r=1$ to $N$}
			\STATE Select a primary parent $\mathbf{a}$ from $\widehat P_t^i$ by rank-based binary tournament under party $i$
			\STATE Select a secondary parent $\mathbf{b}$ from $\widehat P_t^{3-i}$ with probability $p_g$, and from $\widehat P_t^i$ otherwise, using rank-based binary tournament under the corresponding party
			\STATE Apply one-point crossover to $(\mathbf{a},\mathbf{b})$ with probability $p_c$; otherwise let the intermediate offspring equal $\mathbf{a}$
			\STATE Obtain $\mathbf{x}'$ by standard bit mutation with mutation rate $1/n$
			\STATE Evaluate $\mathbf{x}'$ under both $F_1$ and $F_2$
			\STATE $O_t^i \leftarrow O_t^i \cup \{\mathbf{x}'\}$
			\ENDFOR
			\ENDFOR
			\STATE $P_{t+1}^1 \leftarrow \textsc{PopulationUpdate}_1\!\left(\operatorname{Rep}_1(\widehat P_t^1\cup O_t^1),N\right)$
			\STATE $P_{t+1}^2 \leftarrow \textsc{PopulationUpdate}_2\!\left(\operatorname{Rep}_2(\widehat P_t^2\cup O_t^2),N\right)$
			\STATE $Q_t \leftarrow A_{\mathrm{com}} \cup \widehat P_t^1 \cup \widehat P_t^2 \cup O_t^1 \cup O_t^2$
			\STATE $A_{\mathrm{com}} \leftarrow \textsc{UpdateCommonArchive}(Q_t)$
			\ENDFOR
			\STATE \textbf{return} $A_{\mathrm{com}}$
		\end{algorithmic}
	\end{algorithm}
	
	\begin{algorithm}[h]
		\caption{\textsc{UpdateCommonArchive}}
		\label{alg:update-common-archive-mpjcg}
		\begin{algorithmic}[1]
			\REQUIRE Candidate set $Q$
			\ENSURE Updated common archive $A_{\mathrm{com}}$
			\STATE Remove duplicate search points from $Q$
			\STATE $A_{\mathrm{com}} \leftarrow \{\mathbf{x}\in Q : \nexists\, \mathbf{y}\in Q \text{ such that } \mathbf{y}\succ_{\mathrm{MP}} \mathbf{x}\}$
			\STATE \textbf{return} $A_{\mathrm{com}}$
		\end{algorithmic}
	\end{algorithm}
	
	Let $A_{\mathrm{com},t}$ denote the common archive after generation $t$.
	By Corollary~\ref{cor:cps-mpjcg},
	\[
	\mathcal{PS}_{\mathrm{com}}=\{1^{n-k}0^k,\,1^n\}.
	\]
	Accordingly, we define
	\[
	T_{\mathrm{com}}
	:=
	\inf\{t\ge 0 : \{1^{n-k}0^k,1^n\}\subseteq A_{\mathrm{com},t}\}.
	\]
	
	The mechanism analyzed below is illustrated in Fig.~\ref{fig:sec3-mpjcg-schematic}. Party~2 first provides a search point whose prefix is close to $1^{n-k}$ but not necessarily complete. Party~1 can supply a constant-length suffix block of type $0^k$ or $1^k$. A one-point crossover at the boundary then combines the Party~2 prefix with the Party~1 suffix, after which only a constant number of prefix corrections remain.
	
	\begin{figure}[ht]
		\centering
		\includegraphics[width=1\linewidth]{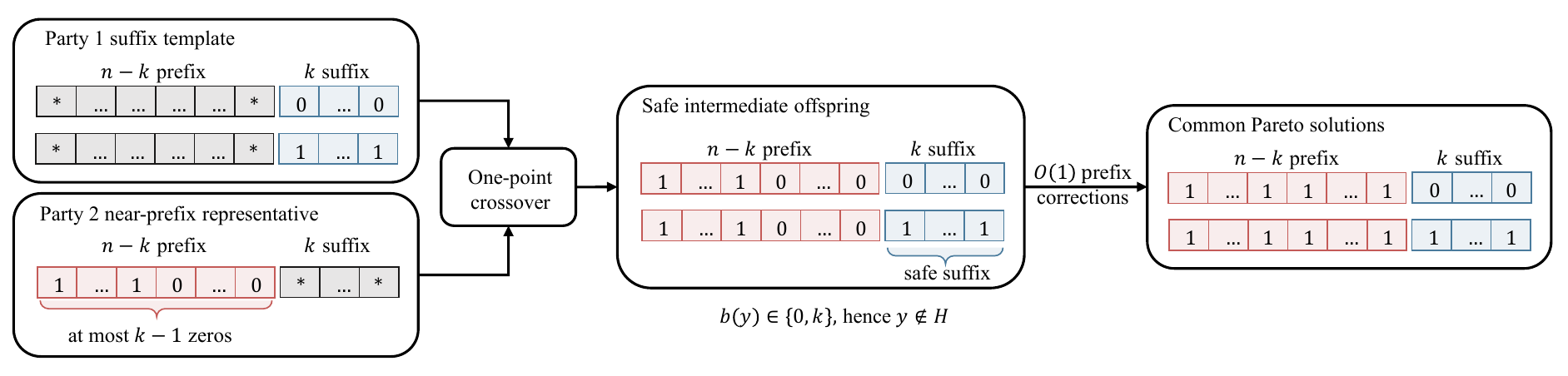}
		\caption{Boundary-crossover mechanism on MP-JCG. Party~2 provides a
			near-prefix representative whose prefix contains at most \(k-1\) remaining
			zero-bits, while Party~1 supplies a suffix template \(0^k\) or \(1^k\).
			A one-point crossover at the boundary \(c=n-k\) combines the Party~2 prefix
			with the Party~1 suffix, producing a safe intermediate offspring with
			\(b\in\{0,k\}\) and hence avoiding the gap layer \(\mathcal{H}\). After at most
			\(k-1\) prefix corrections, the search reaches one of the two common
			Pareto-optimal solutions, \(1^{n-k}0^k\) or \(1^n\).}
		\label{fig:sec3-mpjcg-schematic}
	\end{figure}
	
	For the proof, we track each search point through its prefix and suffix coordinates.
	For every $\mathbf{x}\in\{0,1\}^n$, let
	\[
	i(\mathbf{x}) := |\mathbf{x}_1^{\,n-k}|_1,
	\qquad
	b(\mathbf{x}) := |\mathbf{x}_{n-k+1}^{\,n}|_0.
	\]
	Thus, $i(\mathbf{x})$ is the number of 1-bits in the prefix of length $n-k$, and $b(\mathbf{x})$ is the number of 0-bits in the suffix of length $k$.
	
	We define
	\[
	\mathcal D_{\mathrm{pre}}
	:=
	\left\{
	\mathbf{x}\in\{0,1\}^n :
	\mathbf{x}\notin\mathcal H,\ 
	i(\mathbf{x})\ge n-2k+1
	\right\},
	\]
	and the corresponding first hitting time by
	\[
	T_{\mathrm{pre}}
	:=
	\inf\{t\ge 0 : P_t^2\cap \mathcal D_{\mathrm{pre}} \neq \emptyset\}.
	\]
	Every point in $\mathcal D_{\mathrm{pre}}$ has at most $k-1$ missing one-bits in its prefix, because the prefix length is $n-k$.
	
	For Party~1, define the two suffix-block classes
	\[
	\mathcal T_0
	:=
	\left\{
	\mathbf{x}\in\{0,1\}^n :
	\mathbf{x}_1^k = 1^k,\ 
	\mathbf{x}_{n-k+1}^{\,n}=0^k
	\right\},
	\]
	and
	\[
	\mathcal T_1
	:=
	\left\{
	\mathbf{x}\in\{0,1\}^n :
	\mathbf{x}_1^k = 0^k,\ 
	\mathbf{x}_{n-k+1}^{\,n}=1^k
	\right\}.
	\]
	Both sets are subsets of $\mathcal{PS}_1$.
	A uniformly random Party~1 immigrant belongs to each of $\mathcal T_0$ and $\mathcal T_1$ with probability $2^{-2k}$, and hence belongs to $\mathcal T_0\cup\mathcal T_1$ with probability $2^{1-2k}=\Omega(1)$ because $k=O(1)$.
	
	\begin{lemma}[Party-2 objective-vector representatives]
		\label{lem:party2-representatives}
		Let $S$ be any set containing at least one non-gap point, and consider its Party~2 objective-vector representatives.
		For every non-gap point $\mathbf{x}\notin\mathcal H$,
		\[
		F_2(\mathbf{x})
		=
		\bigl(i(\mathbf{x})+k-b(\mathbf{x}),\; i(\mathbf{x})+b(\mathbf{x})\bigr).
		\]
		Within any fixed suffix class $b$, a larger prefix value $i$ strictly dominates a smaller prefix value under Party~2.
		Consequently, after duplicate removal in objective space, the first nondominated front among non-gap points contains at most one representative for each suffix class $b\in\{0,\ldots,k\}$, and therefore has size at most $k+1$.
		
		Moreover, any non-gap representative with maximum prefix value among $S$ belongs to the first nondominated front.
		Hence, if $N\ge k+1$, Party~2 environmental selection preserves all first-front non-gap representatives, including any current maximum-prefix representative.
	\end{lemma}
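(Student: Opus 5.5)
The plan is a direct computation followed by two short dominance arguments. No earlier structural result is needed beyond Definition~\ref{def:MP-JCG} and the description of $\textsc{PopulationUpdate}_i(\operatorname{Rep}_i(\cdot),N)$.

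\emph{Step 1 (objective formula).} For $\mathbf{x}\notin\mathcal H$, the prefix contributes $i(\mathbf{x})$ ones and the suffix contributes $k-b(\mathbf{x})$ ones. Hence $f_{2,1}(\mathbf{x})=|\mathbf{x}|_1=i(\mathbf{x})+k-b(\mathbf{x})$, and by definition $f_{2,2}(\mathbf{x})=i(\mathbf{x})+b(\mathbf{x})$.

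\emph{Step 2 (fixed suffix class).} Fix $b$. Increasing $i$ increases both coordinates by the same amount, so within a suffix class a larger $i$ strictly dominates a smaller $i$. Therefore two non-gap points in the same class $b$ are mutually nondominated only if they have the same $i$, and hence the same objective vector. After $\operatorname{Rep}_2$, at most one such point remains, so the first front among non-gap points has at most one representative per $b\in\{0,\dots,k\}$, i.e.\ size at most $k+1$.

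\emph{Step 3 (gap points never matter).} For a non-gap $\mathbf{x}$, $F_2(\mathbf{x})\neq(0,0)$: both coordinates vanishing would force $i=b=0$ and $k=0$, contradicting $k\ge2$. Since both coordinates are nonnegative, every non-gap point strictly dominates every gap point, whose vector is $(0,0)$. So, because $S$ contains a non-gap point, the first front of $S$ consists of non-gap points only, and it coincides with the first front among the non-gap points of $S$.

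\emph{Step 4 (maximum-prefix representative).} Let $\mathbf{x}$ be a non-gap point of $S$ with maximal $i$, and suppose some $\mathbf{y}$ dominates it. By Step~3, $\mathbf{y}$ is non-gap. Adding the two weak inequalities $i_y+k-b_y\ge i_x+k-b_x$ and $i_y+b_y\ge i_x+b_x$ gives $i_y\ge i_x$, so $i_y=i_x$ by maximality. The two inequalities then read $b_y\le b_x$ and $b_y\ge b_x$, so $F_2(\mathbf{y})=F_2(\mathbf{x})$, which contradicts strict domination. Hence $\mathbf{x}$ (or its objective-space representative) lies in the first front.

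\emph{Step 5 (preservation).} In Algorithm~\ref{alg:NSGA-II-Update}, applied to $\operatorname{Rep}_2(Q)$, the first front $R_1$ has size at most $k+1\le N$. It is therefore inserted completely by the greedy loop before any truncation occurs, so all first-front representatives survive, including a maximum-prefix one.

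The only mildly delicate point is Step~3, making sure gap points cannot enter or disturb the first front. The sum trick in Step~4 is the key observation; everything else is bookkeeping.
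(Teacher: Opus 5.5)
Your proof is correct and follows essentially the same route as the paper. Like the paper, you compute $F_2$, use strict monotonicity in $i$ within a fixed suffix class, show that a maximum-prefix non-gap point can only be weakly dominated by a point with the same $(i,b)$, and then invoke $N\ge k+1$. Your Step~3, which rules out gap points both as dominators and as first-front members, makes explicit something the paper's proof leaves implicit, since the paper only considers non-gap dominators. Your sum trick is just a reordering of the paper's derivation of $b(\mathbf{y})=b(\mathbf{x})$ from $i(\mathbf{y})\le J$.
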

	
	\begin{proof}
		For every non-gap point $\mathbf{x}\notin\mathcal H$, we have
		\[
		f_{2,1}(\mathbf{x}) = i(\mathbf{x}) + k - b(\mathbf{x}),
		\qquad
		f_{2,2}(\mathbf{x}) = i(\mathbf{x}) + b(\mathbf{x}).
		\]
		For a fixed suffix class $b$, both objectives increase strictly with $i$.
		Therefore, after duplicate removal in objective space, at most the largest-$i$ representative of each suffix class can appear in the first nondominated front.
		Since $b\in\{0,\ldots,k\}$, the non-gap part of the first front has size at most $k+1$.
		
		It remains to show the maximum-prefix claim.
		Let $\mathbf{x}$ be a non-gap representative with maximum prefix value $i(\mathbf{x})=J$.
		Suppose that another non-gap point $\mathbf{y}$ dominates $\mathbf{x}$ under Party~2.
		Then
		\[
		i(\mathbf{y})+k-b(\mathbf{y}) \ge J+k-b(\mathbf{x}),
		\qquad
		i(\mathbf{y})+b(\mathbf{y}) \ge J+b(\mathbf{x}).
		\]
		Since $i(\mathbf{y})\le J$, these inequalities imply respectively
		\[
		b(\mathbf{y})\le b(\mathbf{x})
		\quad\text{and}\quad
		b(\mathbf{y})\ge b(\mathbf{x}).
		\]
		Hence $b(\mathbf{y})=b(\mathbf{x})$, and then both inequalities force $i(\mathbf{y})=J$.
		Thus $\mathbf{y}$ cannot strictly dominate $\mathbf{x}$.
		Therefore $\mathbf{x}$ is in the first nondominated front.
		Since this front has size at most $k+1$, it is fully retained when $N\ge k+1$.
	\end{proof}
	
	\begin{lemma}[Reaching $\mathcal D_{\mathrm{pre}}$]
		\label{lem:prefix-donor}
		We have
		\[
		\mathbb{E}[T_{\mathrm{pre}}]=O(n\log n).
		\]
	\end{lemma}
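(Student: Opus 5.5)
We track the maximum prefix value among non-gap points of Party~2,
\[
J_t := \max\{\, i(\mathbf{x}) : \mathbf{x}\in P_t^2\setminus\mathcal H \,\},
\]
and run a fitness-level argument on $J_t$ up to the threshold $n-2k+1$, which defines $\mathcal D_{\mathrm{pre}}$. Before that threshold is reached, no point of $P_t^2$ or of the candidate pool has $i=n-k$, so no candidate lies in $\mathcal H$. In particular, every point of $P_t^2$ is non-gap (the initial population and all immigrants are non-gap as well).

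\textbf{Step 1 (monotonicity).} By Lemma~\ref{lem:party2-representatives}, a non-gap representative with maximum prefix value lies in the first front. With $N\ge k+1$ that front is fully retained, so $J_{t+1}\ge J_t$. Duplicate removal in objective space only changes which search point represents a given vector, and all points sharing that vector have the same $i$. Hence $J_t$ is non-decreasing.

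\textbf{Step 2 (improvement probability per level).} Suppose $J_t=j<n-2k+1$, and fix a maximizer $\mathbf{x}^\ast\in P_t^2$. Consider one of the $N$ offspring produced for Party~2. The probability that it improves $J$ is bounded below by the product of three events:
\begin{itemize}
\item \emph{Tournament.} Both tournament samples are $\mathbf{x}^\ast$, which has probability at least $1/(N+1)^2=\Omega(1)$ since $N$ is constant. Then $\mathbf{x}^\ast$ is the primary parent.
\item \emph{No crossover.} Crossover is not applied, with probability $1-p_c=\Omega(1)$. The intermediate offspring then equals $\mathbf{x}^\ast$ regardless of the secondary parent.
\item \emph{Mutation.} Standard bit mutation flips exactly one of the $n-k-j$ prefix zeros and nothing else, with probability at least $(n-k-j)/(en)$.
\end{itemize}
The resulting offspring has prefix value $j+1\le n-2k+1<n-k$, so it is non-gap, and $J_{t+1}\ge j+1$ by Step~1. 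Hence each generation improves with probability at least $c\,(n-k-j)/n$ for a constant $c>0$.

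\textbf{Step 3 (summing levels).} The expected number of generations is then at most
\[
\sum_{j=0}^{n-2k}\frac{n}{c\,(n-k-j)}=O(n\log n).
\]
Since $N$ is constant, this is also $O(n\log n)$ in expected generations, and $T_{\mathrm{pre}}$ is measured in generations. If $T_{\mathrm{pre}}$ is instead read in fitness evaluations, each generation costs $2N+2=O(1)$ evaluations, so the bound is unchanged.

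\textbf{Main obstacle.} The delicate point is Step~1, the elitism claim. One must rule out two ways the maximum-prefix point could be lost:
\begin{itemize}
\item \emph{Gap points dominating it.} A gap point has Party~2 vector $(0,0)$ and so cannot dominate anything non-gap. Moreover, before the threshold no gap points are even present.
\item \emph{Truncation of the first front.} The uniform truncation of an overflowing front could drop $\mathbf{x}^\ast$ only if the first front exceeded $N$. After duplicate removal in objective space, however, the first front has at most $k+1\le N$ members (one per suffix class $b$), so it is never truncated.
\end{itemize}
I would verify carefully that the temporary immigrant in $\widehat P_t^2$ cannot disturb this. It can only enlarge the candidate pool, and the argument above applies to any candidate pool.
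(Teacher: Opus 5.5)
Your fitness-level argument on $J_t$ matches the paper's proof: monotonicity via Lemma~\ref{lem:party2-representatives}, the maximizer drawn twice in the tournament, crossover skipped, a single prefix-bit flip, and the harmonic sum. Steps~1--3 are sound, including your check that non-gap points sharing a Party~2 vector share $i$. The flaw is in your setup. You claim that before the threshold no point with $i=n-k$, and hence no gap point, ever appears, and that the initial population and all immigrants are non-gap. This is false. $T_{\mathrm{pre}}$ constrains only the \emph{non-gap} points of $P_t^2$, because $\mathcal D_{\mathrm{pre}}$ excludes $\mathcal H$ by definition, so gap points can be present without triggering it. They arise in several ways:
\begin{itemize}
\item A uniformly random initial point or immigrant lies in $\mathcal H$ with probability $k/2^n$.
\item An offspring can land in $\mathcal H$, for example through cross-party one-point crossover or mutation.
\item Once created, a gap point survives environmental selection in a lower front whenever the pool has at most $N$ distinct Party~2 vectors.
\end{itemize}

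As a result, $J_t$ need not be defined at $t=0$. With probability $(k/2^n)^N$ all of $R_0^2$ lies in $\mathcal H$, so $P_0^2$ is a single gap representative and Step~2 has no $\mathbf{x}^\ast$ to select. The paper covers this case with a preliminary waiting phase $\tau_{\mathrm{ng}}$. The Party~2 immigrant is non-gap with probability $1-k/2^n$, so a non-gap point enters $\widehat P_t^2$ within $O(1)$ expected generations. From then on a non-gap point is always retained: every non-gap point dominates the gap vector $(0,0)$, so the first front consists of at most $k+1\le N$ non-gap representatives and is kept in full. Your remark that gap points cannot dominate non-gap points is exactly this retention argument. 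To repair the proof, drop the sentence that no gap points are present and add the $O(1)$ waiting step; the $O(n\log n)$ bound is unchanged.
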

	
	\begin{proof}
		Let $\tau_{\mathrm{ng}}$ denote the first generation in which the Party~2 immigrant is a non-gap point.
		Since a uniformly random bit string belongs to the gap set with probability $|\mathcal H|/2^n=k/2^n$, we have
		\[
		\Pr(\mathbf{r}_t^2\notin\mathcal H)=1-\frac{k}{2^n}=1-o(1),
		\]
		and therefore
		\[
		\mathbb{E}[\tau_{\mathrm{ng}}]=O(1).
		\]
		In generation $\tau_{\mathrm{ng}}$, the immigrant $\mathbf{r}_{\tau_{\mathrm{ng}}}^2$ is inserted into the augmented pool $\widehat P_{\tau_{\mathrm{ng}}}^2$.
		Since every non-gap point has at least one strictly positive Party~2 objective value whereas every gap point has objective vector $(0,0)$, each non-gap point dominates every gap point.
		Hence at least one non-gap point survives the subsequent environmental selection.
		From generation $\tau_{\mathrm{ng}}+1$ onward, Party~2 therefore always contains at least one non-gap point.
		
		For $t\ge \tau_{\mathrm{ng}}+1$ with $t<T_{\mathrm{pre}}$, define
		\[
		J_t
		:=
		\max\{i(\mathbf{x}) : \mathbf{x}\in P_t^2\setminus\mathcal H\},
		\]
		and let $\mathbf{x}_t\in P_t^2\setminus\mathcal H$ satisfy $i(\mathbf{x}_t)=J_t$.
		By Lemma~\ref{lem:party2-representatives}, $\mathbf{x}_t$ is a first-front non-gap representative and the non-gap part of the first front has size at most $k+1$.
		Since $N\ge k+1$, $\mathbf{x}_t$ survives environmental selection unless another non-gap representative with a larger prefix value is generated.
		Therefore $(J_t)_{t\ge \tau_{\mathrm{ng}}+1}$ is nondecreasing before $T_{\mathrm{pre}}$.
		
		Now fix a generation with $J_t=j<n-2k+1$.
		Then $\mathbf{x}_t$ has exactly $n-k-j$ zero-bits in its prefix.
		Consider one fixed offspring-generation step in Party~2.
		Since the augmented parent pool has size at most $N+1$, the probability that the primary-parent tournament samples $\mathbf{x}_t$ twice, and therefore selects $\mathbf{x}_t$, is at least $(N+1)^{-2}$.
		Conditioned on this, crossover is skipped with probability $1-p_c$, so the intermediate offspring equals $\mathbf{x}_t$.
		From there, mutation flips exactly one of the $n-k-j$ missing prefix zero-bits and no other bit with probability
		\[
		\frac{n-k-j}{n}\left(1-\frac{1}{n}\right)^{n-1}
		=
		\Omega\!\left(\frac{n-k-j}{n}\right).
		\]
		Since $j+1\le n-2k+1<n-k$, the offspring is still outside the gap set.
		Thus, in one fixed offspring-generation step of Party~2, the probability of creating a non-gap offspring $\mathbf{y}$ with
		\[
		i(\mathbf{y})=j+1,
		\qquad
		b(\mathbf{y})=b(\mathbf{x}_t)
		\]
		is at least
		\[
		c \cdot \frac{n-k-j}{n}
		\]
		for a constant $c>0$ depending only on $N$ and $p_c$.
		
		The offspring $\mathbf{y}$ belongs to the same suffix class as $\mathbf{x}_t$ and has a strictly larger prefix value.
		By Lemma~\ref{lem:party2-representatives}, either $\mathbf{y}$ becomes a first-front representative and is preserved, or an even larger-prefix non-gap representative is preserved.
		Therefore
		\[
		\Pr(J_{t+1}\ge j+1 \mid J_t=j)
		\ge
		c \cdot \frac{n-k-j}{n}.
		\]
		The expected waiting time to increase the current best prefix value from $j$ to at least $j+1$ is therefore
		\[
		O\!\left(\frac{n}{n-k-j}\right).
		\]
		Summing over all levels until $n-2k+1$ yields
		\[
		\mathbb{E}[T_{\mathrm{pre}}-\tau_{\mathrm{ng}}]
		=
		O\!\left(
		\sum_{j=0}^{n-2k}
		\frac{n}{n-k-j}
		\right)
		=
		O\!\left(
		n\sum_{m=k}^{n-k}\frac{1}{m}
		\right)
		=
		O(n\log n).
		\]
		Together with $\mathbb{E}[\tau_{\mathrm{ng}}]=O(1)$, this proves the claim.
	\end{proof}
	
	\begin{lemma}[Completion after boundary crossover]
		\label{lem:template-completion}
		Suppose that $T_{\mathrm{pre}}<\infty$.
		Then, starting from generation $T_{\mathrm{pre}}$, the expected additional number of generations until at least one element of
		\[
		\mathcal{PS}_{\mathrm{com}}=\{1^{n-k}0^k,\,1^n\}
		\]
		has been evaluated is $O(n)$.
	\end{lemma}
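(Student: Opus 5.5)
Starting at generation $T_{\mathrm{pre}}$, I would first argue that Party~2 keeps a point of $\mathcal D_{\mathrm{pre}}$ from then on. By Lemma~\ref{lem:party2-representatives}, a non-gap point of maximum prefix value lies in the first front, and that front is fully retained because $N\ge k+1$. So the maximum prefix value $J_t$ never decreases, and for all $t\ge T_{\mathrm{pre}}$ there is some $\mathbf{x}_t\in P_t^2\setminus\mathcal H$ with $i(\mathbf{x}_t)\ge n-2k+1$. The proof then proceeds in two phases.

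\textbf{Phase 1: a boundary crossover creates a safe intermediate offspring.} In each generation the Party~2 immigrant is irrelevant, but the Party~1 immigrant $\mathbf{r}_t^1$ lies in $\mathcal T_0\cup\mathcal T_1$ with probability $2^{1-2k}=\Omega(1)$. In such a generation, fix one Party~2 offspring step, and consider the following event.
\begin{itemize}
\item The primary tournament samples $\mathbf{x}_t$ twice, with probability at least $(N+1)^{-2}$.
\item The secondary parent is drawn from $\widehat P_t^1$, with probability $p_g$.
\item That tournament samples $\mathbf{r}_t^1$ twice, with probability at least $(N+1)^{-2}$.
\item Crossover is applied, with probability $p_c$.
\item The cut point is $c=n-k$, with probability $1/(n-1)$.
\item Mutation flips nothing, with probability at least $(1-1/n)^n\ge 1/4$.
\end{itemize}
This event costs a factor of $\Theta(1/n)$, coming from the cut point; every other factor is constant. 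The resulting offspring is $\mathbf{z}=(\mathbf{x}_t)_1^{n-k}\,\mathbf{s}$ with $\mathbf{s}\in\{0^k,1^k\}$. So $b(\mathbf{z})\in\{0,k\}$, which puts $\mathbf{z}$ outside $\mathcal H$, and $i(\mathbf{z})=i(\mathbf{x}_t)\ge n-2k+1$.

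Each generation therefore succeeds with probability $\Omega(1/n)$, and we wait $O(n)$ expected generations. The Party~2 primary parent uses tournament ranks under $F_2$ and the secondary parent under $F_1$. Since we only need the event of sampling the same individual twice, rank ties do not matter.

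\textbf{Phase 2: prefix corrections within a fixed suffix class.} Once $\mathbf{z}$ has been evaluated, we need it, or something at least as good in its suffix class, to persist in Party~2. Here $\mathbf{z}$ is the representative of suffix class $b\in\{0,k\}$ with prefix value $i(\mathbf{z})$. It is not necessarily the global maximum-prefix point, so Lemma~\ref{lem:party2-representatives} only tells us it is not dominated within its class.

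This is the obstacle I expect. Points from other classes could dominate $\mathbf{z}$ under $F_2$ and evict it. I would handle this in one of two ways.
\begin{itemize}
\item \emph{Cover both classes directly.} Choose the Phase 1 event applied to the current global maximizer $\mathbf{x}_t$. Then $i(\mathbf{z})=J_t$, so $\mathbf{z}$ is itself a maximum-prefix point and is retained.
\item \emph{Bound by maximum prefix.} Alternatively, note that some maximum-prefix point with $J\ge i(\mathbf{z})$ always survives. Rerun the crossover argument whenever $\mathbf{z}$ is lost. Since the retained $\mathbf{z}$ in class $b$ has monotone prefix value, this costs only constant factors.
\end{itemize}

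With $\mathbf{z}$ retained, each generation can select $\mathbf{z}$ twice in a tournament, skip crossover (probability $1-p_c$), and flip exactly one missing prefix bit. This happens with probability $\Omega(m/n)$, where $m\le k-1$ is the number of missing prefix bits. The suffix stays in $\{0^k,1^k\}$, so no offspring enters $\mathcal H$. The class-$b$ representative's prefix value strictly increases and is preserved by the same argument as before.

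There are at most $k-1$ such levels, each taking $O(n/m)$ expected generations, for a total of $O(kn)=O(n)$. Reaching $i=n-k$ yields $1^{n-k}0^k$ or $1^n$. That point is evaluated and placed in $A_{\mathrm{com}}$ since it is common Pareto-optimal. Adding the two phases gives $O(n)$ expected generations.
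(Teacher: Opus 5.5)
Your overall plan matches the paper's proof. You use a boundary crossover at cut $n-k$ between the maximum-prefix Party~2 point and a Party~1 immigrant from $\mathcal T_0\cup\mathcal T_1$, which succeeds with probability $\Omega(1/n)$ per generation. This is followed by at most $k-1$ one-bit prefix corrections on the safe-suffix offspring, and you restart when that offspring is evicted. The step that does not go through as written is the eviction accounting.

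Your first option is not sufficient on its own. That $\mathbf z$ is a maximum-prefix point when created does not keep it retained. In the same or any later generation, an offspring with prefix value $J_t+1<n-k$ in a suffix class adjacent to $b(\mathbf z)$ strictly dominates $\mathbf z$ under $F_2$, and the new maximum-prefix point then has an unsafe suffix. Your second option is the right idea, but ``the retained $\mathbf z$ has monotone prefix value'' does not bound the number of reruns. If $\mathbf z$ could be evicted while $J_t$ stays fixed, each rerun would only produce another safe point at the same level, and the restarts could repeat indefinitely.

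The missing fact is that a non-gap point can be strictly $F_2$-dominated only by a non-gap point with strictly larger prefix value. Adding $i(\mathbf y)+k-b(\mathbf y)\ge i(\mathbf x)+k-b(\mathbf x)$ and $i(\mathbf y)+b(\mathbf y)\ge i(\mathbf x)+b(\mathbf x)$ gives $i(\mathbf y)\ge i(\mathbf x)$. Equality then forces $b(\mathbf y)=b(\mathbf x)$, so the two vectors coincide. Hence every loss of the safe-suffix representative coincides with a strict increase of $J_t$, which can happen at most $k-1$ times after $T_{\mathrm{pre}}$. This is exactly the charging step the paper states explicitly.

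A clean way to package the argument is the potential $\Phi_t=2(n-k-J_t)+\sigma_t$, where $\sigma_t=1$ if $P_t^2$ contains no point with $i=J_t$ and $b\in\{0,k\}$, and $\sigma_t=0$ otherwise. It satisfies $\Phi_t\le 2k-1$. It never increases: an increase of $J_t$ outweighs the indicator, and if $J_t$ is unchanged the safe top-level point is undominated and survives. It drops with probability $\Omega(1/n)$ per generation, by your boundary crossover if $\sigma_t=1$ and by a one-bit prefix correction if $\sigma_t=0$. Finally, $\Phi_t=0$ means $1^{n-k}0^k$ or $1^n$ has been evaluated. This gives $O(kn)=O(n)$ expected generations.
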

	
	\begin{proof}
		Fix a generation $t\ge T_{\mathrm{pre}}$ before any common solution has been evaluated.
		Let
		\[
		J_t
		:=
		\max\{i(\mathbf{x}) : \mathbf{x}\in P_t^2\setminus\mathcal H\}.
		\]
		By the definition of $T_{\mathrm{pre}}$, we have $J_t\ge n-2k+1$, and hence at most $k-1=O(1)$ prefix zero-bits remain to be corrected.
		
		Consider a generation in which Party~2 contains a maximum-prefix non-gap representative $\mathbf{x}_t$ with $i(\mathbf{x}_t)=J_t$.
		By Lemma~\ref{lem:party2-representatives}, such a representative is in the first nondominated front and is preserved unless a representative with a strictly larger prefix value is generated.
		
		Consider the following successful boundary-crossover event in one offspring-generation step of Party~2.
		\begin{enumerate}
			\item The primary-parent tournament selects $\mathbf{x}_t$ from $\widehat P_t^2$.
			\item Inter-party mating is selected.
			\item The secondary-parent tournament selects the current Party~1 immigrant from $\widehat P_t^1$.
			\item The selected immigrant belongs to $\mathcal T_0\cup\mathcal T_1$.
			\item One-point crossover uses the cut position $n-k$.
			\item Mutation flips no bit.
		\end{enumerate}
		The joint probability of these events is
		\[
		\Omega(1)\cdot p_g\cdot \Omega(1)\cdot 2^{1-2k}\cdot p_c\cdot \frac{1}{n-1}\cdot
		\left(1-\frac{1}{n}\right)^n
		=
		\Omega\!\left(\frac{1}{n}\right),
		\]
		where the two $\Omega(1)$ factors come from binary tournaments over parent pools of size at most $N+1$.
		
		Conditioned on this event, the offspring inherits the Party~2 prefix of $\mathbf{x}_t$ and a suffix $0^k$ or $1^k$ from the Party~1 immigrant.
		If the inherited suffix is $0^k$, then the offspring $\mathbf{y}$ satisfies
		\[
		b(\mathbf{y})=k,
		\qquad
		i(\mathbf{y})=J_t.
		\]
		If the inherited suffix is $1^k$, then
		\[
		b(\mathbf{y})=0,
		\qquad
		i(\mathbf{y})=J_t.
		\]
		In either case, $\mathbf{y}$ is outside the gap set.
		
		If $J_t=n-k$, then $\mathbf{y}$ is either $1^{n-k}0^k$ or $1^n$, and a common Pareto-optimal solution has been evaluated.
		Otherwise, $\mathbf{y}$ has a safe suffix $b(\mathbf{y})\in\{0,k\}$ and at most $k-1$ missing prefix one-bits.
		
		We next argue that the remaining completion takes expected $O(n)$ generations.
		Suppose that a representative $\mathbf{z}$ with $b(\mathbf{z})\in\{0,k\}$ and $i(\mathbf{z})=j<n-k$ is currently available.
		In one fixed offspring-generation step, selecting $\mathbf{z}$ as the primary parent, skipping crossover, and flipping exactly one missing prefix zero-bit and no other bit occurs with probability
		\[
		\Omega\!\left(\frac{n-k-j}{n}\right).
		\]
		The resulting offspring has the same suffix and prefix value $j+1$.
		If this offspring is retained, the safe-suffix prefix value increases by one.
		If the current representative with $b\in\{0,k\}$ is not retained before such an improvement occurs, then it must have been dominated by a non-gap representative with a strictly larger prefix value. Indeed, for \(b=0\) or \(b=k\), a non-gap point with prefix value at most \(j\) cannot dominate a point with the same prefix value \(j\) and the safe suffix \(b\).
		Thus, losing such a representative can only be charged to an increase in the maximum prefix value.
		
		Since after $T_{\mathrm{pre}}$ there are at most $k-1=O(1)$ missing prefix one-bits, at most $O(1)$ such prefix-level increases are needed.
		At each remaining prefix level, either another successful boundary crossover is sampled or a one-bit prefix improvement is obtained within expected $O(n)$ generations.
		Therefore a common Pareto-optimal solution is evaluated within expected $O(n)$ additional generations.
	\end{proof}
	
	\begin{theorem}[CPR on MP-JCG]
		\label{thm:cpr-mpjcg}
		Assume $k=O(1)$, $N\ge k+1$, $p_g=\Omega(1)$, and that $p_c$ is bounded away from both $0$ and $1$.
		For the analytical CPR-NSGA-II variant defined in this subsection, the expected number of generations until both common Pareto-optimal solutions of MP-JCG have been discovered satisfies
		\[
		\mathbb{E}[T_{\mathrm{com}}]=O(n\log n).
		\]
	\end{theorem}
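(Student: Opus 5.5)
The proof will decompose $T_{\mathrm{com}}$ into three phases and bound each in expected generations. Phase~1 runs until Party~2 contains a point of $\mathcal D_{\mathrm{pre}}$. Lemma~\ref{lem:prefix-donor} bounds it by $O(n\log n)$. Phase~2 runs until the first common Pareto-optimal solution is evaluated. Lemma~\ref{lem:template-completion} gives an additional $O(n)$. Phase~3 runs until the second common solution is also evaluated. The total then follows by linearity of expectation.

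Before the phases, I will record that the common archive is monotone on $\mathcal{PS}_{\mathrm{com}}$. A common Pareto-optimal point is Pareto-optimal for each party, so nothing in $\{0,1\}^n$ multi-party dominates it. Hence \textsc{UpdateCommonArchive} never removes it once it has been evaluated. So $T_{\mathrm{com}}$ is simply the first generation by which both $1^{n-k}0^k$ and $1^n$ have been evaluated at least once.

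For Phase~3, I will use the suffix templates from the immigrants, rather than relying on the Party~2 population keeping the found solution. After Phase~2, Party~2 contains a non-gap point $\mathbf{x}_t$ with maximum prefix value $J_t=n-k$. The reason is that either common solution has $i=n-k$. It was evaluated either as a Party~2 offspring, in which case it enters $\operatorname{Rep}_2(\widehat P_t^2\cup O_t^2)$, or elsewhere. In the latter case, I will simply rerun the argument of Lemma~\ref{lem:template-completion}, which only needs $J_t\ge n-2k+1$ and costs $O(n)$ more generations.

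By Lemma~\ref{lem:party2-representatives}, the maximum-prefix representative $1^{n-k}s$ stays in the first front for all later generations. Here $s$ is a suffix with $b\neq1$. Each generation I then repeat the boundary-crossover event from Lemma~\ref{lem:template-completion}, now requiring the Party~1 immigrant to lie in the specific class $\mathcal T_0$ or $\mathcal T_1$ that provides the missing suffix. This class has probability $2^{-2k}=\Omega(1)$. With cut $n-k$ and no mutation, the offspring is exactly the missing common solution. The success probability is $\Omega(1/n)$ per offspring step, so the expected waiting time is $O(n)$ generations.

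Summing gives $O(n\log n)+O(n)+O(n)=O(n\log n)$ generations. I will also note the fitness-evaluation conversion: $B_{\mathrm{gen}}=2N+2=O(1)$, so the fitness-evaluation bound has the same order.

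The main obstacle is Phase~3's invariant that a full-prefix representative survives. Its suffix class may be anything with $b\neq1$, but the crossover only needs the prefix $1^{n-k}$. Lemma~\ref{lem:party2-representatives} guarantees that the maximum-prefix representative persists, since no larger prefix exists once $J_t=n-k$. A second delicate point is justifying that Lemma~\ref{lem:template-completion}'s bound applies from an arbitrary (stopping-time) starting state. I would handle this by noting that all per-generation success probabilities are uniform lower bounds, valid conditional on any history satisfying the stated invariant.
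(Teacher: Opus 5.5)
Your proposal is correct and follows essentially the same three-phase route as the paper. The phases are Lemma~\ref{lem:prefix-donor}, then Lemma~\ref{lem:template-completion}, then a second boundary crossover that combines a full-prefix Party~2 representative with a Party~1 immigrant from the complementary template class, closed by the archive-monotonicity argument. Your extra step of re-running the completion argument, in case the first common solution was evaluated outside Party~2's candidate pool, tightens a point the paper skips when it simply asserts that the discovered solution remains in the Party~2 first front.
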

	
	\begin{proof}
		By Lemma~\ref{lem:prefix-donor}, Party~2 reaches $\mathcal D_{\mathrm{pre}}$ in expected $O(n\log n)$ generations.
		Then Lemma~\ref{lem:template-completion} yields, from that point onward, expected $O(n)$ additional generations until one of the two common solutions is evaluated.
		
		Suppose first that the discovered common solution is $1^{n-k}0^k$.
		This point satisfies $i=n-k$ and $b=k$.
		It is a Party~2 Pareto-optimal solution and therefore remains represented in the Party~2 first front under the party-wise population update.
		Using this full-prefix representative as the Party~2 primary parent and a Party~1 immigrant from $\mathcal T_1$, the same boundary-crossover argument as in Lemma~\ref{lem:template-completion} creates $1^n$ in expected $O(n)$ additional generations.
		
		The case in which the first discovered common solution is $1^n$ is symmetric.
		Then $1^n$ remains represented in the Party~2 first front, and a Party~1 immigrant from $\mathcal T_0$ yields $1^{n-k}0^k$ in expected $O(n)$ additional generations.
		
		Finally, once a common solution has been evaluated, it is inserted into the archive.
		The archive update is monotone on common Pareto-optimal solutions because if $\mathbf{x}\in\mathcal{PS}_{\mathrm{com}}$, then no search point can multi-party dominate $\mathbf{x}$; otherwise, it would dominate $\mathbf{x}$ for at least one party, contradicting $\mathbf{x}\in\mathcal{PS}_m$ for every party $m$.
		Hence both solutions remain in $A_{\mathrm{com}}$ once discovered.
		
		Combining the three parts gives
		\[
		\mathbb{E}[T_{\mathrm{com}}]
		=
		O(n\log n)+O(n)+O(n)
		=
		O(n\log n).
		\]
	\end{proof}
	
	\begin{corollary}[FE complexity]
		\label{cor:cpr-mpjcg-fe}
		Under the assumptions of Theorem~\ref{thm:cpr-mpjcg}, the expected fitness-evaluation complexity until
		\[
		\{1^{n-k}0^k,1^n\}\subseteq A_{\mathrm{com}}
		\]
		is also
		\[
		O(n\log n).
		\]
	\end{corollary}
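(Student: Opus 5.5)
The plan is to apply the generation-to-FE conversion from the complexity convention in Sec.~\ref{sec:preliminaries}, $\mathbb{E}[T_{\mathrm{FE}}]=B_{\mathrm{init}}+B_{\mathrm{gen}}\,\mathbb{E}[G]$, with $G=T_{\mathrm{com}}$ and $\mathbb{E}[T_{\mathrm{com}}]=O(n\log n)$ taken from Theorem~\ref{thm:cpr-mpjcg}. All that remains is to bound $B_{\mathrm{init}}$ and $B_{\mathrm{gen}}$ for Algorithm~\ref{alg:cpr-nsgaii-mpjcg} by counting the evaluations it performs.

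For initialization, the algorithm samples $R_0^1$ and $R_0^2$, each of size $N$, and evaluates every point under both $F_1$ and $F_2$. Counting one fitness evaluation per evaluated search point (or per party-wise evaluation, which changes only a factor of $2$), this gives $B_{\mathrm{init}}\le 4N$. For each generation, the algorithm evaluates the two immigrants $\mathbf{r}_t^1,\mathbf{r}_t^2$ and the $2N$ offspring in $O_t^1\cup O_t^2$. Hence $B_{\mathrm{gen}}\le 2(2N+2)=O(N)$. The remaining operations perform no new evaluations: $\operatorname{Rep}_i$, \textsc{PopulationUpdate}$_i$ and \textsc{UpdateCommonArchive} only reuse objective values that have already been computed.

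Since $N$ is a fixed constant (the assumptions give $N\ge k+1$ with $k=O(1)$ and $N$ constant), both $B_{\mathrm{init}}$ and $B_{\mathrm{gen}}$ are $O(1)$. This yields $\mathbb{E}[T_{\mathrm{FE}}]\le 4N+O(N)\cdot O(n\log n)=O(n\log n)$.

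One point needs care. The event $\{1^{n-k}0^k,1^n\}\subseteq A_{\mathrm{com}}$ is reached at the end of generation $T_{\mathrm{com}}$. If $T_{\mathrm{com}}=0$, it is reached already at initialization, so at most $T_{\mathrm{com}}+1$ rounds of evaluations are charged; this affects only an additive constant. The step most open to objection is the archive bookkeeping. The archive has to record the solutions, not merely evaluate them, and the FE count must not depend on archive size. Theorem~\ref{thm:cpr-mpjcg} already shows the archive is monotone on common Pareto-optimal solutions, and the archive update performs no evaluations. With these two facts the conversion is purely multiplicative by a constant.
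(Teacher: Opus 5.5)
Your proposal is correct and follows the paper's argument. You count the $2N$ initial evaluations and the $2N+2$ per-generation evaluations (up to your factor of $2$ for party-wise counting), note that these are $O(1)$ because $N$ is constant, and multiply by the $O(n\log n)$ generation bound from Theorem~\ref{thm:cpr-mpjcg}; your remarks on the off-by-one generation count and on the archive performing no evaluations are harmless refinements that change only additive constants.
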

	
	\begin{proof}
		Apart from initialization, each generation newly evaluates $2N+2$ candidates: two immigrants and $N$ offspring for each of the two parties.
		Since $N=O(1)$, the number of candidate evaluations per generation is $\Theta(1)$.
		The initialization cost is $2N=O(1)$ and is dominated by the runtime bound.
		Therefore the fitness-evaluation complexity is asymptotically equivalent to the generation complexity established in Theorem~\ref{thm:cpr-mpjcg}.
	\end{proof}
	
	\subsection{Runtime Analysis of a Payoff-Guided Baseline}
	\label{subsec:payoff-mpjcg}
	
	Sun et al.~\cite{sun2025runtime} analyzed a single-population mutation-based framework for multi-party multi-objective optimization.
	For their artificial benchmark BPAOAZ, they introduced a random-party process and a payoff-guided process.
	The latter serves as a stronger analytical comparator under more informed party selection.
	We adopt the same analytical role on MP-JCG.
	
	Since the original payoff in~\cite{sun2025runtime} is defined for the state-space structure of BPAOAZ, we instantiate a problem-specific payoff tailored to MP-JCG.
	The resulting process is used as a mutation-based theoretical baseline for comparison with CPR-NSGA-II in Section~\ref{subsec:cpr-mpjcg}.
	Its purpose is to quantify what can still be achieved by payoff-guided party interaction without cross-party recombination.
	
	Among the two common solutions of MP-JCG,
	\[
	\mathcal{PS}_{\mathrm{com}}=\{1^{n-k}0^k,\,1^n\},
	\]
	the point $1^{n-k}0^k$ can be reached without crossing the gap.
	By contrast, the discovery of $1^n$ requires bypassing the rejected gap layer induced by
	\[
	\mathcal H
	=
	\left\{
	\mathbf{x}\in\{0,1\}^n :
	|\mathbf{x}_1^{\,n-k}|_1=n-k,\ 
	|\mathbf{x}_{n-k+1}^{\,n}|_0=1
	\right\}.
	\]
	Accordingly, the runtime bottleneck of the payoff-guided baseline is captured by the first hitting time of $1^n$.
	
	The payoff-guided process serves as a strong mutation-only comparator---it uses problem-specific directional information to reward prefix completion and to penalize the rejected suffix layer $b=1$. Thus, any remaining bottleneck can be attributed to the absence of recombination rather than to the absence of local guidance.
	
	\begin{definition}[Structural potential and payoff on MP-JCG]
		\label{def:payoff-mpjcg}
		For each $\mathbf{x}\in\{0,1\}^n$, define
		\[
		u(\mathbf{x}) := (n-k)-|\mathbf{x}_1^{\,n-k}|_1,
		\qquad
		b(\mathbf{x}) := |\mathbf{x}_{n-k+1}^{\,n}|_0.
		\]
		Thus, $u(\mathbf{x})$ is the number of missing 1-bits in the prefix, and $b(\mathbf{x})$ is the number of 0-bits in the suffix.
		
		Let
		\[
		\gamma(b)
		:=
		\begin{cases}
			0, & b=0,\\
			3, & b=1,\\
			2, & b=2,\\
			b, & b\ge 3.
		\end{cases}
		\]
		The structural potential of $\mathbf{x}$ is defined as
		\[
		\Psi(\mathbf{x})
		:=
		u(\mathbf{x})+\gamma(b(\mathbf{x})).
		\]
		For two search points $\mathbf{x},\mathbf{x}'\in\{0,1\}^n$, the payoff of replacing $\mathbf{x}$ by $\mathbf{x}'$ is
		\[
		\pi_{\mathbf{x},\mathbf{x}'}
		:=
		\Psi(\mathbf{x})-\Psi(\mathbf{x}').
		\]
		A mutation step is accepted if and only if $\pi_{\mathbf{x},\mathbf{x}'}>0$, that is, if it strictly decreases the structural potential.
	\end{definition}
	
	Definition~\ref{def:payoff-mpjcg} is a problem-specific instantiation of a positive-payoff rule.
	The prefix term $u(\mathbf{x})$ measures progress toward the full-prefix structure required by Party~2, whereas the suffix term $\gamma(b(\mathbf{x}))$ is chosen so that moving from suffix-zero count $b=2$ to the gap layer $b=1$ is penalized, while the direct jump from $b=2$ to $b=0$ is rewarded.
	
	\begin{algorithm}[H]
		\caption{Payoff-guided baseline on MP-JCG}
		\label{alg:GEMPMO_payoff}
		\begin{algorithmic}[1]
			\STATE Choose an individual $\mathbf{x}$ uniformly at random from $\{0,1\}^n$
			\STATE $P \leftarrow \{\mathbf{x}\}$
			\WHILE{stopping criterion is not met}
			\STATE Select $\mathbf{x}\in P$
			\STATE Generate $\mathbf{x}'$ from $\mathbf{x}$ by standard bit mutation with rate $1/n$
			\IF{$\pi_{\mathbf{x},\mathbf{x}'}>0$}
			\STATE $P \leftarrow \{\mathbf{x}'\}$
			\ENDIF
			\ENDWHILE
		\end{algorithmic}
	\end{algorithm}
	
	Let $\mathbf{x}_t$ denote the search point contained in the population after iteration $t$.
	We define the first hitting time of $1^n$ by
	\[
	T_{\mathrm{pay}}
	:=
	\inf\{t\ge 0 : \mathbf{x}_t = 1^n\}.
	\]
	
	\begin{lemma}[Bottleneck entry]
		\label{lem:payoff-bottleneck}
		Let
		\[
		\tau_2
		:=
		\inf\{t\ge 0 : u(\mathbf{x}_t)=0 \text{ and } b(\mathbf{x}_t)=2\}.
		\]
		Then there exists a constant $\delta>0$, independent of $n$, such that
		\[
		\Pr(\tau_2<T_{\mathrm{pay}})\ge \delta
		\]
		for all sufficiently large $n$.
	\end{lemma}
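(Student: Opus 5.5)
The plan is to run a fitness-level argument on the pair $(u,b)$. I will show that, with constant probability, the suffix count never drops below $2$ before the prefix is completed. The key structural fact about $\gamma$ is that it is strictly decreasing from $b=k$ down to $b=2$, while $\gamma(1)=3>\gamma(2)$ and $\gamma(0)=0$. So every accepted move that lowers $b$ one step at a time stalls at $b=2$. The only ways to reach $b\le1$ from a state with $b\ge2$ are multi-bit mutations. While $b\ge2$ we have $\mathbf{x}_t\neq 1^n$. Moreover, as soon as $u=0$ with $b\ge2$, every accepted step keeps $u=0$ (increasing $u$ by $j$ would require $\gamma$ to drop by more than $j$, which again needs multi-bit suffix flips) and $b$ descends through $\{k,\dots,3\}$ to $2$. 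Hence the event ``$b_t\ge2$ for all $t$ up to the first time $u=0$, and afterwards until $b=2$'' implies $\tau_2<T_{\mathrm{pay}}$.

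First, I will control the initialization. The suffix of a uniform initial point has $b_0\sim\mathrm{Bin}(k,1/2)$, so $\Pr(b_0\ge2)=1-(k+1)2^{-k}\ge 1/4$ for every $k\ge2$. Independently of the suffix, a Chernoff bound gives $u_0\le (n-k)/2+n^{2/3}$ with probability $1-o(1)$. Ties under $\Psi$ are rejected, so $\Psi$ strictly decreases along the run.

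Second, I will bound the probability of a \emph{bad} accepted step, one that moves from $b\ge2$ to $b\le1$. I will do this at each state with $u>0$, relative to the probability of a \emph{good} step, one that decreases $u$ by exactly one via a single prefix flip. The good step has probability at least $u/(en)$, and it is accepted because $\Psi$ drops by one. For a bad step from $b=2$ there are two routes:
\begin{itemize}
\item Landing at $b=0$ requires flipping both suffix zeros, which has probability at most $1/n^2$.
\item Landing at $b=1$ increases $\gamma$ by $1$, so acceptance requires $u$ to fall by at least $2$. This means flipping one suffix bit and at least two of the $u$ missing prefix bits, which has probability at most $\binom{u}{2}/n^3\le u^2/(2n^3)$.
\end{itemize}
From $b\ge3$, reaching $b\le1$ already needs at least two suffix flips, and the acceptance condition only makes this rarer. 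So the same bounds hold up to lower-order terms. Conditioned on leaving the current $u$-level, the probability that the exit is bad is therefore at most
\[
\frac{u^2/(2n^3)+1/n^2}{u/(en)} = \frac{e\,u}{2n^2}+\frac{e}{un}.
\]
Changes of $b$ that stay within $b\ge2$ and keep $u$ fixed do not harm this comparison.

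Third, I will take a union bound over all $u$-levels the run can visit, which are at most $u_0$ many. This gives a total failure probability of at most
\[
\sum_{u=1}^{u_0}\Bigl(\frac{e\,u}{2n^2}+\frac{e}{un}\Bigr)\le \frac{e}{16}+o(1).
\]
Hence $\delta=\tfrac14\bigl(1-e/16-o(1)\bigr)\cdot(1-o(1))$ works for large $n$. The main obstacle is this summation step. The three-bit route to $b=1$ is not negligible per step: at $u=\Theta(n)$ it is comparable to $1/n$. The bound works only because the time spent at each level shrinks like $n/u$, so the ratio argument, not a crude count of $n\log n$ steps times the per-step probability, is what keeps the sum below $1$. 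The remaining risk is making the ``per level, conditioned on exit'' comparison rigorous when $\Psi$ can also drop through $b$-moves at a fixed $u$. I will handle this by coupling on the sequence of distinct states and noting that those moves keep $b\ge2$.
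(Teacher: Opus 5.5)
Your route differs from the paper's, and one of your steps fails for $k\ge4$. The paper conditions on $b(\mathbf{x}_0)=2$ exactly and counts \emph{every} accepted suffix-changing move as a failure. On its success event, therefore, $b\equiv 2$ and $u$ never increases. It then multiplies the per-level success probabilities $1-O(1/(sn)+s/n^2)$, which only requires these terms to be $O(1/n)$ with bounded sum. That is why it needs no Chernoff bound on $u_0$ and no explicit constant below $1$.

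You instead condition on $b_0\ge2$ and let $b$ move within $\{2,\dots,k\}$. For $k\ge4$ this allows accepted moves that \emph{increase} $u$. From a state with $b=4$, flipping two suffix zeros and one prefix one-bit gives $\Psi=(u+1)+\gamma(2)=u+3<u+4$. Your parenthetical describes exactly this move, so the claim that once $u=0$ with $b\ge2$ every accepted step keeps $u=0$ is false. As a result, your event ``$b\ge2$ up to the first time $u=0$ and afterwards until $b=2$'' does not imply $\tau_2<T_{\mathrm{pay}}$. The run can go $(u,b)=(0,4)\to(1,2)\to(1,0)\to(0,0)=1^n$, and every step is accepted, without ever visiting $(0,2)$. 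The same moves make $u$ non-monotone in the first phase. Levels can then be revisited, so a union bound over ``at most $u_0$'' levels does not cover the run.

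The repair is cheap:
\begin{itemize}
\item Declare every accepted $u$-increasing step a failure. Such a step needs at least two suffix zeros to flip, so it has probability $O(1/n^2)$ per iteration.
\item At level $u$ this costs $O(1/(un))$, hence $O(\log n/n)$ in total.
\item In the final descent of $b$ at $u=0$, flipping exactly one of the $b\ge3$ suffix zeros and nothing else has probability at least $3/(en)$. Each of the $O(k)$ descent levels therefore costs $O(1/n)$.
\item On the complementary event $u$ is non-increasing and each level is visited at most once. Your level-wise comparison then becomes rigorous through $\sum_t \Pr(\text{still at level } u \text{ at step } t)\, q_u \le q_u/p_u$, with $p_u,q_u$ bounded uniformly over $b\ge2$.
\end{itemize}
Alternatively, condition on $b_0=2$ as the paper does, and the issue disappears.

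There is also a factor-$2$ slip. Reaching $b=1$ from $b=2$ requires one of \emph{two} suffix zeros to flip, so the bound is $u(u-1)/n^3$, not $\binom{u}{2}/n^3$. Your sum is then about $e/8$ rather than $e/16$. This is still below $1$, so your conclusion survives.

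Once repaired, your version gives a $k$-independent initial constant, $\Pr(b_0\ge2)\ge1/4$ versus $\binom{k}{2}2^{-k}$. The price is the extra bookkeeping above and the need for explicit constants, which the paper's product bound avoids.
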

	
	\begin{proof}
		We condition on the event that the initial suffix-zero count equals two. Since $k\ge2$ is constant,
		\[
		\Pr(b(\mathbf{x}_0)=2)=\binom{k}{2}2^{-k}=\Omega(1).
		\]
		It remains to show that, conditioned on this event, the process completes the prefix while keeping $b=2$ with probability bounded below by a positive constant.
		
		Consider a state with $b=2$ and $u=s>0$. A one-bit mutation that flips one of the $s$ missing prefix bits and no other bit is accepted and occurs with probability
		\[
		p_s=\Omega\!\left(\frac{s}{n}\right).
		\]
		We call such a mutation a good prefix correction. We next bound accepted suffix-changing mutations before the next good prefix correction. A direct move from $b=2$ to $b=0$ requires flipping the two remaining suffix zero-bits and has probability $O(n^{-2})$ in one iteration. A move from $b=2$ to $b=1$ can be accepted only if the mutation also decreases $u$ by at least two, because $\gamma(1)-\gamma(2)=1$. The probability of such a mutation is $O(s^2/n^3)$. Other accepted suffix-changing mutations are bounded by the same order because $k=O(1)$. Hence the probability of an accepted suffix-changing event in one iteration is
		\[
		q_s=O\!\left(\frac{1}{n^2}+\frac{s^2}{n^3}\right).
		\]
		A standard competing-events bound gives
		\[
		\Pr(\text{a suffix-changing accepted event occurs before the next good prefix correction})
		\le
		O\!\left(\frac{q_s}{p_s}\right)
		=
		O\!\left(\frac{1}{sn}+\frac{s}{n^2}\right).
		\]
		Therefore, the probability that the process performs a good prefix correction before any accepted suffix-changing event at every level $s=1,\ldots,u(\mathbf{x}_0)$ is bounded below by
		\[
		\prod_{s=1}^{u(\mathbf{x}_0)}
		\left(
		1-
		O\!\left(\frac{1}{sn}+\frac{s}{n^2}\right)
		\right)
		=
		\Omega(1),
		\]
		for all sufficiently large $n$. On this event, the process reaches a state with $u=0$ and $b=2$ before hitting $1^n$. Combining this event with $\Pr(b(\mathbf{x}_0)=2)=\Omega(1)$ proves the claim.
	\end{proof}
	
	\begin{lemma}[Final jump]
		\label{lem:payoff-final-jump}
		Conditioned on $\tau_2<T_{\mathrm{pay}}$, the expected additional number of iterations until $1^n$ is first reached satisfies
		\[
		\mathbb{E}[T_{\mathrm{pay}}-\tau_2 \mid \tau_2<T_{\mathrm{pay}}]=\Theta(n^2).
		\]
	\end{lemma}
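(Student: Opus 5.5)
The plan is to use the strong Markov property at the stopping time $\tau_2$ and study the process started from a state $\mathbf{x}$ with $u(\mathbf{x})=0$ and $b(\mathbf{x})=2$. Such a state has $\Psi(\mathbf{x})=\gamma(2)=2$. The conditioning on $\{\tau_2<T_{\mathrm{pay}}\}$ only says that this state was entered before $1^n$. Since Algorithm~\ref{alg:GEMPMO_payoff} is a time-homogeneous Markov chain on single search points, the remaining time is distributed as the hitting time of $1^n$ from any such state. By symmetry in the suffix positions, this hitting time does not depend on which two suffix bits are zero.

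\textbf{Step 1 (classifying accepted moves).} A mutation is accepted only if $\Psi(\mathbf{x}')<2$, so we enumerate the states with $\Psi\in\{0,1\}$. Because $\gamma(b)\ge 2$ for every $b\ge 1$, any such state must have $b(\mathbf{x}')=0$. Then $\Psi(\mathbf{x}')=u(\mathbf{x}')\in\{0,1\}$. Hence the only possible successors are $1^n$ itself ($u=0,b=0$), or a point with $b=0$ and exactly one missing prefix bit ($u=1$). In particular, the gap-avoiding states with $b\ge 3$ or with larger $u$ are never accepted from the bottleneck state. From a state with $u=1,b=0$, where $\Psi=1$, the unique accepted successor is $1^n$.

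\textbf{Step 2 (lower bound).} Leaving the bottleneck state in either of the two ways of Step~1 requires flipping both specific suffix zero-bits in the same mutation. This has probability at most $n^{-2}$ per iteration. Therefore the time to leave the state stochastically dominates a geometric random variable with success probability at most $n^{-2}$. Since $1^n$ cannot be reached without leaving the state, $\mathbb{E}[T_{\mathrm{pay}}-\tau_2\mid\tau_2<T_{\mathrm{pay}}]\ge n^2$.

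\textbf{Step 3 (upper bound).} The direct jump to $1^n$ flips exactly the two suffix zero-bits and nothing else. Its probability is $n^{-2}(1-1/n)^{n-2}\ge 1/(en^2)$, so the expected time to leave the bottleneck is $O(n^2)$. If the chain leaves to a $u=1,b=0$ state instead, it can only move next to $1^n$. That move requires flipping the single missing prefix bit and nothing else, which has probability $\frac1n(1-\frac1n)^{n-1}\ge 1/(en)$. This adds $O(n)$ expected iterations. Summing gives $O(n^2)+O(n)=O(n^2)$, and together with Step~2 this yields $\Theta(n^2)$.

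\textbf{Main obstacle.} The step needing the most care is the exhaustive enumeration in Step~1. I must check, using the specific values $\gamma(1)=3$, $\gamma(2)=2$, and $\gamma(b)=b\ge 3$, that no multi-bit mutation can trade suffix changes against prefix changes to reach $\Psi<2$ while keeping any suffix zero. This is exactly what $\gamma(b)\ge 2$ for all $b\ge 1$ guarantees. I also need to state the strong Markov reduction cleanly, so that the conditioning event does not bias the post-$\tau_2$ dynamics.
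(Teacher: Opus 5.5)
Your proposal is correct and follows the same route as the paper. Single-bit moves into the gap layer are rejected because $\gamma(1)>\gamma(2)$, so leaving the bottleneck requires flipping both suffix zero-bits at once (probability at most $n^{-2}$, giving $\Omega(n^2)$), while the exact two-bit flip produces $1^n$ with probability $\Theta(n^{-2})$ (giving $O(n^2)$); your exhaustive classification of accepted successors ($b=0$, $u\le 1$ only) and the explicit strong Markov reduction make rigorous two points the paper leaves implicit, namely that no other exit from the bottleneck exists and that a detour through a state with $u=1$, $b=0$ adds only $O(n)$ iterations.
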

	
	\begin{proof}
		At time $\tau_2$, the process is at a point with full prefix and exactly two 0-bits in the suffix.
		A one-bit mutation flipping exactly one of these two suffix 0-bits would move the process to the gap layer with suffix-zero count $b=1$.
		By Definition~\ref{def:payoff-mpjcg},
		\[
		\gamma(2)=2
		\qquad\text{and}\qquad
		\gamma(1)=3,
		\]
		so such a transition increases the structural potential and is therefore rejected.
		
		Therefore, before $1^n$ can be reached from this bottleneck state, some mutation must flip both remaining suffix 0-bits in the same iteration. The probability that a mutation flips these two specified bits, regardless of what happens to the other bits, is at most $1/n^2$. Hence the expected waiting time before any accepted exit that can remove the bottleneck is $\Omega(n^2)$.
		
		For the upper bound, the mutation that flips exactly these two suffix 0-bits and leaves all other bits unchanged occurs with probability
		\[
		\frac{1}{n^2}\left(1-\frac{1}{n}\right)^{n-2}
		=
		\Theta\!\left(\frac{1}{n^2}\right).
		\]
		This mutation directly generates $1^n$. Thus the expected additional hitting time is also $O(n^2)$.
		Combining the lower and upper bounds proves the claim.
	\end{proof}
	
	\begin{theorem}[Payoff baseline]
		\label{thm:payoff-mpjcg}
		Let $k=O(1)$.
		Under Algorithm~\ref{alg:GEMPMO_payoff}, the expected time to reach $1^n$ on MP-JCG satisfies
		\[
		\mathbb{E}[T_{\mathrm{pay}}]=\Theta(n^2).
		\]
	\end{theorem}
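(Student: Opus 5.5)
The plan is to prove the lower bound $\Omega(n^2)$ and the upper bound $O(n^2)$ on $\mathbb{E}[T_{\mathrm{pay}}]$ separately. Both rest on the two lemmas just established, with the law of total expectation linking them.

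\textbf{Lower bound.} By Lemma~\ref{lem:payoff-bottleneck}, the event $E:=\{\tau_2<T_{\mathrm{pay}}\}$ has probability at least a constant $\delta>0$ for large $n$. Since $T_{\mathrm{pay}}\ge T_{\mathrm{pay}}-\tau_2\ge 0$ on $E$, we get
\[
\mathbb{E}[T_{\mathrm{pay}}]\ge \Pr(E)\,\mathbb{E}[T_{\mathrm{pay}}-\tau_2\mid E]\ge \delta\cdot\Omega(n^2)=\Omega(n^2),
\]
where the conditional expectation is bounded by Lemma~\ref{lem:payoff-final-jump}. This direction needs no further work.

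\textbf{Upper bound.} I would argue by a drift argument on the potential $\Psi$, which is a nonnegative integer bounded by $n-k+k+1=O(n)$ and is strictly decreased by every accepted step. Since $1^n$ is the unique point with $\Psi=0$, reaching $1^n$ is the same as reaching $\Psi=0$. From any state with $\Psi>0$, I would exhibit a cheap improving mutation:
\begin{itemize}
\item If $u=s>0$, flipping one missing prefix bit alone leaves $b$ unchanged and lowers $\Psi$ by one. This has probability $\Omega(s/n)$.
\item If $u=0$ and $b\ge3$, flipping one suffix zero lowers $\gamma$ by one (from $b$ to $b-1$, or from $3$ to $\gamma(2)=2$). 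This has probability $\Omega(1/n)$.
\item If $u=0$ and $b=1$, flipping the single suffix zero gives $1^n$. This has probability $\Omega(1/n)$.
\item If $u=0$ and $b=2$, the only improving moves flip both suffix zeros, with probability $\Theta(1/n^2)$, exactly as in Lemma~\ref{lem:payoff-final-jump}.
\end{itemize}

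Now sum the waiting times. Prefix corrections contribute $O(n\log n)$ in total by the harmonic sum over $s$. The suffix levels with $b\ge3$ or $b=1$ are visited at most $O(k)=O(1)$ times each, for $O(n)$ in total. The bottleneck state is visited once, because $\Psi$ only decreases and the states with $u=0,b=2$ all have $\Psi=2$; it costs $O(n^2)$.

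The step I expect to be the main obstacle is this accounting in the upper bound. An accepted move may change $u$ and $b$ simultaneously, for instance increasing $b$ while lowering $u$ by more, so the process need not follow the idealized path. To handle this I would avoid tracking the path and instead use the crude fitness-level bound on $\Psi$: for each value $v$ of $\Psi$ reached, the expected time to leave level $v$ is at most the maximum, over states with $\Psi=v$, of the reciprocal improvement probability.

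A state with $u\ge1$ has waiting time $O(n)$, and a state with $u=0$ has waiting time at most $O(n^2)$. States with $u=0$ have $\Psi\le k+1$, a constant, so the $O(n^2)$ case can occur on only $O(1)$ levels. Since there are $O(n)$ levels in total, this yields
\[
\mathbb{E}[T_{\mathrm{pay}}]=O(n)\cdot O(n)+O(1)\cdot O(n^2)=O(n^2).
\]
Combined with the lower bound, this gives $\Theta(n^2)$.
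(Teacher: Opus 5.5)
Your proof is correct and follows the paper's route. The lower bound is exactly the paper's combination of Lemma~\ref{lem:payoff-bottleneck} and Lemma~\ref{lem:payoff-final-jump}, and the upper bound is the same fitness-level argument on $\Psi$: $O(n)$ per level when $u\ge1$, and $O(n^2)$ only on the $O(1)$ levels that contain states with $u=0$.

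Your single level-by-level sum is slightly more careful than the paper's phase split. It automatically covers accepted moves such as $(u,b)=(0,1)\to(0,2)$, which is permitted because $\gamma(2)<\gamma(1)$. The paper's $b=1$ case claims a cost of only $O(n)$ and overlooks this move, though the $O(n^2)$ bound is unaffected.
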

	
	\begin{proof}
		We first derive the upper bound.
		Consider the structural potential $\Psi$ from Definition~\ref{def:payoff-mpjcg}.
		Before the process reaches a state with
		\[
		u(\mathbf{x})=0
		\qquad\text{and}\qquad
		b(\mathbf{x})\in\{0,1,2\},
		\]
		either $u(\mathbf{x})>0$ or $b(\mathbf{x})\ge 3$ holds.
		If $u(\mathbf{x})>0$, then flipping exactly one missing prefix bit and no other bit decreases $u$ by one, leaves $b$ unchanged, decreases $\Psi$ by one, and is accepted. This event has probability at least
		\[
		\frac{1}{n}\left(1-\frac1n\right)^{n-1}
		=
		\Omega\!\left(\frac1n\right).
		\]
		If $u(\mathbf{x})=0$ and $b(\mathbf{x})\ge3$, then flipping exactly one suffix zero-bit to one and no other bit decreases $\gamma(b)$ by one and is accepted, again with probability $\Omega(1/n)$ because \(k=O(1)\).
		
		Thus, whenever the process has not yet reached a state with \(u=0\) and \(b\in\{0,1,2\}\), the expected waiting time for an accepted mutation that decreases \(\Psi\) by at least one is \(O(n)\).
		Since \(\Psi\) is integer-valued and at most \(n+O(1)\) for \(k=O(1)\), the expected time to reach such a state is \(O(n^2)\).
		
		If $b(\mathbf{x})=0$, then $\mathbf{x}=1^n$ and the target has already been reached.
		If $b(\mathbf{x})=1$, then one remaining suffix 0-bit must be flipped to $1$, which occurs with probability
		\[
		\frac{1}{n}\left(1-\frac{1}{n}\right)^{n-1}
		=
		\Theta\!\left(\frac{1}{n}\right),
		\]
		and is accepted because $\gamma(1)=3$ and $\gamma(0)=0$.
		Hence this case requires expected $O(n)$ additional iterations.
		If $b(\mathbf{x})=2$, then Lemma~\ref{lem:payoff-final-jump} gives expected $\Theta(n^2)$ additional iterations.
		Therefore
		\[
		\mathbb{E}[T_{\mathrm{pay}}]=O(n^2).
		\]
		
		For the lower bound, Lemma~\ref{lem:payoff-bottleneck} shows that the process enters the bottleneck state $(u,b)=(0,2)$ before hitting $1^n$ with probability at least a positive constant.
		Conditioned on this event, Lemma~\ref{lem:payoff-final-jump} yields an additional expected waiting time of $\Theta(n^2)$.
		Hence
		\[
		\mathbb{E}[T_{\mathrm{pay}}]
		\ge
		\Pr(\tau_2<T_{\mathrm{pay}})
		\cdot
		\mathbb{E}[T_{\mathrm{pay}}-\tau_2 \mid \tau_2<T_{\mathrm{pay}}]
		=
		\Omega(n^2).
		\]
		Combining the upper and lower bounds proves the claim.
	\end{proof}
	
	\begin{corollary}[FE complexity]
		\label{cor:payoff-mpjcg-fe}
		Under the assumptions of Theorem~\ref{thm:payoff-mpjcg}, the expected fitness-evaluation complexity until $1^n$ is first reached is also
		\[
		\Theta(n^2).
		\]
	\end{corollary}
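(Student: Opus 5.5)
The corollary reduces to converting the iteration count of Theorem~\ref{thm:payoff-mpjcg} into fitness evaluations, following the convention $\mathbb{E}[T_{\mathrm{FE}}]=B_{\mathrm{init}}+B_{\mathrm{gen}}\,\mathbb{E}[G]$ from Sec.~\ref{sec:preliminaries}. First I will fix what is evaluated in Algorithm~\ref{alg:GEMPMO_payoff}. The initial individual is evaluated once, so $B_{\mathrm{init}}=1$. Each iteration of the while-loop produces exactly one offspring $\mathbf{x}'$ by standard bit mutation. That offspring has to be evaluated once, under both $F_1$ and $F_2$ (equivalently, once to compute $\Psi(\mathbf{x}')$), to decide whether $\pi_{\mathbf{x},\mathbf{x}'}>0$. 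The parent's value can be stored, so $B_{\mathrm{gen}}=1$. Even if both party objective vectors are counted separately, $B_{\mathrm{gen}}=\Theta(1)$, independent of $n$.

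With $T_{\mathrm{pay}}$ defined as the first iteration at which $\mathbf{x}_t=1^n$, the number of fitness evaluations used until $1^n$ is first reached is exactly
\[
T_{\mathrm{FE}} = 1 + T_{\mathrm{pay}},
\]
possibly up to a constant factor of $2$ if one evaluation is counted per party. There is one boundary case: $1^n$ may be the initial sample, in which case $T_{\mathrm{pay}}=0$ and $T_{\mathrm{FE}}=1$. This is consistent with the formula.

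Taking expectations and using linearity, Theorem~\ref{thm:payoff-mpjcg} gives $\mathbb{E}[T_{\mathrm{FE}}]=\Theta(1)+\Theta(1)\cdot\Theta(n^2)=\Theta(n^2)$. Because the relation between $T_{\mathrm{FE}}$ and $T_{\mathrm{pay}}$ holds pathwise with constant factors, the lower bound $\Omega(n^2)$ carries over just as the upper bound does.

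No step is a real obstacle. The only point needing care is the counting convention, namely that rejected offspring are still evaluated and counted. This is exactly what makes the per-iteration budget a constant, rather than a budget that depends on acceptance.
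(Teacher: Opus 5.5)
Your proposal is correct and follows the same route as the paper. The paper's proof simply notes that Algorithm~\ref{alg:GEMPMO_payoff} evaluates exactly one offspring per iteration, so the iteration count and the fitness-evaluation count coincide asymptotically; your pathwise identity $T_{\mathrm{FE}}=1+T_{\mathrm{pay}}$, with the initial evaluation counted, just makes this explicit and carries both the upper and lower $\Theta(n^2)$ bounds over from Theorem~\ref{thm:payoff-mpjcg}.
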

	
	\begin{proof}
		Algorithm~\ref{alg:GEMPMO_payoff} evaluates exactly one offspring in each iteration.
		Therefore the number of iterations and the number of fitness evaluations are asymptotically identical.
	\end{proof}
	
	Theorem~\ref{thm:payoff-mpjcg} shows that payoff-guided party interaction alone does not remove the mutation bottleneck induced by the gap structure of MP-JCG.
	The bottleneck remains the final transition to $1^n$, which requires a rare two-bit mutation.
	By contrast, Theorem~\ref{thm:cpr-mpjcg} shows that this bottleneck can be bypassed once cross-party recombination combines an almost complete prefix from Party~2 with a suffix block made available through Party~1.
	
	\subsection{Runtime Analysis of NSGA-II on Flattened Jump-Count with Gap}
	\label{subsec:fjcg}
	
	We next analyze NSGA-II on the flattened version of MP-JCG.
	In this subsection, we study a mutation-only NSGA-II baseline using the population-update convention introduced in Sec.~\ref{subsec:nsgaii}. In each offspring trial, one parent is selected uniformly from the current population and standard bit mutation with rate $1/n$ is applied.
	Throughout this subsection, we assume $2 \le k \le \lfloor n/2 \rfloor$ and $k=O(1)$.
	
	\begin{definition}[Flattened Jump-Count with Gap (F-JCG)]
		\label{def:F-JCG}
		Let $n \in \mathbb{N}$ and $k \in \{2,\ldots,\lfloor n/2 \rfloor\}$.
		The F-JCG is defined as the flattened version of MP-JCG obtained by applying the flattening operator in Definition~\ref{def:flatten}.
	\end{definition}
	
	For $\mathbf{x}\in\{0,1\}^n$, define the flattened objective vector
	\[
	\mathbf{f}(\mathbf{x})
	=
	(f_{1,1}(\mathbf{x}),f_{1,2}(\mathbf{x}),f_{2,1}(\mathbf{x}),f_{2,2}(\mathbf{x})).
	\]
	We write $\mathcal{PF}_{\mathrm{flat}}$ for the Pareto front of the corresponding $4$-objective problem.
	
	For $t\in\{k,\ldots,n-k\}$, define
	\[
	\mathbf{v}_t := (k+t,\;k+n-t,\;t,\;t+k).
	\]
	For $j\in\{2,\ldots,k-1\}$, define
	\[
	\mathbf{u}_j := (j,\;k+j,\;n-j,\;n-k+j).
	\]
	When \(k=2\), this family is empty. In addition, define
	\[
	\widehat{\mathbf{u}} := (1,\;k+1,\;n-1,\;n-k-1).
	\]
	
	\begin{proposition}[Flattened Pareto front]
		\label{prop:pf-fjcg}
		The Pareto front of F-JCG is
		\[
		\mathcal{PF}_{\mathrm{flat}}
		=
		\{\mathbf{v}_0,\mathbf{v}_n\}
		\cup
		\{\mathbf{v}_t : k\le t\le n-k\}
		\cup
		\{\mathbf{u}_j : 2\le j\le k-1\}
		\cup
		\{\widehat{\mathbf{u}}\},
		\]
		where
		\[
		\mathbf{v}_0=(k,n+k,0,k),
		\qquad
		\mathbf{v}_n=(n+k,k,n,n-k).
		\]
	\end{proposition}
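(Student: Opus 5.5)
The plan is to reduce everything to a few integer parameters per search point and then do a case analysis on the Hamming weight. For $\mathbf{x}\in\{0,1\}^n$ I will use $t=|\mathbf{x}|_1$, $i=i(\mathbf{x})$ and $b=b(\mathbf{x})$, which satisfy $t=i+k-b$. By the proof of Proposition~\ref{prop:ps1-mpjcg}, the Party~1 pair $(f_{1,1},f_{1,2})$ depends only on $t$. For a non-gap point, Lemma~\ref{lem:party2-representatives} gives
\[
(f_{2,1},f_{2,2})=(t,\;i+b)=(t,\;t-k+2b).
\]
So, at fixed weight $t$, the first three flattened coordinates are fixed, and the only freedom is in $f_{2,2}$. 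This coordinate is increasing in $b$, and $b$ is subject to $0\le b\le k$, $0\le i=t-k+b\le n-k$, and the gap exclusion (no $b=1$ together with $i=n-k$). Every point of weight $t$ that is not of maximal $f_{2,2}$ is therefore weakly dominated by the maximizer at that weight. Gap points have $(f_{2,1},f_{2,2})=(0,0)$ and are dominated by the non-gap maximizer of the same weight.

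The first step is to compute this per-weight maximizer.
\begin{itemize}
\item For $k\le t\le n-k$, and also for $t\le k-1$, the choice $b=k$ is feasible ($i=t\le n-k$), so the best vector at weight $t$ is $(\cdot,\cdot,t,t+k)$. For $k\le t\le n-k$ this is $\mathbf{v}_t$.
\item For $t=n-j$ with $2\le j\le k-1$, the constraint $i\le n-k$ forces $b\le j$. Taking $b=j$ gives $i=n-k$, which is not a gap point since $j\ne1$, so the best vector is $\mathbf{u}_j$.
\item For $t=n-1$, we need $b\le 1$, but $b=1$ gives a gap point. Hence $b=0$, $i=n-k-1$, which yields $\widehat{\mathbf{u}}$.
\item The endpoints $t=0$ and $t=n$ give $\mathbf{v}_0$ and $\mathbf{v}_n$ directly.
\end{itemize}

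The second step is to discard the low weights $1\le t\le k-1$. Their best vector is $(k+t,\,t,\,t,\,t+k)$, and it is strictly dominated by $\mathbf{v}_k=(2k,n,k,2k)$, because $t<k\le n$ in every coordinate. After this, every point of $\{0,1\}^n$ is weakly dominated by a member of the claimed set, and the claimed vectors are the only candidates for the front.

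The third step is to verify that the claimed vectors are mutually non-dominated.
\begin{itemize}
\item $\mathbf{v}_0$ is the unique maximizer of the second coordinate ($n+k$), and $\mathbf{v}_n$ is the unique maximizer of the first.
\item Two vectors $\mathbf{v}_t,\mathbf{v}_s$ with $k\le t<s\le n-k$ trade off in their first two coordinates.
\item Each $\mathbf{u}_j$ and $\widehat{\mathbf{u}}$ has third coordinate at least $n-k+1$, which exceeds that of every $\mathbf{v}_t$ with $t\le n-k$. The only vector that could beat them in that coordinate is $\mathbf{v}_n$, whose second coordinate $k$ is smaller than $k+1\le k+j$.
\item Distinct $\mathbf{u}_j,\mathbf{u}_{j'}$ trade off between their first and third coordinates.
\item $\widehat{\mathbf{u}}$ beats every $\mathbf{u}_j$ in the third coordinate ($n-1>n-j$) and loses in the fourth ($n-k-1<n-k+j$).
\item Conversely, no $\mathbf{v}_t$ or $\mathbf{u}_j$ can dominate $\widehat{\mathbf{u}}$ or $\mathbf{u}_j$ because of these same coordinate gaps.
\end{itemize}
When $k=2$ the family $\{\mathbf{u}_j\}$ is empty and these checks simply drop out.

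I expect the main obstacle to be the per-weight maximization near $t=n$, that is, correctly handling the interaction between the box constraints on $(i,b)$ and the gap set $\mathcal{H}$. This is exactly what produces the irregular point $\widehat{\mathbf{u}}$ with fourth coordinate $n-k-1$ instead of $n-k+1$. It is also where I must make sure that no gap or near-gap point of weight $n-1$ sneaks onto the front; I will handle this by listing all feasible $b\in\{0,1\}$ explicitly at $t=n-1$.
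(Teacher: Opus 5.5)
Your proposal is correct and follows essentially the same route as the paper. You fix the Hamming weight so that the first three flattened coordinates are determined, maximize $f_{2,2}$ within each weight class, discard weights $1,\dots,k-1$ via $\mathbf{v}_k$, and then check pairwise nondominance; your formula $f_{2,2}=t-k+2b$ with the box and gap constraints simply makes the paper's per-weight argument, including the $t=n-1$ case, more systematic.

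One point needs tidying. Your ``conversely'' bullet only restates a direction you had already covered. What is still missing is that $\mathbf{u}_j$, $\widehat{\mathbf{u}}$, $\mathbf{v}_0$ and $\mathbf{v}_n$ do not dominate a middle $\mathbf{v}_t$. This is immediate: $k+t\ge 2k$ exceeds the first coordinates $j\le k-1$, $1$ and $k$, and $\mathbf{v}_n$ loses to $\mathbf{v}_t$ in the second coordinate.
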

	
	\begin{proof}
		We first identify the nondominated representatives.
		By Proposition~\ref{prop:ps1-mpjcg}, for every $t\in\{k,\ldots,n-k\}$, any search point with Hamming weight $t$ has Party~1 vector
		\[
		(k+t,\;k+n-t).
		\]
		If such a point additionally has all suffix bits equal to $0$, then it has Party~2 vector
		\[
		(t,\;t+k),
		\]
		and therefore realizes $\mathbf{v}_t$.
		
		The extreme points $0^n$ and $1^n$ realize $\mathbf{v}_0$ and $\mathbf{v}_n$, respectively.
		
		Now consider search points with Hamming weight $n-j$ for $2\le j\le k-1$.
		If all $j$ zero-bits lie in the suffix, the resulting flattened vector is
		\[
		(j,\;k+j,\;n-j,\;n-k+j)=\mathbf{u}_j.
		\]
		Further, the point
		\[
		1^{n-k-1}01^k
		\]
		realizes $\widehat{\mathbf{u}}=(1,k+1,n-1,n-k-1)$.
		
		It remains to justify that no other flattened objective vector is needed.
		For a fixed middle Hamming weight $t\in\{k,\ldots,n-k\}$, the first three components are fixed once $t$ is fixed, and the fourth component is maximized when all suffix bits are zero. Hence all noncanonical points of the same middle weight are dominated by a canonical representative realizing $\mathbf{v}_t$.
		For a high Hamming weight $n-j$ with $2\le j\le k-1$, the first two components and the third component are fixed, and the Party~2 second component is maximized when all $j$ zero-bits lie in the suffix. This gives $\mathbf{u}_j$ and dominates all other non-gap points of the same weight.
		For Hamming weight $n-1$, the full-prefix point with one suffix zero belongs to the gap layer and has Party~2 value $(0,0)$; it is dominated by the non-gap representative $1^{n-k-1}01^k$, which realizes $\widehat{\mathbf{u}}$.
		Points with Hamming weight between $1$ and $k-1$ are dominated by the middle representative realizing $\mathbf{v}_k$, while the two all-zero and all-one points remain as the extreme vectors $\mathbf{v}_0$ and $\mathbf{v}_n$.
		
		It remains to verify nondominance.
		Within the family $\{\mathbf{v}_t : k\le t\le n-k\}$, increasing $t$ improves the first, third, and fourth components while worsening the second, so these vectors are mutually non-dominated.
		The two extreme points $\mathbf{v}_0$ and $\mathbf{v}_n$ are also non-dominated, since each maximizes one Party~1 objective.
		
		For the family $\{\mathbf{u}_j : 2\le j\le k-1\}$, increasing $j$ improves the first, second, and fourth components while worsening the third component, so these vectors are mutually non-dominated.
		For any $t \in \{k,\ldots,n-k\}$ and any $j \in \{2,\ldots,k-1\}$, the vector $\mathbf{v}_t$ has strictly larger second component than $\mathbf{u}_j$, whereas $\mathbf{u}_j$ has strictly larger third component than $\mathbf{v}_t$; hence neither dominates the other.
		Moreover, $\widehat{\mathbf{u}}$ has third component $n-1$, which is larger than that of every middle vector $\mathbf{v}_t$ with $k\le t\le n-k$ and every $\mathbf{u}_j$ with $2\le j\le k-1$. It is not dominated by $\mathbf{v}_n$, because $\widehat{\mathbf{u}}$ has the larger second component $k+1>k$, and it is not dominated by $\mathbf{v}_0$, because $\widehat{\mathbf{u}}$ has the larger third component $n-1>0$. Thus $\widehat{\mathbf{u}}$ is also nondominated.
		Therefore the stated set is exactly $\mathcal{PF}_{\mathrm{flat}}$.
	\end{proof}
	
	\begin{corollary}[Front size]
		\label{cor:pfsize-fjcg}
		The Pareto-front size of F-JCG satisfies
		\[
		|\mathcal{PF}_{\mathrm{flat}}| = n-k+2.
		\]
	\end{corollary}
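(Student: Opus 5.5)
The plan is to count the vectors in the explicit description of $\mathcal{PF}_{\mathrm{flat}}$ given by Proposition~\ref{prop:pf-fjcg}. This requires two things: sizing each of the four families, and checking that all listed vectors are pairwise distinct, so that the union is disjoint as a set of objective vectors.

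\emph{Sizes of the families.} The middle family $\{\mathbf{v}_t : k\le t\le n-k\}$ has $n-2k+1$ indices. This is nonnegative because $k\le\lfloor n/2\rfloor$. The extreme pair $\{\mathbf{v}_0,\mathbf{v}_n\}$ contributes $2$. The family $\{\mathbf{u}_j : 2\le j\le k-1\}$ contributes $k-2$; this correctly gives $0$ when $k=2$. The single vector $\widehat{\mathbf{u}}$ contributes $1$. The total is
\[
(n-2k+1)+2+(k-2)+1=n-k+2 .
\]

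\emph{Distinctness.} I will separate the vectors by their first component $f_{1,1}$, which makes every comparison immediate.
\begin{itemize}
\item The middle vectors $\mathbf{v}_t$ have first components $k+t\in\{2k,\dots,n\}$, which are pairwise different.
\item $\mathbf{v}_0$ has first component $k<2k$, and $\mathbf{v}_n$ has first component $n+k>n$.
\item The vectors $\mathbf{u}_j$ have first components $j\in\{2,\dots,k-1\}$, which are pairwise different and lie strictly below $k$.
\item $\widehat{\mathbf{u}}$ has first component $1$.
\end{itemize}
These value ranges are mutually disjoint, using $k\ge2$ so that $1<2\le j\le k-1<k$. Hence no two listed vectors coincide.

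There is no real obstacle here. The only point needing care is the boundary case $k=2$, where the $\mathbf{u}_j$ family is empty, and the fact that the middle family is nonempty because $n\ge2k$. Both are covered by the standing assumptions $2\le k\le\lfloor n/2\rfloor$.
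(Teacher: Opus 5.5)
Your proposal is correct and follows the paper's proof, which likewise sums the family sizes $2+(n-2k+1)+(k-2)+1=n-k+2$ from Proposition~\ref{prop:pf-fjcg}. Your separation of the families by the first component $f_{1,1}$ makes explicit that the listed vectors are pairwise distinct, a step the paper leaves implicit.
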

	
	\begin{proof}
		By Proposition~\ref{prop:pf-fjcg},
		\[
		|\mathcal{PF}_{\mathrm{flat}}|
		=
		2 + (n-2k+1) + (k-2) + 1
		=
		n-k+2.
		\]
	\end{proof}
	
	Since the flattened NSGA-II baseline below uses population size $N=c(n+1)$ with a constant $c>1$, we have \(N>|\mathcal{PF}_{\mathrm{flat}}|\) for all sufficiently large \(n\).
	The runtime argument below is a full-front coverage analysis under this front-preserving elitist convention. In particular, once a representative of a flattened Pareto-front vector has been generated, it is assumed to remain available for subsequent coverage steps.
	
	For the runtime analysis, we use the following canonical representatives:
	\[
	z_t := 1^t0^{n-t},
	\qquad t\in\{k,\ldots,n-k\},
	\]
	\[
	x^{(j)} := 1^{n-j}0^j,
	\qquad j\in\{2,\ldots,k-1\},
	\]
	and
	\[
	\widehat{x} := 1^{n-k-1}01^k.
	\]
	These representatives realize the front vectors $\mathbf{v}_t$, $\mathbf{u}_j$, and $\widehat{\mathbf{u}}$, respectively.
	
	\begin{lemma}[Middle initialization]
		\label{lem:init-middle-fjcg}
		Let $X_n \sim \mathrm{Bin}(n,\tfrac12)$.
		Then
		\[
		\Pr(k \le X_n \le n-k) \ge \frac{\binom{2k}{k}}{2^{2k}}.
		\]
	\end{lemma}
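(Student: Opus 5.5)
The plan is to reduce the claim to the case $n=2k$ by a monotonicity argument in $n$, and then treat that base case directly. At $n=2k$ the event $k\le X_{2k}\le 2k-k$ is exactly $\{X_{2k}=k\}$. Its probability is $\binom{2k}{k}2^{-2k}$, so the bound holds with equality there. Since the statement is assumed for $k\le\lfloor n/2\rfloor$, i.e. $n\ge 2k$, it suffices to show that $g(n):=\Pr(k\le X_n\le n-k)$ is nondecreasing in $n$ for $n\ge 2k$.

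For the monotonicity step, couple $X_{n+1}=X_n+B$ with $B\sim\mathrm{Bernoulli}(1/2)$ independent of $X_n$. The event $\{k\le X_{n+1}\le n+1-k\}$ contains all outcomes with $k\le X_n\le n-k$, because adding $B\in\{0,1\}$ keeps the value in $[k,n+1-k]$. It also gains outcomes from the boundary, namely $X_n=k-1$ with $B=1$ and $X_n=n+1-k$ with $B=0$ (the latter exists as an outcome when $n+1-k\le n$, i.e. $k\ge1$). No mass is lost, since a value in $[k,n-k]$ can only move to $[k,n-k+1]$. Hence $g(n+1)\ge g(n)$, and induction from $n=2k$ gives $g(n)\ge \binom{2k}{k}2^{-2k}$ for all $n\ge 2k$.

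An alternative route, if one prefers a direct argument, uses the symmetry and unimodality of $\mathrm{Bin}(n,1/2)$. The excluded set $\{X_n<k\}\cup\{X_n>n-k\}$ has total mass $2\Pr(X_n\le k-1)$, and one would bound this by $1-\binom{2k}{k}2^{-2k}$. The coupling argument above is cleaner, so I will follow it.

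The main (mild) obstacle is bookkeeping in the coupling step: checking that the inclusion of events is exact at both endpoints, so that no outcome with $X_n\in[k,n-k]$ leaves the window after adding $B$. Once this is in place, it immediately gives $\Omega(1)$ for constant $k$. That is what the subsequent F-JCG runtime analysis needs to place an initial individual in the middle region of weights $\{k,\dots,n-k\}$.
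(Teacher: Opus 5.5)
Your proof is correct and is essentially the paper's argument: the coupling $X_{n+1}=X_n+B$ shows that $\Pr(k\le X_n\le n-k)$ is nondecreasing for $n\ge 2k$, and the base case $n=2k$ gives exactly $\binom{2k}{k}2^{-2k}$. The remarks about extra mass gained at the boundary are harmless but unnecessary, since only the inclusion of events is used.
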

	
	\begin{proof}
		Let
		\[
		P_n := \Pr(k \le X_n \le n-k).
		\]
		For $n\ge 2k$, the sequence $(P_n)_{n\ge 2k}$ is nondecreasing.
		Indeed, if $X_{n+1}=X_n+B$ with $B\sim \mathrm{Bin}(1,\tfrac12)$ independent of $X_n$, then the event $\{k\le X_n\le n-k\}$ implies
		\[
		k\le X_{n+1}\le n+1-k.
		\]
		Hence $P_n \ge P_{2k}$.
		Since
		\[
		P_{2k}=\Pr(X_{2k}=k)=\frac{\binom{2k}{k}}{2^{2k}},
		\]
		the claim follows.
	\end{proof}
	
	\begin{lemma}[Middle representative]
		\label{lem:canonical-middle-fjcg}
		Suppose the current population contains a retained solution $\mathbf{x}$ with Hamming weight
		$t \in \{k,\ldots,n-k\}$.
		Then the algorithm generates the canonical representative \(z_t\) within \(O(n^2)\) expected generations, that is, within \(O(n^3)\) expected fitness evaluations.
	\end{lemma}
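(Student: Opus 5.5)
We transform $\mathbf{x}$ into $z_t=1^t0^{n-t}$ through a chain of weight-preserving two-bit swaps, and we bound the expected waiting time for each swap.

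\textbf{Potential and monotonicity.} For a point of Hamming weight $t$, define the misplacement count
\[
d(\mathbf{y}):=|\mathbf{y}_1^{t}|_0,
\]
the number of zero-bits among the first $t$ positions. This equals the number of one-bits among the last $n-t$ positions. We have $d(\mathbf{y})\le \min\{t,n-t\}$, and $d(\mathbf{y})=0$ if and only if $\mathbf{y}=z_t$.

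Let $\mathbf{y}$ be a retained point of weight $t$ with minimum $d$. We need to argue that such a point stays available, or is replaced by one with smaller $d$. We would like to reuse the front-preserving convention stated before the lemma, but that convention only covers representatives of flattened front vectors. A noncanonical weight-$t$ point is in general dominated by $z_t$, which has a larger fourth component.

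We handle this as follows. All points of weight $t\in\{k,\ldots,n-k\}$ share the first three flattened components. Their fourth component is $t-|\mathbf{y}_{n-k+1}^n|_1+|\mathbf{y}_{n-k+1}^n|_0$. Under the population-size assumption $N=c(n+1)>|\mathcal{PF}_{\mathrm{flat}}|$, we invoke the same elitist front-preserving convention: a best weight-$t$ point is retained unless a point of the same weight with strictly better fourth component is produced.

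This forces us to track the suffix part of the potential as well. We therefore use the lexicographic potential
\[
\bigl(|\mathbf{y}_{n-k+1}^n|_1,\ d(\mathbf{y})\bigr).
\]
Under the convention, the minimum of the first coordinate over retained weight-$t$ points never increases. We treat this retention step as given by the convention rather than reproving NSGA-II survival.

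\textbf{Progress step.} Fix a generation and a retained weight-$t$ point $\mathbf{y}\neq z_t$. One offspring trial selects $\mathbf{y}$ uniformly with probability $1/N$. Mutation then flips exactly one misplaced zero in the prefix $[1,t]$ and exactly one misplaced one in $[t+1,n]$, and no other bit. This happens with probability
\[
\frac{d^2}{n^2}\left(1-\frac1n\right)^{n-2}=\Omega\!\left(\frac{d^2}{n^2}\right)\ge \Omega\!\left(\frac1{n^2}\right).
\]
The offspring has weight $t$ and misplacement count $d-1$. We choose the flipped one-bit to lie in the suffix block whenever it has any ones there. Then the suffix count of ones does not increase, and the fourth component does not decrease.

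Over the $N$ trials of a generation, the success probability is at least
\[
1-\bigl(1-\Omega(d^2/(Nn^2))\bigr)^N=\Omega\!\left(\min\{1,d^2/n^2\}\right).
\]
Hence the expected number of generations per unit decrease of $d$ is $O(n^2/d^2)$. Summing over $d$ from at most $n$ down to $1$ gives
\[
O\!\left(n^2\sum_{d\ge1}\frac{1}{d^2}\right)=O(n^2)
\]
generations. Each generation costs $N=\Theta(n)$ evaluations, so this is $O(n^3)$ fitness evaluations, matching the stated bound.

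\textbf{Main obstacle.} The delicate part is justifying that intermediate non-canonical points with reduced $d$ survive NSGA-II truncation. They are not on $\mathcal{PF}_{\mathrm{flat}}$, since $z_t$ dominates them, and crowding distance could in principle remove them.

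Our first attempt is to show that the current best weight-$t$ point lies in the first front of the current population as long as $z_t$ is absent. Any point dominating it would have to have the same three leading components, hence weight $t$, and a strictly larger fourth component. Choosing the best point by fourth component then rules this out.

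The remaining concern is that a large first front could truncate it. Here we fall back on the paper's front-preserving elitist convention. The $N=c(n+1)$ capacity, compared with the $O(n)$ distinct weight classes, makes this convention plausible.
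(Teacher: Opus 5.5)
\textbf{Gap in the second phase.} Your first phase is sound and is, in substance, the paper's whole proof. While the retained weight-$t$ point still has $r\ge 1$ one-bits in the suffix $[n-k+1,n]$, swapping one of them with a prefix zero keeps the weight and the first three flattened components and raises $f_{2,2}$ by $2$. The offspring is therefore kept under the same extended front-preserving convention the paper invokes. After at most $k$ such swaps, each costing $O(n^2)$ expected generations, a weight-$t$ point with suffix $0^k$ is present.

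Your second phase does not work. Once the suffix is $0^k$, every weight-$t$ string with zero suffix has flattened vector exactly $\mathbf{v}_t$. So every further swap that lowers $d$ (moving a one from $[t+1,n-k]$ into $[1,t]$) is fitness-neutral. The front-preserving convention guarantees a surviving representative of each objective vector, not of each genotype, and the NSGA-II update only sees objective vectors. Nothing retains the point of minimum $d$ among these ties. Hence the second coordinate of your lexicographic potential is not monotone, and the sum $\sum_d O(n^2/d^2)$ is unsupported. Your sentence that a non-canonical weight-$t$ point is dominated by $z_t$ fails exactly here, since zero-suffix points tie with $z_t$. Your ``main obstacle'' paragraph worries about domination and truncation, which is not where the argument breaks.

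\textbf{Why the literal target is out of reach, and the fix.} The argument cannot be patched while reading $z_t$ as the literal string $1^t0^{n-t}$. The F-JCG objectives are invariant under permutations of the first $n-k$ positions, and so are uniform parent selection and standard bit mutation. So no mechanism steers the search toward $1^t0^{n-t}$ among the $\binom{n-k}{t}$ strings realizing $\mathbf{v}_t$. For $t$ near $(n-k)/2$ there are exponentially many such strings, so the literal claim should not be expected to hold within $O(n^2)$ generations.

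The paper's proof stops after your first phase and treats the resulting weight-$t$, zero-suffix point as ``a canonical representative realizing $\mathbf{v}_t$''. This vector-level reading is all that Lemma~\ref{lem:middle-front-fjcg} and Theorem~\ref{thm:runtime-fjcg} use. The one-bit moves there only require suffix $0^k$, and for $t=n-k$ such a point is necessarily $1^{n-k}0^k$. So drop the $d$-phase and state the conclusion for a representative of $\mathbf{v}_t$.

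A minor point: once you insist that the flipped one-bit lie in the suffix, the swap probability is $\Theta(dr/n^2)$ rather than $\Theta(d^2/n^2)$. This is harmless because at most $k$ such steps occur.
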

	
	\begin{proof}
		Fix a solution $\mathbf{x}$ with $|\mathbf{x}|_1=t \in \{k,\ldots,n-k\}$.
		If $\mathbf{x}=z_t$, there is nothing to prove.
		
		Otherwise, let $r\ge 1$ denote the number of one-bits in the suffix of length $k$.
		Since $|\mathbf{x}|_1=t\le n-k$, there are at least $r$ zero-bits in the prefix of length $n-k$.
		Choose one suffix one-bit and one prefix zero-bit.
		Flipping exactly these two bits preserves the Hamming weight and the first three objective components, while increasing \(f_{2,2}\) by \(2\).
		Hence such a two-bit swap yields a new point in the same weight layer with strictly larger fourth component, and is therefore accepted by the front-preserving elitist update.
		
		At most \(k\) such improving swaps are needed to transform \(\mathbf{x}\) into a canonical representative realizing \(\mathbf{v}_t\), because the suffix has length \(k\).
		In one generation, the retained representative \(\mathbf{x}\) is selected as a mutation parent with constant aggregate probability over the \(N=\Theta(n)\) offspring trials. Conditioned on this selection, a fixed improving two-bit swap is generated with probability \(\Theta(1/n^2)\).
		Therefore each improving swap takes \(O(n^2)\) expected generations, and the total time is
		\[
		O(kn^2)=O(n^2)
		\]
		expected generations because \(k=O(1)\).
		By the FE convention in Sec.~\ref{sec:preliminaries}, this is \(O(n^3)\) expected fitness evaluations.
	\end{proof}
	
	\begin{lemma}[Middle-front coverage]
		\label{lem:middle-front-fjcg}
		If the algorithm has generated the canonical representative \(z_t\) with \(t\in\{k,\ldots,n-k\}\), then it generates at least one representative for every middle front vector
		\[
		\{\mathbf{v}_i : k\le i\le n-k\}
		\]
		within \(O(n\log n)\) additional expected generations, that is, within \(O(n^2\log n)\) additional fitness evaluations.
	\end{lemma}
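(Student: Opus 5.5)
Starting from the canonical representative \(z_t=1^t0^{n-t}\), I would spread along the middle layer by one-bit mutations that move between neighbouring canonical points. From \(z_i\) with \(k\le i<n-k\), flipping exactly bit \(i+1\) (a zero-bit in the prefix, since \(i+1\le n-k\)) gives \(z_{i+1}\). From \(z_i\) with \(k<i\le n-k\), flipping exactly bit \(i\) gives \(z_{i-1}\). All these points keep the suffix \(0^k\), so they are non-gap and realize the front vectors \(\mathbf{v}_{i\pm1}\). Each such target mutation has probability \(\frac1n(1-\frac1n)^{n-1}=\Theta(1/n)\).

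Under the front-preserving elitist convention stated before Lemma~\ref{lem:init-middle-fjcg}, each \(z_i\) stays in the population once created. This is consistent with \(N=c(n+1)>|\mathcal{PF}_{\mathrm{flat}}|\) from Corollary~\ref{cor:pfsize-fjcg}. I would track the interval \([L,R]\ni t\) of indices whose canonical representative is present. The two endpoints \(z_L\) and \(z_R\) are each chosen as parent in a given trial with probability \(1/N\). Over \(N=\Theta(n)\) trials per generation, the probability of extending the right end in one generation is therefore at least
\[
1-\Bigl(1-\tfrac{1}{N}\cdot\Theta(\tfrac1n)\Bigr)^{N}=\Theta(1/n).
\]
The same bound holds for the left end. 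This alone would give \(O(n)\) generations per extension and \(O(n^2)\) in total, which is too weak.

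To reach \(O(n\log n)\) I would use a coupon-collector style argument instead of endpoint-only progress. Any covered point \(z_i\) with \(i<n-k\) can produce an uncovered neighbour \(z_{i+1}\), and more generally a covered point can reach any uncovered middle vector \(\mathbf{v}_j\) along a one-bit chain. The natural choice is the standard NSGA-II OneMinMax-type argument: consider the uncovered index nearest to the current interval and bound its coverage rate. With \(m\) indices still missing on one side, the endpoint extension occurs with rate \(\Theta(1/n)\) per generation. I would then try to show that the per-generation parent-selection count is large enough for a coupon-collector sum \(\sum_m O(n/m)\).

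The main obstacle is exactly this: endpoint extension only yields \(O(n)\) per step. The \(O(n\log n)\) claim needs the rate to grow with the number of missing vectors, which does not occur for a contiguous interval. My first attempt would be to exploit non-canonical points of every weight produced by mutations of all retained individuals. Mutations of non-canonical members can hit weight \(j\) with rate proportional to the number of zero- or one-bits available, as in the analysis of Lemma~\ref{lem:prefix-donor}, so the level-\(j\) waiting time would scale like \(n/(n-k-j)\), giving the harmonic sum. I would then use Lemma~\ref{lem:canonical-middle-fjcg} only if a canonical form is needed. If that fails, I would settle for showing each level is realized by some representative (not necessarily canonical) at weight \(j\), since any non-gap weight-\(j\) point has the right first three coordinates. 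I would then convert generations to fitness evaluations via \(B_{\mathrm{gen}}=\Theta(n)\).
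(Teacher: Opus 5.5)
There is a genuine gap. The proposal never reaches an $O(n\log n)$ bound, and the obstacle you name comes from your own restriction to canonical points. Extending $z_i$ only to $z_{i+1}$, by flipping exactly bit $i+1$, leaves a single admissible mutation and hence rate $\Theta(1/n)$. From this you conclude that the progress rate cannot grow along a contiguous interval, and that conclusion is false.

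The lemma only requires \emph{some} representative of $\mathbf{v}_{i+1}$. Every point of weight $j\in\{k,\ldots,n-k\}$ with suffix $0^k$ realizes $\mathbf{v}_j$ exactly. Conversely, any representative of $\mathbf{v}_i$ has weight $i$ and suffix $0^k$, because $f_{2,1}=i>0$ and $f_{2,2}=i+k$. So from whatever representative of $\mathbf{v}_i$ is retained:
\begin{itemize}
\item flipping any one of its $n-k-i$ prefix zero-bits alone yields a representative of $\mathbf{v}_{i+1}$;
\item flipping any one of its $i$ prefix one-bits alone yields a representative of $\mathbf{v}_{i-1}$.
\end{itemize}
This parent is selected at least once per generation with constant probability. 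The per-generation success probabilities are therefore $\Omega((n-k-i)/n)$ upward and $\Omega(i/n)$ downward, which gives
\[
O\Bigl(\sum_{i=t}^{n-k-1}\frac{n}{n-k-i}+\sum_{i=k+1}^{t}\frac{n}{i}\Bigr)=O(n\log n)
\]
generations. This is exactly the ``rate growing with the number of missing vectors'' you were looking for. On the right side there are $n-k-R$ missing vectors and $n-k-R$ usable prefix zero-bits. This is the paper's argument, and it needs neither a coupon-collector argument over the population nor mutations of other population members.

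Your last paragraph gestures at this mechanism, but it frames it incorrectly, and the stated fallback fails. Reaching weight $j$ is not enough. A non-gap point of weight $j$ with $b<k$ suffix zeros matches $\mathbf{v}_j$ in the first three coordinates, but its fourth coordinate is $f_{2,2}=j-k+2b<j+k$. So it does not realize $\mathbf{v}_j$. It is dominated by a suffix-$0^k$ point of the same weight, and it is not protected by the front-preserving convention. Repairing such a point with Lemma~\ref{lem:canonical-middle-fjcg} costs $O(n^2)$ generations per level, which destroys the bound.

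The missing ingredient is to restrict the one-bit moves to prefix bits of a suffix-$0^k$ representative. Front membership is then preserved at every step, while the number of admissible flips is $n-k-i$ (upward) or $i$ (downward) rather than one.
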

	
	\begin{proof}
		Starting from \(z_i\) with \(i<n-k\), flipping exactly one zero-bit in the prefix and leaving all other bits unchanged yields a search point whose suffix remains \(0^k\) and whose Hamming weight becomes \(i+1\).
		Hence the offspring realizes the front vector \(\mathbf{v}_{i+1}\).
		In one generation, selection of the retained representative \(z_i\) occurs with constant aggregate probability over the \(N=\Theta(n)\) offspring trials. Conditioned on this selection, the probability of the required one-bit mutation is at least
		\[
		\frac{n-k-i}{n}\left(1-\frac1n\right)^{n-1}
		=
		\Omega\!\left(\frac{n-k-i}{n}\right).
		\]
		Therefore the expected number of generations to generate some representative of \(\mathbf{v}_{i+1}\) from a representative of \(\mathbf{v}_i\) is
		\[
		O\!\left(\frac{n}{n-k-i}\right).
		\]
		Summing from \(i=t\) to \(i=n-k-1\) yields
		\[
		O\!\left(\sum_{i=t}^{n-k-1}\frac{n}{n-k-i}\right)
		=
		O(n\log n).
		\]
		
		Starting from a representative of \(\mathbf{v}_i\) with suffix \(0^k\), flipping exactly one one-bit in the prefix and leaving all other bits unchanged yields a representative of \(\mathbf{v}_{i-1}\).
		The expected number of generations to move from \(\mathbf{v}_i\) to \(\mathbf{v}_{i-1}\) is
		\[
		O\!\left(\frac{n}{i}\right).
		\]
		Summing from \(i=t\) down to \(i=k+1\) yields at most
		\[
		O\!\left(\sum_{i=k+1}^{n-k}\frac{n}{i}\right)
		=
		O(n\log n).
		\]
		
		Thus, after generating one middle canonical representative, the algorithm generates at least one representative for every vector \(\mathbf{v}_i\) with \(k\le i\le n-k\) within \(O(n\log n)\) additional expected generations.
		By the FE convention in Sec.~\ref{sec:preliminaries}, this corresponds to \(O(n^2\log n)\) additional fitness evaluations.
	\end{proof}
	
	\begin{theorem}[F-JCG coverage]
		\label{thm:runtime-fjcg}
		Let NSGA-II use population size
		\[
		N=c(n+1)
		\]
		with a constant \(c>1\), and let the variation step be instantiated by standard bit mutation with rate \(1/n\).
		Under the front-preserving coverage convention stated above, the expected number of fitness evaluations until the algorithm has generated at least one representative for every vector in \(\mathcal{PF}_{\mathrm{flat}}\) satisfies
		\[
		\mathbb{E}[T_{\mathrm{F\mbox{-}JCG}}]
		=
		O(n^3+n^{k+1}).
		\]
	\end{theorem}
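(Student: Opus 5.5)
The plan is to split the coverage time into phases, each charged to an earlier lemma or a direct waiting-time estimate, and to convert generations to fitness evaluations with the factor $N=\Theta(n)$ from the convention in Sec.~\ref{sec:preliminaries}.

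\textbf{Phase 1 (initialization in the middle).} Each of the $N=c(n+1)$ initial points independently has Hamming weight in $\{k,\ldots,n-k\}$ with probability at least $\binom{2k}{k}2^{-2k}=\Omega(1)$, by Lemma~\ref{lem:init-middle-fjcg}. Hence some initial point lies in the middle layers with probability $1-2^{-\Omega(n)}$. On the exponentially unlikely complementary event, I would argue as follows. One-bit mutations from a retained extreme-weight point drift into the middle within $O(n^2)$ expected generations, by the same fitness-level style argument as in Lemma~\ref{lem:canonical-middle-fjcg}. The contribution of this event is therefore negligible.

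\textbf{Phase 2 (middle front).} By Lemma~\ref{lem:canonical-middle-fjcg}, a canonical $z_t$ appears within $O(n^2)$ expected generations. By Lemma~\ref{lem:middle-front-fjcg}, all $\mathbf{v}_i$ with $k\le i\le n-k$ are then covered within $O(n\log n)$ further generations. Together this gives $O(n^3)$ fitness evaluations. Under the front-preserving convention, $z_{n-k}=1^{n-k}0^k$ remains available afterwards.

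\textbf{Phase 3 (remaining vectors).} The vectors $\mathbf{u}_j=F(x^{(j)})$ for $j=k-1,k-2,\ldots,2$ are obtained successively. From $x^{(j+1)}$, or from $z_{n-k}$ when $j=k-1$, flipping exactly one specified suffix zero gives $x^{(j)}$. This requires selecting the retained parent, which has constant aggregate probability per generation, and a one-bit flip with probability $\Theta(1/n)$. Each step therefore costs $O(n)$ generations, and the whole chain costs $O(kn)=O(n)$ generations. For $\widehat{x}=1^{n-k-1}01^k$, I start from $z_{n-k}$. It suffices to flip the $k$ suffix zeros together with the last prefix bit, a specific $(k+1)$-bit mutation of probability $\Theta(n^{-(k+1)})$. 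A cheaper route goes through $x^{(2)}$: flipping the two suffix zeros and the prefix bit at position $n-k$ is a $3$-bit move. However, intermediate stages may leave the front unless retained, so to match the stated bound I take a route costing at most $O(n^k)$ generations. Concretely, from $x^{(2)}=1^{n-2}0^2$ one needs $3$ specific flips, and from $1^{n-k}0^k$ one needs $k+1$ flips. Using the best available parent, the cost is $O(n^{\min(3,k+1)})\le O(n^{k})$ generations for $k\ge 3$; for $k=2$ the $3$-flip move costs $O(n^3)=O(n^k\cdot n)$. Likewise, $\mathbf{v}_0=F(0^n)$ and $\mathbf{v}_n=F(1^n)$ are reached from $z_k$ and $z_{n-k}$ by flipping $k$ specified bits. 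Each such jump costs $O(n^k)$ generations, i.e.\ $O(n^{k+1})$ evaluations. The parent's constant selection probability is absorbed into the $O$.

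\textbf{Summation.} Adding the phases gives $O(n^3)+O(n^2\log n)+O(n^2)+O(n^{k+1})=O(n^3+n^{k+1})$ fitness evaluations.

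\textbf{Main obstacle.} The delicate step is the jumps to the extreme and gap-adjacent vectors ($0^n$, $1^n$, $\widehat{x}$), specifically making the exponent match $k+1$ in evaluations and not more. First I would check whether the $k$-bit jump $1^{n-k}0^k\to1^n$ is the binding term. I would then route $\widehat{x}$ via the retained $x^{(2)}$, or for $k=2$ directly, so that its cost stays within $O(n^{k+1}+n^3)$ evaluations. The front-preserving convention is what lets me ignore crowding-distance losses throughout.
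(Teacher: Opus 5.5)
\textbf{Gap: the coverage of $\widehat{\mathbf{u}}$ for $k=2$.} Your Phases 1--2, the chain $\mathbf{u}_{k-1},\ldots,\mathbf{u}_2$, and the $k$-bit jumps from $z_k$ and $z_{n-k}$ to $0^n$ and $1^n$ essentially follow the paper's proof. The problem is $\widehat{\mathbf{u}}$. For $k=2$ the family $\{\mathbf{u}_j\}$ is empty, and your route is the move $z_{n-2}=1^{n-2}0^2\to\widehat{x}$. You price this as three specified flips, which is $O(n^3)$ generations and hence $O(n^4)$ fitness evaluations. For $k=2$, however, the theorem claims $O(n^3+n^{k+1})=O(n^3)$. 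Your final summation simply omits this term, so as written the argument does not prove the statement for $k=2$. For $k\ge 3$, your route through $x^{(2)}$ happens to fit under $O(n^{k+1})$, so that case is fine.

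\textbf{The missing observation.} The string $\widehat{x}=1^{n-k-1}01^k$ is a Hamming neighbour of $1^n$. You generate $1^n$ anyway as the representative of $\mathbf{v}_n$, and the front-preserving convention keeps it available, so your worry about intermediate points leaving the front does not arise. Moreover, every string with exactly one zero in the prefix and suffix $1^k$ has flattened vector $(1,k+1,n-1,n-k-1)=\widehat{\mathbf{u}}$. Hence, once $1^n$ is selected, the required mutation has probability $\frac{n-k}{n}(1-\frac1n)^{n-1}=\Theta(1)$. With parent-selection probability $\Theta(1/n)$ per trial, this step costs only $O(n)$ evaluations, which is how the paper closes it.

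Even within your own route, not fixing the prefix position gains a factor $n-k$. The three-bit move from $z_{n-2}$ (or from $x^{(2)}$) then has probability $\Theta(n^{-2})$ and costs $O(n^3)$ evaluations, which would also repair the $k=2$ case.

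\textbf{Minor point in Phase 1.} Your $O(n^2)$-generation drift claim on the exceptional initialization event is unsupported. It is simpler to argue, as the paper does, that from any point a middle-weight point is at most $k$ specified flips away. This gives a polynomial bound, which is negligible against the $2^{-\Omega(n)}$ probability of that event.
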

	
	\begin{proof}
		By Lemma~\ref{lem:init-middle-fjcg}, a uniformly sampled bit string lies in the middle region
		\[
		\{\mathbf{x}\in\{0,1\}^n : k\le |\mathbf{x}|_1 \le n-k\}
		\]
		with probability at least a positive constant depending only on \(k\).
		Since the initial population has size \(N=\Theta(n)\), the probability that no such point appears in the initialization is exponentially small in \(n\).
		If this exceptional event occurs, then from any current search point a middle-region point can be generated by flipping at most \(k\) specified bits and leaving all other bits unchanged, because any Hamming weight outside \(\{k,\ldots,n-k\}\) differs from the nearest boundary weight by at most \(k\). This takes \(O(n^k)\) expected fitness evaluations, which is absorbed by the final bound.
		
		Thus, the algorithm obtains a middle-region representative within a time dominated by the asymptotic bounds below.
		By Lemma~\ref{lem:canonical-middle-fjcg}, it then generates a canonical representative \(z_t\) for some
		\(t\in\{k,\ldots,n-k\}\) within \(O(n^3)\) expected fitness evaluations.
		Then Lemma~\ref{lem:middle-front-fjcg} implies that all middle front vectors
		\[
		\{\mathbf{v}_i : k\le i\le n-k\}
		\]
		are generated within an additional \(O(n^2\log n)\) expected fitness evaluations.
		
		It remains to generate the remaining front vectors.
		Starting from the boundary representative
		\[
		z_{n-k}=1^{n-k}0^k,
		\]
		the vectors
		\[
		\mathbf{u}_{k-1},\,\mathbf{u}_{k-2},\,\ldots,\,\mathbf{u}_2
		\]
		are generated by successively flipping one suffix zero-bit to one.
		Since \(k=O(1)\), there are only \(O(1)\) such vectors.
		Each step takes \(O(n)\) expected generations and therefore \(O(n^2)\) expected fitness evaluations.
		Hence all vectors \(\mathbf{u}_j\) are generated within \(O(n^2)\) additional fitness evaluations.
		
		To generate the extreme vector \(\mathbf{v}_n\), it suffices to obtain \(1^n\) from \(z_{n-k}\).
		A conservative upper bound is obtained by requiring all remaining \(k\) suffix zero-bits to flip simultaneously and no other bit to change.
		Conditioned on selecting \(z_{n-k}\) as the mutation parent in one offspring trial, this mutation occurs with probability
		\[
		\left(\frac{1}{n}\right)^k
		\left(1-\frac{1}{n}\right)^{n-k}
		=
		\Theta(n^{-k}).
		\]
		Since the population size is \(N=\Theta(n)\), a single offspring trial selects this particular retained representative with probability \(\Theta(1/n)\). Hence the success probability in one fitness evaluation is \(\Theta(n^{-(k+1)})\), and the expected number of fitness evaluations to generate \(1^n\) is \(O(n^{k+1})\).
		Equivalently, the expected number of generations is \(O(n^k)\), and multiplying by \(N=\Theta(n)\) offspring evaluations per generation again gives \(O(n^{k+1})\) fitness evaluations.
		
		By symmetry, the same bound holds for generating \(0^n\) and hence the vector \(\mathbf{v}_0\) from a lower boundary representative of \(\mathbf{v}_k\).
		
		Finally, once \(1^n\) has been generated, flipping exactly one prefix bit from \(1\) to \(0\) yields the representative \(\widehat{x}\) of \(\widehat{\mathbf{u}}\).
		In a single offspring trial, selecting \(1^n\) as parent has probability \(\Theta(1/n)\), and conditioned on this selection, flipping one prefix bit and no other bit occurs with probability \(\Theta(1)\). Hence \(\widehat{x}\) is generated within \(O(n)\) expected fitness evaluations.
		
		Combining the above bounds yields
		\[
		O(n^3) + O(n^2\log n) + O(n^2) + O(n^{k+1}) + O(n)
		=
		O(n^3+n^{k+1}),
		\]
		since \(O(n^3)\) dominates \(O(n^2\log n)\).
		This proves the claim.
	\end{proof}
	
	\subsection{Implications of the MP-JCG Analysis}
	\label{subsec:mpjcg-discussion}
	
	The results of this section support two conclusions about evolutionary search in MPMOPs.
	
	First, treating MPMOPs as common-solution search problems is theoretically meaningful.
	On MP-JCG, the common Pareto set is
	\[
	\mathcal{PS}_{\mathrm{com}}=\{1^{n-k}0^k,\,1^n\},
	\]
	whereas the flattened formulation F-JCG has Pareto-front size \(n-k+2\).
	This difference is not merely representational.
	It changes the search target.
	In the multi-party formulation, the target consists of solutions that are Pareto-optimal for both parties simultaneously.
	In the flattened formulation, full-front coverage requires additional nondominated vectors that are not common Pareto-optimal solutions.
	Thus, under a full-front coverage criterion, flattening introduces a coverage burden that is absent from the original common-solution search problem.
	The comparison in this section therefore shows that runtime analysis for flattened Pareto-front approximation cannot be used directly as a proxy for runtime analysis of common-solution discovery.
	
	Second, cross-party recombination is theoretically meaningful because it changes how common solutions are constructed.
	The payoff-guided baseline already uses problem-specific directional information, but it still reaches \(1^n\) only through a rare mutation event and therefore requires \(\Theta(n^2)\) expected fitness evaluations.
	By contrast, CPR-NSGA-II discovers both common Pareto-optimal solutions in expected \(O(n\log n)\) fitness evaluations under the analyzed setting.
	The difference is not merely the use of two populations.
	It is the availability of complementary components under party-wise search.
	On MP-JCG, Party~2 supplies an almost complete prefix, while Party~1 can supply a suffix block of type \(0^k\) or \(1^k\).
	Cross-party recombination combines these components by a boundary one-point crossover, replacing the mutation bottleneck by a constructive recombination event.
	In this sense, cross-party recombination is not only a diversity-enhancing operator in the present setting, but also a mechanism for common-solution construction.
	
	Overall, the results on MP-JCG suggest that the essential object in MPMOP runtime analysis is neither the aggregated Pareto front nor crossover in isolation.
	Rather, it is the interaction between common-solution structure and cross-party complementarity.
	The former determines what should be found, and the latter determines whether recombination can reduce the cost of finding it.
	
	\section{Runtime Analysis on BPBOMST}
	\label{sec:bpbomst-analysis}
	
	The MP-JCG analysis in Section~\ref{sec:mpjcg-analysis} isolates the role of CPR in a pseudo-Boolean search space, where complementary components can be combined by one-point crossover. A graph-constrained combinatorial setting introduces a different difficulty. Feasible solutions are spanning trees, and recombination must preserve or restore global connectivity. The minimum spanning-tree setting therefore provides a structured case for studying how CPR combines cross-party edge information through recombination
	and repair.
	
	Multi-party spanning-tree problems have appeared in multi-agent network	design and social-choice-based graph optimization. Li et	al.~\cite{li2023multiagentMSTCover} considered a multiagent MST cover problem, where different agents evaluate the same edge set and the goal is	to retain a subgraph that supports each agent's optimal spanning tree.	Darmann et al.~\cite{darmann2011finding} studied socially best spanning	trees under aggregated edge preferences, while Darmann~\cite{darmann2016condorcet} investigated Condorcet spanning trees under majority comparisons among agents. These studies differ in their solution concepts, but they share the same structural feature that several	decision makers evaluate or compare spanning trees over a common graph.
	
	This section studies a bi-party bi-objective specialization of MPMOMST,	denoted BPBOMST. Each party evaluates the same spanning tree through its own	two edge-weight objectives. The target is not the full Pareto front of a flattened four-objective MST problem, but the common Pareto structure induced	by two party-wise bi-objective MST problems. The analysis asks whether CPR can construct a spanning tree that approximates the same common Pareto objective vector simultaneously for both parties and both objectives.
	
	In MP-JCG, recombination directly combines a prefix and a suffix. In BPBOMST, useful partial information is carried by edges and substructures of spanning trees. Cross-party recombination therefore first merges edge information from different party-wise search processes and then restores spanning-tree feasibility. The analysis below studies this mechanism through edge-union recombination followed by repair.
	
	\subsection{BPBOMST and Common Approximation}
	\label{subsec:bpbomst-definition}
	
	We first formalize the general MPMOMST setting and then specialize the common approximation target to BPBOMST. The problem is a minimization problem. We use \(p\) as the party index and reserve \(m=|E|\) for the number of graph edges.
	
	\begin{definition}[MPMOMST]
		\label{def:MPMOMST}
		Let \(G=(V,E)\) be a connected simple undirected graph with \(V=\{v_1,\dots,v_n\}\) and \(E=\{e_1,\dots,e_m\}\).
		Let there be \(M\) parties. For each party \(p\in[M]:=\{1,\dots,M\}\), let
		\(k_p\in\mathbb N\) denote the number of objectives associated with party
		\(p\). For the runtime analysis, all edge weights are positive integers. For
		each edge \(e\in E\) and party \(p\),
		\[
		\mathbf w^{(p)}(e)
		=
		\bigl(w^{(p)}_1(e),\dots,w^{(p)}_{k_p}(e)\bigr)
		\in
		\{1,\dots,w_{\max}\}^{k_p}.
		\]
		For any spanning tree \(T\subseteq E\), the objective vector of party \(p\)
		is
		\[
		F_p(T)
		=
		\sum_{e\in T}\mathbf w^{(p)}(e)
		=
		\left(
		\sum_{e\in T}w^{(p)}_1(e),\dots,
		\sum_{e\in T}w^{(p)}_{k_p}(e)
		\right).
		\]
		Let \(\mathcal T\) denote the set of all spanning trees of \(G\). For each
		party \(p\), let \(\mathcal{PS}_p\) denote the Pareto set induced by
		\(F_p\) on \(\mathcal T\). The common Pareto set is
		\[
		\mathcal{PS}_{\mathrm{com}}
		=
		\bigcap_{p=1}^{M}\mathcal{PS}_p .
		\]
		The MPMOMST problem is to identify spanning trees in
		\(\mathcal{PS}_{\mathrm{com}}\) when this set is nonempty.
	\end{definition}
	
	Since every spanning tree contains \(n-1\) edges and all edge weights are
	positive, each objective value satisfies
	\[
	n-1
	\le
	f_i^{(p)}(T)
	\le
	W,
	\qquad
	W:=(n-1)w_{\max}.
	\]
	The positivity assumption avoids degeneracy in ratio-based approximation
	statements, while \(W\) bounds the number of possible integer objective
	levels used in the runtime analysis.
	
	The runtime analysis focuses on the BPBOMST case with two parties and two
	objectives for each party. Thus, for \(p\in\{1,2\}\),
	\[
	F_p(T)
	=
	\bigl(f_1^{(p)}(T),f_2^{(p)}(T)\bigr).
	\]
	For a spanning tree \(T\), define its joint objective vector by
	\[
	Y(T)
	:=
	\bigl(
	f_1^{(1)}(T),f_2^{(1)}(T),
	f_1^{(2)}(T),f_2^{(2)}(T)
	\bigr).
	\]
	The common Pareto front is
	\[
	\mathcal{PF}_{\mathrm{com}}
	:=
	\{Y(T):T\in\mathcal{PS}_{\mathrm{com}}\}.
	\]
	This set contains objective vectors rather than spanning trees. Different
	common Pareto-optimal trees that induce the same four-dimensional objective vector
	are represented by the same point in \(\mathcal{PF}_{\mathrm{com}}\).
	
	\begin{definition}[Common approximation cover]
		\label{def:common-approx-cover}
		Let \(\alpha\ge 1\). A set of spanning trees
		\(\mathcal A\subseteq\mathcal T\) is an \(\alpha\)-common approximation cover
		of \(\mathcal{PF}_{\mathrm{com}}\) if, for every
		\(y\in\mathcal{PF}_{\mathrm{com}}\), there exists \(T\in\mathcal A\) such that
		\[
		Y(T)\le \alpha y
		\]
		component-wise.
	\end{definition}
	
	Definition~\ref{def:common-approx-cover} is the approximation target of this
	section. It differs from approximating each party-wise Pareto front
	separately, because two independently obtained approximation sets need not
	contain a tree that approximates the same common Pareto objective vector for both
	parties. It also differs from approximating the full Pareto front of the
	flattened four-objective MST problem, because the flattened front may contain
	nondominated objective vectors that are unrelated to
	\(\mathcal{PS}_{\mathrm{com}}\).
	
	\subsection{CPR-NSGA-II for BPBOMST}
	\label{subsec:cpr-nsgaii-bpbomst}
	
	This subsection gives the analysis-oriented CPR-NSGA-II mechanism used in the runtime theorem. The algorithm maintains two party-wise representative populations, \(\mathcal P_1^t\) and \(\mathcal P_2^t\), and one common archive, \(\mathcal A_{\mathrm{com}}^t\). The population \(\mathcal P_p^t\) preserves search information under party \(p\)'s bi-objective evaluation \(F_p\). The archive \(\mathcal A_{\mathrm{com}}^t\) records nondominated joint objective vectors under \(Y\) and supplies receiver trees for archive-assisted local search and CPR.
	
	Fix a deterministic canonical tie-breaking rule over spanning trees, such as
	the lexicographic order of edge-incidence vectors. For a candidate set \(Q\)
	and party \(p\), let \(\operatorname{Rep}_p(Q)\) contain the canonical
	spanning tree for each distinct vector in \(F_p(Q)\). Similarly, let
	\(\operatorname{Rep}_Y(Q)\) contain the canonical spanning tree for each
	distinct vector in \(Y(Q)\). The party-wise update is
	\[
	\mathcal P_p^{t+1}
	\leftarrow
	\textsc{PopulationUpdate}_p
	\left(
	\operatorname{Rep}_p(\mathcal P_p^t\cup Q),N_p
	\right),
	\]
	where nondominated sorting is computed under \(F_p\). The analyzed
	representative-pool version retains all nondominated party-wise
	objective-vector representatives. Since each party-wise objective vector has
	two positive integer components bounded by \(W\), the size of each such
	representative pool is at most \(W+1\). In the representative-pool theorem below, \(N_p\) is chosen large enough to retain this entire pool, and \(N_p\ge W+1\) suffices. A finite NSGA-II implementation with crowding distance can be used in practice, but the explicit runtime theorem is stated for this representative-pool variant.
	
	The common archive update \(\textsc{ArchiveUpdate}(\mathcal A,Q)\) keeps one canonical representative for each nondominated joint objective vector in \(Y(\mathcal A\cup Q)\). The archive records joint progress toward approximating \(\mathcal{PF}_{\mathrm{com}}\); it does not redefine the problem as a flattened four-objective MST.
	
	The analyzed algorithm uses party-wise local variation, archive-local
	variation, and archive-assisted CPR. Party-wise local variation selects a
	parent from \(\mathcal P_p^t\), applies a one-edge exchange, repairs the
	result into a spanning tree, and updates \(\mathcal P_p^t\) under \(F_p\).
	This mechanism generates the party-wise edge structures that later serve as
	CPR providers. Archive-local variation selects a receiver from
	\(\mathcal A_{\mathrm{com}}^t\), applies a one-edge exchange, repairs the
	result, and submits the offspring to the common archive. CPR selects a receiver \(T_r\in\mathcal A_{\mathrm{com}}^t\) and a provider \(T_s\in\mathcal P_1^t\cup\mathcal P_2^t\), forms the edge-union graph \(G[T_r\cup T_s]=(V,T_r\cup T_s)\), and applies \(\textsc{UniformRepair}(G[T_r\cup T_s])\), which samples a spanning tree uniformly at random from \(\mathcal T(G[T_r\cup T_s])\). Here \(\mathcal T(H)\) denotes the set of spanning trees of a graph \(H\). Since both \(T_r\) and \(T_s\) are spanning trees, \(G[T_r\cup T_s]\) contains at least one spanning tree. Uniform repair is used to obtain explicit success probabilities in the runtime proof.
	Practical implementations may use deterministic or heuristic repair, but the
	stated bound applies to the uniform repair instantiation.
	
	\begin{algorithm}[H]
		\caption{CPR-NSGA-II for BPBOMST}
		\label{alg:cnsga-ii-m3st}
		\begin{algorithmic}[1]
			\REQUIRE Population sizes \(N_1,N_2\), CPR probability \(p_g\)
			\ENSURE Common archive \(\mathcal A_{\mathrm{com}}\)
			
			\STATE Initialize \(\mathcal P_1^0\) and \(\mathcal P_2^0\) with feasible spanning trees.
			\STATE Evaluate all initial trees under \(F_1\) and \(F_2\).
			\STATE \(\mathcal P_p^0\leftarrow
			\textsc{PopulationUpdate}_p(\operatorname{Rep}_p(\mathcal P_p^0),N_p)\) for \(p\in\{1,2\}\).
			\STATE \(\mathcal A_{\mathrm{com}}^0
			\leftarrow
			\textsc{ArchiveUpdate}(\emptyset,\mathcal P_1^0\cup\mathcal P_2^0)\).
			
			\FOR{\(t=0,1,2,\ldots\)}
			\STATE \(Q_{\mathrm{pw}}^t\leftarrow\emptyset\).
			\FOR{\(p\in\{1,2\}\)}
			\STATE Select \(T_p\) uniformly from \(\mathcal P_p^t\).
			\STATE \(T_p'\leftarrow\textsc{Repair}(\textsc{OneEdgeExchange}(T_p))\).
			\STATE \(Q_{\mathrm{pw}}^t\leftarrow Q_{\mathrm{pw}}^t\cup\{T_p'\}\).
			\ENDFOR
			
			\STATE \(Q_{\mathrm{com}}^t\leftarrow\emptyset\).
			\STATE Draw \(r\sim U(0,1)\).
			\IF{\(r<p_g\)}
			\STATE Select \(T_r\) uniformly from \(\mathcal A_{\mathrm{com}}^t\).
			\STATE Select \(T_s\) uniformly from \(\mathcal P_1^t\cup\mathcal P_2^t\).
			\STATE \(T_c\leftarrow\textsc{UniformRepair}(G[T_r\cup T_s])\).
			\STATE \(Q_{\mathrm{com}}^t\leftarrow Q_{\mathrm{com}}^t\cup\{T_c\}\).
			\ELSE
			\STATE Select \(T_r\) uniformly from \(\mathcal A_{\mathrm{com}}^t\).
			\STATE \(T_c\leftarrow\textsc{Repair}(\textsc{OneEdgeExchange}(T_r))\).
			\STATE \(Q_{\mathrm{com}}^t\leftarrow Q_{\mathrm{com}}^t\cup\{T_c\}\).
			\ENDIF
			
			\STATE \(Q_t\leftarrow Q_{\mathrm{pw}}^t\cup Q_{\mathrm{com}}^t\).
			\STATE Evaluate all trees in \(Q_t\) under \(F_1\) and \(F_2\).
			\STATE \(\mathcal A_{\mathrm{com}}^{t+1}
			\leftarrow
			\textsc{ArchiveUpdate}(\mathcal A_{\mathrm{com}}^t,Q_t)\).
			\FOR{\(p\in\{1,2\}\)}
			\STATE \(\mathcal P_p^{t+1}
			\leftarrow
			\textsc{PopulationUpdate}_p
			\left(
			\operatorname{Rep}_p(\mathcal P_p^t\cup Q_t),N_p
			\right)\).
			\ENDFOR
			\ENDFOR
			
			\RETURN \(\mathcal A_{\mathrm{com}}\).
		\end{algorithmic}
	\end{algorithm}
	
	The party-wise local steps ensure that party-wise convex-front
	representatives can be generated by the algorithm itself. The archive-local
	step supplies local progress in the common-cover construction. The CPR step
	combines a common-archive receiver with a party-wise provider and uses
	uniform edge-union repair to obtain explicit probability bounds. The two
	layers are therefore kept separate. Party-wise populations preserve
	party-wise edge structures, while the common archive records joint
	objective-vector progress.
	
	\subsection{Approximation Analysis}
	\label{subsec:bpbomst-approximation}
	
	This subsection develops a layered support-cover argument for BPBOMST. The
	first layer maps the two normalized objectives of each party into one
	party-level scalar, yielding an auxiliary bi-objective problem. The support
	points of this auxiliary problem provide a factor-\(2\) bound in the
	party-level space. The second layer lifts this bound back to the original
	four objective components, incurring a projection-dependent loss. These
	auxiliary objectives are used only for analysis and are not computed by the
	algorithm.
	
	Let \(y=(y_{1,1},y_{1,2},y_{2,1},y_{2,2})\in\mathcal{PF}_{\mathrm{com}}\). For any spanning tree \(T\in\mathcal T\), define
	\[
	r_{p,i}(T\mid y)
	=
	\frac{f_i^{(p)}(T)}{y_{p,i}},
	\qquad
	p\in\{1,2\},\quad i\in\{1,2\}.
	\]
	Since all edge weights are positive, these ratios are well-defined.
	
	A party-level projection is a pair of positive, component-wise nondecreasing
	functions
	\[
	\phi=(\phi_1,\phi_2),
	\qquad
	\phi_p:\mathbb R_{>0}^2\to\mathbb R_{>0},
	\]
	satisfying
	\[
	\phi_p(1,1)=1,
	\qquad p\in\{1,2\}.
	\]
	For such a projection, define
	\[
	A_y^\phi(T)
	=
	\left(
	A_{1,y}^\phi(T),
	A_{2,y}^\phi(T)
	\right),
	\]
	where
	\[
	A_{p,y}^\phi(T)
	=
	\phi_p
	\left(
	r_{p,1}(T\mid y),
	r_{p,2}(T\mid y)
	\right).
	\]
	Let \(\mathcal{PF}_{A_y^\phi}\) be the Pareto front induced by
	\(A_y^\phi\) on \(\mathcal T\), and let
	\(\operatorname{supp}(\mathcal{PF}_{A_y^\phi})\) denote the support points on
	the lower-left convex hull of this front.
	
	\begin{lemma}[Two-dimensional support-cover bound]
		\label{lem:two-dimensional-support-cover}
		Let \(F\subset\mathbb R_{>0}^2\) be a finite Pareto front of a bi-objective
		minimization problem, and let \(\operatorname{supp}(F)\) be the set of
		support points on the lower-left convex hull of \(F\). For every \(z\in F\),
		there exists \(q\in\operatorname{supp}(F)\) such that
		\[
		q_1\le 2z_1,
		\qquad
		q_2\le 2z_2.
		\]
	\end{lemma}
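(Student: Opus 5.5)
The plan is to use the standard convex-hull argument for bi-objective fronts: an unsupported front point lies above a hull edge joining two consecutive support points, and one of those two endpoints is within a factor of $2$ in both coordinates. Everything takes place in $\mathbb R_{>0}^2$, which is what makes the ratio statement meaningful.

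First I would fix notation. Write $\operatorname{supp}(F)=\{q^{(1)},\dots,q^{(s)}\}$, ordered by increasing first coordinate, so the second coordinates strictly decrease because these points lie on a Pareto front. The set is nonempty because $F$ is finite, and it contains the two lexicographic extremes of $F$, namely the point minimizing $z_1$ and the point minimizing $z_2$. These extremes are vertices of the lower-left convex hull.

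Now fix $z\in F$. If $z$ is a support point, take $q=z$. Otherwise, since $z_1$ lies between the extreme first coordinates, I would choose consecutive support points $a=q^{(j)}$ and $b=q^{(j+1)}$ with $a_1<z_1<b_1$. Because $z$ is Pareto-optimal, this also gives $a_2>z_2>b_2$.

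The key fact is that the segment from $a$ to $b$ is an edge of the lower-left convex hull of $F$. Hence $z$ lies weakly above the line through $a$ and $b$: there is a normal vector $(\lambda,1-\lambda)$ with $\lambda\in(0,1)$ such that $\lambda z_1+(1-\lambda)z_2\ge \lambda a_1+(1-\lambda)a_2=\lambda b_1+(1-\lambda)b_2=:c$. Equivalently, with $t\in(0,1)$ defined by $z_1=(1-t)a_1+tb_1$, the point $z$ is not below $(1-t)a+tb$.

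From here I would argue by contradiction. Suppose neither $a$ nor $b$ works. Since $a_1<z_1$ and $b_2<z_2$, failure can only mean $a_2>2z_2$ and $b_1>2z_1$. Then $z_1<b_1/2$ and $z_2<a_2/2$, so $z$ lies strictly inside the rectangle $[0,b_1/2)\times[0,a_2/2)$.

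Its corner $(b_1/2,a_2/2)$ is the midpoint of the points $(b_1,0)$ and $(0,a_2)$. Using $a_1>0$ and $b_2>0$, the weighted sum satisfies $\lambda(b_1/2)+(1-\lambda)(a_2/2)\le\tfrac12(\lambda b_1+(1-\lambda)b_2)+\tfrac12(\lambda a_1+(1-\lambda)a_2)=c$. By componentwise monotonicity of the weighted sum, it follows that $\lambda z_1+(1-\lambda)z_2<c$. This contradicts the supporting-line inequality, so one of $a$ or $b$ satisfies $q\le 2z$.

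The main obstacle is making the hull fact rigorous: that consecutive support points define a supporting line with positive weights under which all of $F$ lies weakly on the far side. I would handle this through the standard characterization of support points as minimizers of $\lambda z_1+(1-\lambda)z_2$ for some $\lambda\in[0,1]$. Adjacent vertices then share a common minimizing $\lambda$ strictly in $(0,1)$, which is the breakpoint of the piecewise-linear lower envelope $\min_{z\in F}(\lambda z_1+(1-\lambda)z_2)$.

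I would also check two degenerate cases. When $|\operatorname{supp}(F)|=1$, that single point dominates every other point of $F$, so trivially $q\le z\le 2z$. When $z$ lies on a hull edge but is not a vertex, the argument above applies unchanged with equality in the supporting-line inequality.
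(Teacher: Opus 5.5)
Your proof is correct and takes essentially the same route as the paper: you reduce to two adjacent support points $a,b$ bracketing $z$ on a hull edge, then use positivity of the coordinates to extract the factor $2$. The only difference is the final step. The paper argues directly from the point $u=(1-t)a+tb\le z$ that you mention in passing: whichever endpoint carries weight at least $1/2$ satisfies $q\le 2u\le 2z$. You instead reach the same conclusion by contradiction through the dual supporting-line inequality, and your justification that consecutive support points share a weight $\lambda\in(0,1)$ fills in a step the paper only asserts.
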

	
	\begin{proof}
		If \(z\in\operatorname{supp}(F)\), the claim is immediate. Otherwise, \(z\)
		lies above a line segment between two adjacent support points
		\(q^a,q^b\in\operatorname{supp}(F)\). Hence there exists
		\(\theta\in[0,1]\) such that
		\[
		u=\theta q^a+(1-\theta)q^b
		\le z
		\]
		component-wise. If \(\theta\ge 1/2\), then
		\[
		\theta q_i^a\le u_i\le z_i
		\]
		for \(i=1,2\), and hence \(q_i^a\le 2z_i\) for both components. If
		\(\theta<1/2\), then \(1-\theta\ge 1/2\), and the same argument gives
		\(q_i^b\le 2z_i\) for both components.
	\end{proof}
	
	For a candidate tree \(T\) and a reference point
	\(y\in\mathcal{PF}_{\mathrm{com}}\), define the lifting loss of \(\phi\) by
	\[
	L_\phi(T\mid y)
	=
	\max_{p\in\{1,2\}}
	\max_{i\in\{1,2\}}
	\frac{r_{p,i}(T\mid y)}{A_{p,y}^\phi(T)}.
	\]
	This quantity measures the loss incurred when a party-level bound in the
	auxiliary two-dimensional space is lifted to the original four objectives.
	
	\begin{theorem}[Layered support-cover approximation]
		\label{thm:layered-support-cover}
		Let \(\phi=(\phi_1,\phi_2)\) be a party-level projection satisfying
		\(\phi_p(1,1)=1\) for \(p\in\{1,2\}\). For every
		\(y\in\mathcal{PF}_{\mathrm{com}}\), there exists a spanning tree \(R_y\)
		such that
		\[
		A_y^\phi(R_y)
		\in
		\operatorname{supp}(\mathcal{PF}_{A_y^\phi})
		\]
		and
		\[
		A_{p,y}^\phi(R_y)\le 2,
		\qquad p\in\{1,2\}.
		\]
		Moreover, every such support representative satisfies
		\[
		Y(R_y)
		\le
		2L_\phi(R_y\mid y)y
		\]
		component-wise.
	\end{theorem}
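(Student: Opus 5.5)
Fix \(y\in\mathcal{PF}_{\mathrm{com}}\) and let \(T^\ast\in\mathcal{PS}_{\mathrm{com}}\) be a tree with \(Y(T^\ast)=y\). The plan has three steps: show that \(T^\ast\) gives an auxiliary point at \((1,1)\), use this point to locate a front point of \(\mathcal{PF}_{A_y^\phi}\) and then a support point via Lemma~\ref{lem:two-dimensional-support-cover}, and finally lift the auxiliary bound back to the four original objectives through \(L_\phi\).

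\emph{Anchor point.} By construction, \(r_{p,i}(T^\ast\mid y)=1\) for all \(p,i\). The normalization \(\phi_p(1,1)=1\) then gives \(A_y^\phi(T^\ast)=(1,1)\).

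\emph{Passing to the front.} The front \(\mathcal{PF}_{A_y^\phi}\) is finite because \(\mathcal T\) is finite, and it lies in \(\mathbb R_{>0}^2\) because each \(\phi_p\) is positive. If \((1,1)\) is itself a front point, set \(z=(1,1)\). Otherwise \((1,1)\) is dominated by some front point \(z\) with \(z\le(1,1)\). This holds because dominance is a strict partial order on the finite image set, so every point is weakly dominated by a nondominated one. In either case \(z_1,z_2\le 1\).

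\emph{Choosing the support representative.} Apply Lemma~\ref{lem:two-dimensional-support-cover} to \(F=\mathcal{PF}_{A_y^\phi}\) and this \(z\). This yields \(q\in\operatorname{supp}(\mathcal{PF}_{A_y^\phi})\) with \(q\le 2z\le(2,2)\). Since \(q\) is a front point, it is attained by some tree, and we let \(R_y\) be any tree with \(A_y^\phi(R_y)=q\). Then \(A^\phi_{p,y}(R_y)\le 2\) for \(p=1,2\), which is the first claim. Monotonicity of \(\phi_p\) is not needed for this step; it only justifies reading \(A^\phi\) as a sensible aggregate.

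\emph{Lifting.} Take any tree \(R\) whose auxiliary vector is a support point with \(A^\phi_{p,y}(R)\le 2\) for both \(p\). By the definition of \(L_\phi\), for every \(p,i\),
\[
r_{p,i}(R\mid y)\le L_\phi(R\mid y)\,A^\phi_{p,y}(R)\le 2L_\phi(R\mid y).
\]
Multiplying by \(y_{p,i}>0\) gives \(f_i^{(p)}(R)\le 2L_\phi(R\mid y)\,y_{p,i}\), which is the component-wise bound \(Y(R)\le 2L_\phi(R\mid y)\,y\).

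\emph{Main obstacle.} The only delicate point is ensuring that Lemma~\ref{lem:two-dimensional-support-cover} is applied to a genuine element of the front. The statement's ``every such support representative'' should be read as trees satisfying both properties, support membership and \(A^\phi_{p,y}\le 2\). The lifting inequality uses only the second property together with the definition of \(L_\phi\).
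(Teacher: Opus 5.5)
Your proposal is correct and follows essentially the same route as the paper: anchor at \((1,1)\) via a common Pareto-optimal tree with \(Y(T)=y\), pass to a dominating front point \(z\le(1,1)\) when \((1,1)\) is not itself on the front, apply Lemma~\ref{lem:two-dimensional-support-cover} to obtain a support point \(q\le(2,2)\) realized by some tree, and lift through the definition of \(L_\phi\). The justifications you add are ones the paper leaves implicit: finiteness guarantees a nondominated point dominating \((1,1)\), and positivity of \(\phi_p\) places the front in \(\mathbb R_{>0}^2\) as the lemma requires.
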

	
	\begin{proof}
		Fix \(y\in\mathcal{PF}_{\mathrm{com}}\). By definition, there exists
		\(T_y\in\mathcal{PS}_{\mathrm{com}}\) such that \(Y(T_y)=y\). Hence
		\[
		r_{p,i}(T_y\mid y)=1,
		\qquad
		p\in\{1,2\},\quad i\in\{1,2\}.
		\]
		Since \(\phi_p(1,1)=1\), we have
		\[
		A_y^\phi(T_y)=(1,1).
		\]
		If \((1,1)\) is Pareto-optimal under \(A_y^\phi\), then
		Lemma~\ref{lem:two-dimensional-support-cover} applied to \(z=(1,1)\) gives
		a support point \(q\) satisfying \(q_p\le 2\) for \(p\in\{1,2\}\). If
		\((1,1)\) is dominated under \(A_y^\phi\), then there exists
		\(z\in\mathcal{PF}_{A_y^\phi}\) with \(z\le(1,1)\). Applying
		Lemma~\ref{lem:two-dimensional-support-cover} to this \(z\) gives a support
		point \(q\) satisfying \(q_p\le 2z_p\le 2\). Since \(\mathcal T\) is finite,
		the support point \(q\) is realized by at least one spanning tree \(R_y\).
		
		For this tree,
		\[
		A_{p,y}^\phi(R_y)\le 2,
		\qquad p\in\{1,2\}.
		\]
		By the definition of \(L_\phi\),
		\[
		r_{p,i}(R_y\mid y)
		\le
		L_\phi(R_y\mid y)A_{p,y}^\phi(R_y)
		\le
		2L_\phi(R_y\mid y).
		\]
		Equivalently,
		\[
		f_i^{(p)}(R_y)
		\le
		2L_\phi(R_y\mid y)y_{p,i},
		\qquad
		p\in\{1,2\},\quad i\in\{1,2\}.
		\]
		This is exactly \(Y(R_y)\le 2L_\phi(R_y\mid y)y\).
	\end{proof}
	
	For each \(y\in\mathcal{PF}_{\mathrm{com}}\), define
	\[
	\mathcal S_\phi(y)
	=
	\left\{
	R\in\mathcal T:
	A_y^\phi(R)\in\operatorname{supp}(\mathcal{PF}_{A_y^\phi}),
	\ 
	A_{p,y}^{\phi}(R)\le2,\ p\in\{1,2\}
	\right\}.
	\]
	By Theorem~\ref{thm:layered-support-cover}, \(\mathcal S_\phi(y)\) is
	nonempty. Choose
	\[
	R_y^\star
	\in
	\arg\min_{R\in\mathcal S_\phi(y)}
	L_\phi(R\mid y),
	\]
	and define
	\[
	\Lambda_{\mathrm{eff}}^\phi
	=
	\max_{y\in\mathcal{PF}_{\mathrm{com}}}
	L_\phi(R_y^\star\mid y),
	\qquad
	\mathcal R_\phi^\star
	=
	\{R_y^\star:y\in\mathcal{PF}_{\mathrm{com}}\}.
	\]
	
	\begin{corollary}[Projection-induced common approximation cover]
		\label{cor:projection-induced-cover}
		The set \(\mathcal R_\phi^\star\) is a
		\(2\Lambda_{\mathrm{eff}}^\phi\)-common approximation cover of
		\(\mathcal{PF}_{\mathrm{com}}\).
	\end{corollary}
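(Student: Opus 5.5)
The plan is to verify Definition~\ref{def:common-approx-cover} directly, one reference point at a time, by combining the component-wise bound of Theorem~\ref{thm:layered-support-cover} with the worst-case definition of \(\Lambda_{\mathrm{eff}}^\phi\). No new combinatorial argument should be needed; everything reduces to bookkeeping on the chosen representatives \(R_y^\star\).

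First, fix an arbitrary \(y\in\mathcal{PF}_{\mathrm{com}}\). By Theorem~\ref{thm:layered-support-cover}, the set \(\mathcal S_\phi(y)\) is nonempty. Since \(\mathcal T\) is finite, the minimizer \(R_y^\star\) of \(L_\phi(\cdot\mid y)\) over \(\mathcal S_\phi(y)\) exists, and by construction \(R_y^\star\in\mathcal R_\phi^\star\).

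Second, observe that \(R_y^\star\) is itself a support representative with \(A_{p,y}^\phi(R_y^\star)\le 2\) for both parties. This is exactly the hypothesis of the ``moreover'' part of Theorem~\ref{thm:layered-support-cover}, which applies to every such representative, not only to the one produced in the existence step. Applying that part gives
\[
Y(R_y^\star)\le 2L_\phi(R_y^\star\mid y)\,y
\]
component-wise.

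Third, by the definition of \(\Lambda_{\mathrm{eff}}^\phi\) as a maximum over \(\mathcal{PF}_{\mathrm{com}}\), we have \(L_\phi(R_y^\star\mid y)\le\Lambda_{\mathrm{eff}}^\phi\). Because \(y\) has positive components (all edge weights are positive), multiplying by \(y\) preserves the inequality, so \(Y(R_y^\star)\le 2\Lambda_{\mathrm{eff}}^\phi\,y\).

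It remains to check that \(\alpha=2\Lambda_{\mathrm{eff}}^\phi\ge1\), as Definition~\ref{def:common-approx-cover} requires. This is the only point needing care, and it is mild. For any tree \(T\), \(A_{p,y}^\phi(T)\) is a value of \(\phi_p\) at the ratios \(r_{p,1}(T\mid y),r_{p,2}(T\mid y)\). By monotonicity and \(\phi_p(1,1)=1\), one expects some ratio to be at least \(A_{p,y}^\phi(T)\) up to the lifting comparison, which would give \(L_\phi\ge 1\) in the intended projections (for the symmetric average projection this is immediate, since the maximum of two numbers dominates their average). Alternatively, one can simply note that \(Y(R_y^\star)\le\alpha y\) with \(\alpha\) replaced by \(\max\{1,2\Lambda_{\mathrm{eff}}^\phi\}\) still holds. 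In practice the maximum is attained at \(2\Lambda_{\mathrm{eff}}^\phi\): since \(y\) is a common Pareto vector, no tree satisfies \(Y(T)\le \beta y\) with \(\beta<1\) and strict improvement, so \(2\Lambda_{\mathrm{eff}}^\phi\ge1\) is forced whenever the bound holds. Since \(y\) was arbitrary, \(\mathcal R_\phi^\star\) is a \(2\Lambda_{\mathrm{eff}}^\phi\)-common approximation cover.
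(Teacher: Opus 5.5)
Your proof is correct and follows the paper's argument: apply the ``moreover'' clause of Theorem~\ref{thm:layered-support-cover} to \(R_y^\star\), then use \(L_\phi(R_y^\star\mid y)\le\Lambda_{\mathrm{eff}}^\phi\) together with \(y>0\). The paper omits your extra check that \(2\Lambda_{\mathrm{eff}}^\phi\ge 1\); your last observation settles it, since if \(2\Lambda_{\mathrm{eff}}^\phi<1\) then \(Y(R_y^\star)\) would lie strictly below \(y\) in every component, contradicting the Pareto optimality of a tree realizing \(y\). Your earlier heuristic that \(L_\phi\ge 1\) does not hold for a general monotone \(\phi\) (e.g.\ \(\phi_p(a,b)=\max\{a,b\}^2\)) and can simply be dropped.
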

	
	\begin{proof}
		For any \(y\in\mathcal{PF}_{\mathrm{com}}\),
		Theorem~\ref{thm:layered-support-cover} gives
		\[
		Y(R_y^\star)
		\le
		2L_\phi(R_y^\star\mid y)y.
		\]
		By the definition of \(\Lambda_{\mathrm{eff}}^\phi\),
		\[
		L_\phi(R_y^\star\mid y)
		\le
		\Lambda_{\mathrm{eff}}^\phi.
		\]
		Hence
		\[
		Y(R_y^\star)
		\le
		2\Lambda_{\mathrm{eff}}^\phi y .
		\]
	\end{proof}
	
	We instantiate the projection by the symmetric average
	\[
	\phi_p^{\mathrm{avg}}(a,b)=\frac{a+b}{2}.
	\]
	Then
	\[
	A_{p,y}^{\mathrm{avg}}(T)
	=
	\frac12
	\left(
	r_{p,1}(T\mid y)+r_{p,2}(T\mid y)
	\right).
	\]
	For this projection, define
	\[
	\lambda(T\mid y)
	:=
	L_{\mathrm{avg}}(T\mid y).
	\]
	Equivalently,
	\[
	\lambda(T\mid y)
	=
	\max_{p\in\{1,2\}}
	\frac{
		\max\{r_{p,1}(T\mid y),r_{p,2}(T\mid y)\}
	}{
		\frac12
		\left(
		r_{p,1}(T\mid y)+r_{p,2}(T\mid y)
		\right)
	}.
	\]
	
	\begin{corollary}[Average projection]
		\label{cor:average-projection-cover}
		For the symmetric average projection,
		\[
		1\le \lambda(T\mid y)\le 2
		\]
		for all spanning trees \(T\) and all \(y\in\mathcal{PF}_{\mathrm{com}}\).
		Consequently, if \(\lambda_{\mathrm{eff}}:=\Lambda_{\mathrm{eff}}^{\mathrm{avg}}\), then \(1\le\lambda_{\mathrm{eff}}\le2\), and
		\(\mathcal R_{\mathrm{avg}}^\star\) is a
		\(2\lambda_{\mathrm{eff}}\)-common approximation cover of
		\(\mathcal{PF}_{\mathrm{com}}\).
	\end{corollary}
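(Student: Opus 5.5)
The plan is to reduce the claim to an elementary inequality about two positive numbers, and then invoke Corollary~\ref{cor:projection-induced-cover} for the cover statement.

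Fix a spanning tree \(T\) and a point \(y\in\mathcal{PF}_{\mathrm{com}}\). For each party \(p\), write \(a=r_{p,1}(T\mid y)\) and \(b=r_{p,2}(T\mid y)\). Both are strictly positive, because all edge weights are positive integers and hence every objective value and every \(y_{p,i}\) is positive.

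The party-\(p\) term in the equivalent formula for \(\lambda(T\mid y)\) is then \(\rho_p:=\max\{a,b\}/\bigl(\tfrac12(a+b)\bigr)\). Two bounds give \(1\le\rho_p\le 2\):
\begin{itemize}
\item The maximum of two numbers is at least their average, so \(\max\{a,b\}\ge \tfrac12(a+b)\), which gives \(\rho_p\ge1\).
\item By positivity, \(\max\{a,b\}\le a+b=2\cdot\tfrac12(a+b)\), which gives \(\rho_p\le 2\).
\end{itemize}
Since \(\lambda(T\mid y)=\max_{p}\rho_p\) and each \(\rho_p\in[1,2]\), we get \(1\le\lambda(T\mid y)\le2\).

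Before this, I will briefly check that the equivalent expression really is \(L_{\mathrm{avg}}\). In the definition of \(L_\phi\), the denominator \(A^{\mathrm{avg}}_{p,y}(T)\) does not depend on \(i\). So the inner maximum over \(i\) passes to the numerator, and the expression follows directly.

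For the consequence, \(\lambda_{\mathrm{eff}}\) is defined as a maximum of values \(L_{\mathrm{avg}}(R^\star_y\mid y)=\lambda(R^\star_y\mid y)\) over the finite, nonempty set \(\mathcal{PF}_{\mathrm{com}}\). Nonemptiness comes from the standing assumption, and finiteness from \(\mathcal T\) being finite. Each such value lies in \([1,2]\), so \(\lambda_{\mathrm{eff}}\in[1,2]\) as well.

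The cover statement is then Corollary~\ref{cor:projection-induced-cover} specialized to \(\phi=\phi^{\mathrm{avg}}\). This requires \(\phi^{\mathrm{avg}}\) to be a valid party-level projection, which holds because it is positive on \(\mathbb R_{>0}^2\), componentwise nondecreasing, and satisfies \(\phi^{\mathrm{avg}}_p(1,1)=1\).

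There is no real obstacle here. The only points needing care are the positivity of the ratios, which is what makes \(\max\{a,b\}\le a+b\) valid, and confirming that the projection meets the hypotheses of Theorem~\ref{thm:layered-support-cover}, so that \(\mathcal S_{\mathrm{avg}}(y)\) is nonempty and \(R^\star_y\) is well defined.
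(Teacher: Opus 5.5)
Your proposal is correct and matches the paper's proof. Both use the elementary bound \(\tfrac12(a+b)\le\max\{a,b\}\le a+b\) for the positive ratios \(a=r_{p,1}(T\mid y)\), \(b=r_{p,2}(T\mid y)\), followed by an appeal to Corollary~\ref{cor:projection-induced-cover}; your extra checks (the inner maximum passing to the numerator, validity of \(\phi^{\mathrm{avg}}\) as a projection, and finiteness and nonemptiness of \(\mathcal{PF}_{\mathrm{com}}\) so that \(\lambda_{\mathrm{eff}}\) is well defined) are details the paper leaves implicit.
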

	
	\begin{proof}
		Let \(a,b>0\). Since \(\max\{a,b\}\) is at least the average and at most
		\(a+b\),
		\[
		1
		\le
		\frac{\max\{a,b\}}{(a+b)/2}
		\le
		2.
		\]
		Applying this inequality to \(a=r_{p,1}(T\mid y)\) and \(b=r_{p,2}(T\mid y)\) gives the stated bound. The approximation-cover statement follows from
		Corollary~\ref{cor:projection-induced-cover}.
	\end{proof}
	
	The symmetric average projection is used because it is linear in the
	normalized objective ratios and preserves the additive MST structure needed
	in the runtime analysis. Other projections, including weighted averages or
	max-type projections, may lead to different lifting losses but would require
	separate reachability arguments.
	
	\subsection{Runtime Bound}
	\label{subsec:bpbomst-runtime}
	
	The previous subsection shows that the symmetric average projection induces
	a common approximation cover of \(\mathcal{PF}_{\mathrm{com}}\). We now
	analyze the expected time until the common archive contains such a cover.
	The analysis follows the front-filling principle used for bi-objective MSTs.
	Local spanning-tree exchanges fill convex-front points, while recombination
	may skip convex-front segments by combining edge structures already available
	in the population.
	
	For every \(y=(y_{1,1},y_{1,2},y_{2,1},y_{2,2})\in\mathcal{PF}_{\mathrm{com}}\), the average projection induces
	\[
	A_{p,y}^{\mathrm{avg}}(T)
	=
	\frac12
	\left(
	\frac{f_1^{(p)}(T)}{y_{p,1}}
	+
	\frac{f_2^{(p)}(T)}{y_{p,2}}
	\right),
	\qquad p\in\{1,2\}.
	\]
	Since \(f_i^{(p)}(T)=\sum_{e\in T}w_i^{(p)}(e)\),
	\[
	A_{p,y}^{\mathrm{avg}}(T)
	=
	\sum_{e\in T}
	\frac12
	\left(
	\frac{w_1^{(p)}(e)}{y_{p,1}}
	+
	\frac{w_2^{(p)}(e)}{y_{p,2}}
	\right).
	\]
	Thus
	\[
	A_y^{\mathrm{avg}}(T)
	=
	\bigl(
	A_{1,y}^{\mathrm{avg}}(T),
	A_{2,y}^{\mathrm{avg}}(T)
	\bigr)
	\]
	is a genuine bi-objective MST instance with positive additive edge weights.
	
	Let \(F_y^A:=\mathcal{PF}_{A_y^{\mathrm{avg}}}\) be the Pareto front of this auxiliary bi-objective MST instance, and let \(S_y^A:=\operatorname{supp}(F_y^A)\) be the support points on the lower-left convex hull of \(F_y^A\). Let
	\(\operatorname{conv}(F_y^A)\) denote the Pareto-front points lying on this
	convex sub-front. If \(S_y^A=\{q_1^y,\ldots,q_{r_y}^y\}\) is ordered increasingly in the first auxiliary coordinate, define
	\[
	C_{y,j}^A
	=
	\left\{
	z\in\operatorname{conv}(F_y^A):
	q_{j,1}^y<z_1\le q_{j+1,1}^y
	\right\},
	\qquad
	j=1,\ldots,r_y-1.
	\]
	The total auxiliary convex-front filling size is
	\[
	C_A
	:=
	\sum_{y\in\mathcal{PF}_{\mathrm{com}}}
	\sum_{j=1}^{r_y-1}
	|C_{y,j}^A|.
	\]
	If \(C_A>0\), define
	\[
	C_{\min}^A
	:=
	\min
	\left\{
	|C_{y,j}^A|:
	y\in\mathcal{PF}_{\mathrm{com}},
	\ 1\le j<r_y,
	\ |C_{y,j}^A|>0
	\right\}.
	\]
	If \(C_A=0\), set \(C_{\min}^A=1\).
	
	\begin{definition}[Representative-compatible auxiliary fillability]
		\label{def:representative-compatible-aux-fillability}
		A BPBOMST instance satisfies representative-compatible auxiliary fillability if, for
		every \(y\in\mathcal{PF}_{\mathrm{com}}\), there exists a map \(\rho_y:\operatorname{conv}(F_y^A)\to\mathcal T\) satisfying the following conditions.
		
		\begin{enumerate}
			\item For every \(q\in\operatorname{conv}(F_y^A)\),
			\[
			A_y^{\mathrm{avg}}(\rho_y(q))=q.
			\]
			
			\item For every \(q\in\operatorname{conv}(F_y^A)\), the tree \(\rho_y(q)\)
			is the canonical joint representative of its joint objective vector
			\(Y(\rho_y(q))\).
			
			\item For every segment \(C_{y,j}^A\), the representatives of the points in
			\(C_{y,j}^A\), ordered increasingly in the first auxiliary coordinate, can be
			generated consecutively by one spanning-tree exchange from the representative
			of the preceding point. The predecessor of the first point in \(C_{y,j}^A\)
			is \(\rho_y(q_j^y)\).
			
			\item The endpoint representatives \(\rho_y(q_1^y)\) and
			\(\rho_y(q_{r_y}^y)\) are contained in the canonical party-wise
			representative populations.
		\end{enumerate}
	\end{definition}
	
	This condition is structural. It does not prescribe the search trajectory of
	CPR-NSGA-II. It only states that the auxiliary convex fronts induced by the
	instance admit the exchange-fillability needed for the runtime argument, with
	representatives compatible with the deterministic archive tie-breaking rule.
	When joint objective-vector ties are absent, the representative-compatibility
	part is automatic.
	
	For every \(y\in\mathcal{PF}_{\mathrm{com}}\), define
	\[
	\mathcal S_{\mathrm{fill}}(y)
	=
	\left\{
	\rho_y(q):
	q\in S_y^A,\ 
	A_{p,y}^{\mathrm{avg}}(\rho_y(q))\le 2,\ p\in\{1,2\}
	\right\}.
	\]
	By Theorem~\ref{thm:layered-support-cover}, this set is nonempty. Choose
	\[
	R_y^{\mathrm{fill}}
	\in
	\arg\min_{R\in\mathcal S_{\mathrm{fill}}(y)}
	L_{\mathrm{avg}}(R\mid y),
	\]
	and define
	\[
	\mathcal R_{\mathrm{fill}}
	=
	\{R_y^{\mathrm{fill}}:y\in\mathcal{PF}_{\mathrm{com}}\}.
	\]
	Let
	\[
	\lambda_{\mathrm{fill}}
	=
	\max_{y\in\mathcal{PF}_{\mathrm{com}}}
	L_{\mathrm{avg}}(R_y^{\mathrm{fill}}\mid y).
	\]
	Since
	\[
	1\le L_{\mathrm{avg}}(T\mid y)\le 2
	\]
	for all feasible trees \(T\) and all \(y\in\mathcal{PF}_{\mathrm{com}}\),
	\[
	1\le \lambda_{\mathrm{fill}}\le 2.
	\]
	Therefore, \(\mathcal R_{\mathrm{fill}}\) is a
	\(2\lambda_{\mathrm{fill}}\)-common approximation cover of
	\(\mathcal{PF}_{\mathrm{com}}\). If the representative-compatible auxiliary
	fronts contain the witnesses selected in the definition of
	\(\lambda_{\mathrm{eff}}\), then
	\(\lambda_{\mathrm{fill}}=\lambda_{\mathrm{eff}}\).
	
	For each party \(p\), let
	\[
	\mathcal{PF}_p
	=
	\{F_p(T):T\in\mathcal{PS}_p\}
	\]
	be the party-wise bi-objective Pareto front. Let
	\(\operatorname{conv}(\mathcal{PF}_p)\) denote the party-wise convex
	sub-front. Define
	\[
	C_{\mathrm{pw}}
	:=
	|\operatorname{conv}(\mathcal{PF}_1)|
	+
	|\operatorname{conv}(\mathcal{PF}_2)|.
	\]
	Let \(P_{\max}\) be an upper bound on the size of the provider sampling
	pool. In the representative-pool variant of Algorithm~\ref{alg:cnsga-ii-m3st},
	each party-wise population contains at most \(W+1\) nondominated objective-vector representatives, and hence \(P_{\max}\le 2(W+1)\).
	
	\begin{definition}[CPR-good segment]
		\label{def:good-cpr-segment}
		Consider an auxiliary segment \(C_{y,j}^A\). Let \(Z_{y,j}^{\mathrm L}:=\rho_y(q_j^y)\) and \(Z_{y,j}^{\mathrm R}:=\rho_y(q_{j+1}^y)\).
		The segment \(C_{y,j}^A\) is CPR-good if there exists a canonical party-wise
		representative \(B\) such that
		\[
		Z_{y,j}^{\mathrm R}
		\in
		\mathcal T\!\left(G[Z_{y,j}^{\mathrm L}\cup B]\right).
		\]
		For a CPR-good segment, define
		\[
		\omega_{y,j}
		:=
		\min_B
		\left|
		\mathcal T\!\left(G[Z_{y,j}^{\mathrm L}\cup B]\right)
		\right|,
		\]
		where the minimum is taken over all party-wise representatives \(B\)
		satisfying the reachability condition above.
	\end{definition}
	
	Let
	\[
	\mathcal G_{\mathrm{CPR}}
	=
	\{(y,j):C_{y,j}^A\text{ is CPR-good}\}.
	\]
	Define
	\[
	N_{\mathrm{CPR}}
	=
	|\mathcal G_{\mathrm{CPR}}|,
	\qquad
	G_{\mathrm{CPR}}
	=
	\sum_{(y,j)\in\mathcal G_{\mathrm{CPR}}}
	|C_{y,j}^A|,
	\]
	and
	\[
	\Omega_{\mathrm{CPR}}
	=
	\sum_{(y,j)\in\mathcal G_{\mathrm{CPR}}}
	\omega_{y,j}.
	\]
	Since every CPR-good segment has size at least \(C_{\min}^A\),
	\[
	G_{\mathrm{CPR}}
	\ge
	N_{\mathrm{CPR}}C_{\min}^A.
	\]
	The quantity \(G_{\mathrm{CPR}}\) measures the amount of auxiliary local
	filling bypassed by edge-union recombination, while \(\Omega_{\mathrm{CPR}}\) measures
	the repair ambiguity paid by UniformRepair. Figure~\ref{fig:bpbomst-cpr-shortcut}
	illustrates a non-vacuous CPR-good shortcut. The example is used only to
	visualize the structural meaning of the parameters; it is not an additional
	assumption in the proof.
	
	\begin{figure}[H]
		\centering
		\includegraphics[width=1.0\linewidth]{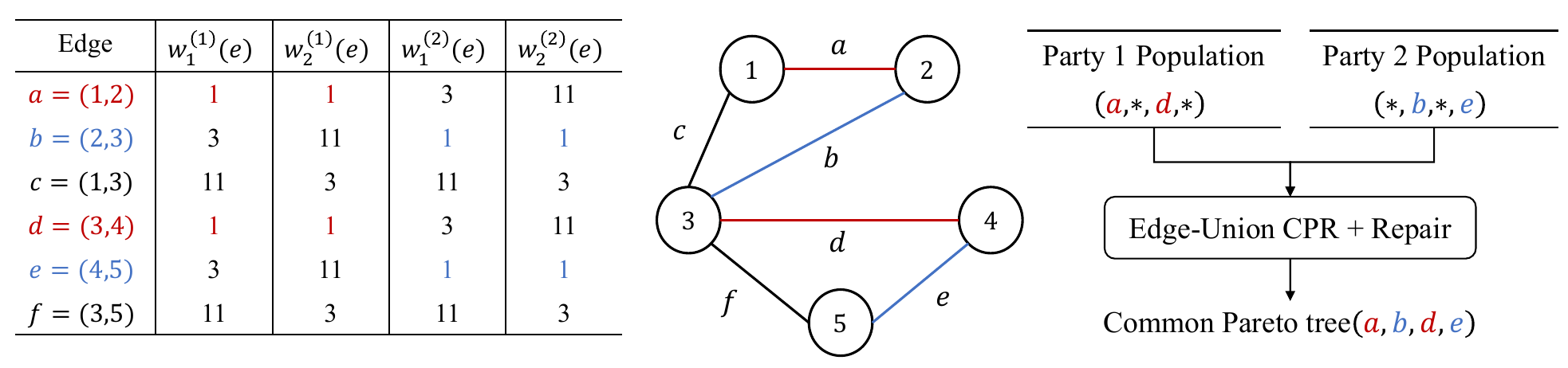}
		\caption{A five-node BPBOMST instance illustrating a CPR-good shortcut.
			Edges \(a\) and \(d\) are favored by Party~1, while edges \(b\) and \(e\)
			are favored by Party~2. The notation \((a,*,d,*)\) denotes the displayed
			Party-1 representatives that preserve the Party-1-favored edges, and
			\((*,b,*,e)\) denotes the displayed Party-2 representatives that preserve
			the Party-2-favored edges. These representatives do not by themselves contain
			all edges of the target common Pareto-optimal tree \(T^\star=(a,b,d,e)\); each misses
			the other party's key edges. Edge-union recombination followed by repair can assemble
			these complementary edge subsets and generate \(T^\star\), illustrating how
			CPR can create a shortcut relevant to common-cover construction.}
		\label{fig:bpbomst-cpr-shortcut}
	\end{figure}
	
	\begin{lemma}[Auxiliary Pareto representatives are joint-nondominated]
		\label{lem:auxiliary-pareto-joint-nondominated}
		Let \(T\) be Pareto-optimal for the auxiliary objective
		\(A_y^{\mathrm{avg}}\). Then no feasible spanning tree \(T'\) strictly
		dominates \(T\) under the joint objective vector \(Y\).
	\end{lemma}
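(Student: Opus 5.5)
The plan is a contrapositive argument using monotonicity of the auxiliary objective in the joint objectives. Suppose, for contradiction, that some feasible spanning tree \(T'\) strictly dominates \(T\) under \(Y\) in the minimization sense. Then \(f_i^{(p)}(T')\le f_i^{(p)}(T)\) for all \(p,i\in\{1,2\}\), and at least one of these four inequalities is strict.

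The first step is to transfer this to the normalized ratios. Since \(y\in\mathcal{PF}_{\mathrm{com}}\) has strictly positive components (all edge weights are positive integers), dividing by \(y_{p,i}\) preserves all inequalities. Hence \(r_{p,i}(T'\mid y)\le r_{p,i}(T\mid y)\) for every \(p,i\), with strict inequality at the same index \((p_0,i_0)\).

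Next, I use the explicit form of the average projection,
\[
A_{p,y}^{\mathrm{avg}}(T)=\tfrac12\bigl(r_{p,1}(T\mid y)+r_{p,2}(T\mid y)\bigr).
\]
This gives \(A_{p,y}^{\mathrm{avg}}(T')\le A_{p,y}^{\mathrm{avg}}(T)\) for both parties. For the party \(p_0\), the inequality is strict, because a sum of two terms is strictly decreased when one term strictly decreases and the other does not increase. So \(A_y^{\mathrm{avg}}(T')\) strictly dominates \(A_y^{\mathrm{avg}}(T)\) in the auxiliary bi-objective minimization problem. This contradicts the assumption that \(T\) is Pareto-optimal for \(A_y^{\mathrm{avg}}\) on \(\mathcal T\), and the lemma follows.

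There is no serious obstacle. The one point requiring care is strictness: the average must be \emph{strictly} increasing in each coordinate, not merely nondecreasing as a general projection \(\phi\) is required to be. That is why the argument uses the specific symmetric average rather than Theorem~\ref{thm:layered-support-cover}'s general \(\phi\). I would remark that the same proof works for any projection \(\phi\) that is strictly increasing in each argument.
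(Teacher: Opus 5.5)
Your proof is correct and follows the paper's argument essentially verbatim: assume a joint dominator $T'$, divide by the positive components of $y$, and conclude that $A_y^{\mathrm{avg}}(T')$ strictly dominates $A_y^{\mathrm{avg}}(T)$, contradicting auxiliary Pareto optimality. Your remark that strictness depends on the average being strictly increasing in each ratio, rather than merely nondecreasing as a general $\phi$ is required to be, justifies a step the paper asserts without comment.
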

	
	\begin{proof}
		Assume that there exists \(T'\) such that
		\[
		Y(T')\le Y(T)
		\]
		component-wise and at least one inequality is strict. Then, for each party
		\(p\),
		\[
		f_1^{(p)}(T')\le f_1^{(p)}(T),
		\qquad
		f_2^{(p)}(T')\le f_2^{(p)}(T).
		\]
		Since all components of \(y\) are positive,
		\[
		A_{p,y}^{\mathrm{avg}}(T')
		\le
		A_{p,y}^{\mathrm{avg}}(T),
		\qquad p\in\{1,2\}.
		\]
		At least one of the two auxiliary inequalities is strict. Hence
		\[
		A_y^{\mathrm{avg}}(T')
		\prec
		A_y^{\mathrm{avg}}(T),
		\]
		contradicting the Pareto optimality of \(T\) under \(A_y^{\mathrm{avg}}\).
	\end{proof}
	
	\begin{lemma}[Archive preservation of cover certificates]
		\label{lem:archive-preservation-fill}
		Suppose the common archive contains a tree \(T\) satisfying
		\[
		Y(T)\le Y(R)
		\]
		for some witness \(R\). Then subsequent archive updates cannot destroy the
		certificate represented by \(R\).
	\end{lemma}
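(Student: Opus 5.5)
The natural reading of the statement is that, once the archive holds a tree whose joint objective vector is component-wise at most $Y(R)$, then at every later generation it still holds some tree $T'$ with $Y(T')\le Y(R)$. Consequently, any approximation guarantee certified by $R$ (in particular $Y(R)\le 2\lambda_{\mathrm{fill}}\,y$) remains certified. I would argue by induction over generations, using only the definition of \textsc{ArchiveUpdate}: it keeps one canonical representative for each nondominated joint objective vector in $Y(\mathcal A\cup Q)$.

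The inductive step is as follows. Assume $\mathcal A_{\mathrm{com}}^t$ contains $T$ with $Y(T)\le Y(R)$, and consider $\mathcal A_{\mathrm{com}}^{t+1}=\textsc{ArchiveUpdate}(\mathcal A_{\mathrm{com}}^t,Q_t)$. The candidate set $\mathcal A_{\mathrm{com}}^t\cup Q_t$ is finite and contains $T$. Hence the set of joint objective vectors in $Y(\mathcal A_{\mathrm{com}}^t\cup Q_t)$ that weakly dominate $Y(T)$ is nonempty and finite. Pick a vector $v$ in this set that is minimal with respect to the component-wise order; such a vector exists because the strict dominance order on a finite set is acyclic. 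I claim $v$ is nondominated in $Y(\mathcal A_{\mathrm{com}}^t\cup Q_t)$. If some $v'$ in the candidate set strictly dominated $v$, then by transitivity $v'$ would also weakly dominate $Y(T)$, which contradicts the minimality of $v$. So the update retains the canonical representative $T'$ of $v$, and
\[
Y(T')=v\le Y(T)\le Y(R)
\]
component-wise. Tie-breaking among trees with the same joint vector does not matter, because the certificate depends only on $Y$.

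Iterating this step over all later generations proves the claim. I would add a remark on the consequence used later. If the witness $R=R_y^{\mathrm{fill}}$ satisfies $Y(R)\le 2\lambda_{\mathrm{fill}}\,y$, then every later archive contains a tree certifying the same $2\lambda_{\mathrm{fill}}$ bound for $y$. Moreover, when $T=R$ is itself auxiliary-Pareto-optimal, Lemma~\ref{lem:auxiliary-pareto-joint-nondominated} shows that nothing strictly dominates it. Any $T'$ that replaces it must therefore have exactly $Y(T')=Y(R)$, and by condition~2 of Definition~\ref{def:representative-compatible-aux-fillability} the canonical representative is $R$ itself. In that case the exact tree persists, which matters because later exchange steps start from it.

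The only delicate point is this last refinement, namely keeping the specific tree rather than just the certificate. It relies on the canonical tie-breaking rule agreeing with $\rho_y$. I would handle it by appealing to representative-compatibility as described, rather than trying to prove it for general witnesses.
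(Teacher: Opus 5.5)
Your proof is correct and is essentially the paper's argument: it combines transitivity of the component-wise order with the rule that \textsc{ArchiveUpdate} keeps a canonical representative of every nondominated joint vector. Picking a minimal vector among those weakly dominating \(Y(T)\) merges the paper's two cases (removal by a dominating tree, or same-vector canonical replacement), and it makes explicit two points the paper leaves implicit: that some dominating tree is itself retained by the update, and the induction over later generations.
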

	
	\begin{proof}
		If \(T\) is removed from the common archive, then either another archived tree \(T'\) satisfies \(Y(T')\le Y(T)\), or \(T\) is replaced by a canonical representative with the same joint objective vector. In the first case, \(Y(T')\le Y(T)\le Y(R)\). In the second case, the replacing tree \(T'\) satisfies \(Y(T')=Y(T)\), and hence again \(Y(T')\le Y(R)\). Therefore the archive still contains a tree no worse than \(R\) in all four objectives.
	\end{proof}
	
	\begin{lemma}[Party-wise convex-front generation]
		\label{lem:partywise-conv-front-generation}
		The canonical party-wise convex-front representatives needed by the provider
		pool can be generated and retained within expected
		\[
		O\!\left(m^2W\log W\cdot C_{\mathrm{pw}}\right)
		\]
		fitness evaluations.
	\end{lemma}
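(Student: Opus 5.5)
The plan is to adapt the classical front-filling argument for bi-objective MSTs (as in the NSGA-II and GSEMO analyses of \cite{cerf2023,qian2013analysis}) separately to each party's bi-objective problem \(F_p\). The argument runs only through the party-wise local variation branch of Algorithm~\ref{alg:cnsga-ii-m3st}, which is executed in every generation regardless of the CPR branch. The steps are: (i) reach the two extreme points of \(\operatorname{conv}(\mathcal{PF}_p)\); (ii) fill the convex sub-front between consecutive supported points by one-edge exchanges; (iii) argue retention throughout. Since each generation evaluates \(O(1)\) new trees (two party-wise offspring and one common offspring), generation bounds and FE bounds coincide up to constants.

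\textbf{Retention.} With \(N_p\ge W+1\), the representative pool keeps every nondominated party-wise objective vector. Because representatives are canonical, a stored representative of a given \(F_p\)-vector can only be replaced by another tree with the same vector, or removed when that vector becomes dominated. A point of \(\mathcal{PF}_p\) is never dominated. So once a canonical representative of a convex-front point appears, it stays, and progress toward \(\operatorname{conv}(\mathcal{PF}_p)\) is monotone. The pool has size at most \(W+1\), so a fixed parent is chosen uniformly with probability at least \(1/(W+1)\).

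\textbf{Extreme points.} For each extreme point, I consider a lexicographic or weighted-sum scalarization that is minimized there, and track the minimum of this scalarization over the pool. That minimum is kept by the pool, since a minimizer is always nondominated. By the standard MST exchange argument (\cite{cerf2023}; Neumann--Wegener style), if the current tree is not optimal, there are improving one-edge exchanges whose total gain equals the current gap. Choosing the right parent has probability at least \(1/(W+1)\), and a specific exchange has probability \(\Omega(1/m^2)\). A multiplicative-drift argument over a gap of at most \(W\) then gives \(O(m^2 W\log W)\) expected generations per extreme point.

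\textbf{Filling.} Let \(q,q'\) be consecutive supported points of \(\mathcal{PF}_p\) with weight vector \(\lambda\). The trees minimizing \(\lambda\cdot F_p\) are connected under one-edge exchanges, and each supported point between \(q\) and \(q'\) is reachable from its neighbour by a single exchange; this is the connectivity of the weighted MST face used in \cite{cerf2023}. Each new convex-front point then costs \(O((W+1)m^2)\) expected generations: pick the neighbouring representative, then perform the specific exchange. Summing over all points of both convex sub-fronts gives \(O(m^2W\cdot C_{\mathrm{pw}})\). Adding the extreme-point phase, the total is within \(O(m^2W\log W\cdot C_{\mathrm{pw}})\), since \(C_{\mathrm{pw}}\ge 1\). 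If the direct exchange neighbourhood fails for some point, I would instead pay an extra drift phase of \(O(m^2W\log W)\) per convex point, which still fits the stated bound.

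\textbf{Main obstacle.} The hard step is justifying the single-exchange adjacency between consecutive convex-front representatives while respecting the canonical tie-breaking in \(\operatorname{Rep}_p\). The exchange may land on a non-canonical tree with the right vector, and its successor exchange may then be missing. I would handle this as follows. Any tree with the required \(F_p\)-vector certifies that point. For the next step I reapply the drift argument on the weighted scalarization restricted to that face, charging \(O(m^2W\log W)\) per point. This worst-case charging is exactly what the \(\log W\) factor multiplying \(C_{\mathrm{pw}}\) in the statement allows for.
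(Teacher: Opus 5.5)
Your proposal follows essentially the same route as the paper. You run the bi-objective MST convex-front exchange argument of \cite{cerf2023} separately for each party, pay $\Omega(1/W)$ for selecting the specified predecessor and $\Omega(1/m^2)$ for the specified exchange, absorb a $\log W$ drift factor, sum over the $C_{\mathrm{pw}}$ convex-front points, and use the representative pool with $N_p\ge W+1$ for retention; confining the $\log W$ factor to the extreme-point phase is a harmless sharpening to $O(m^2W(C_{\mathrm{pw}}+\log W))$.

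One caution: your fallback for the tie-breaking obstacle would not work as written. A point in the relative interior of a convex-hull edge is not the unique minimizer of any linear scalarization, so an additive-weight drift argument cannot target it. The paper's proof does not address this issue either; it simply invokes the cited exchange paths and asserts that canonical representatives are retained once generated.
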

	
	\begin{proof}
		For each party \(p\), the party-wise problem is a bi-objective MST with
		positive integer objective values bounded by \(W\). The standard
		bi-objective MST convex-front filling argument~\cite{cerf2023} gives exchange paths along
		the party-wise convex sub-front. In one iteration, selecting a specified
		predecessor from a party-wise representative population has probability
		\(\Omega(1/W)\), and sampling a specified spanning-tree exchange has
		probability \(\Omega(1/m^2)\). The conservative \(O(\log W)\) factor accounts
		for bounded integer fitness-level progress from arbitrary initialization to
		the relevant party-wise representative level. Thus one party-wise
		convex-front progress point costs
		\[
		O(m^2W\log W)
		\]
		expected evaluations. Summing over the \(C_{\mathrm{pw}}\) party-wise
		convex-front representatives gives the bound. The representative-pool update
		retains the canonical representative of every party-wise nondominated
		objective vector once generated.
	\end{proof}
	
	Let \(A_{\max}\) be an upper bound on the number of joint objective-vector
	representatives in the common archive. Since every joint objective vector
	has four positive integer components bounded by \(W\), a nondominated set in
	\([W]^4\) contains at most one point for every fixed choice of the first
	three coordinates. We may use
	\[
	A_{\max}=O(W^3).
	\]
	
	\begin{lemma}[Second-layer auxiliary filling with CPR shortcuts]
		\label{lem:second-layer-filling-instance}
		Assume the BPBOMST instance is representative-compatible
		auxiliary-fillable. After the party-wise convex-front representatives have
		been generated, the expected number of additional fitness evaluations until
		the common archive contains a tree no worse than every witness in
		\(\mathcal R_{\mathrm{fill}}\) is at most
		\[
		O\!\left(
		\frac{A_{\max}m^2}{1-p_g}
		\left(C_A-G_{\mathrm{CPR}}\right)
		+
		\frac{A_{\max}P_{\max}}{p_g}
		\Omega_{\mathrm{CPR}}
		\right).
		\]
		Equivalently,
		\[
		O\!\left(
		\frac{A_{\max}m^2}{1-p_g}
		\left(C_A-N_{\mathrm{CPR}}C_{\min}^A\right)
		+
		\frac{A_{\max}P_{\max}}{p_g}
		\Omega_{\mathrm{CPR}}
		\right)
		\]
		is an upper bound.
	\end{lemma}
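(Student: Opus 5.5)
The proof is a fitness-level argument over auxiliary segments. I treat each segment separately and sum expected waiting times, using linearity of expectation. Fix \(y\in\mathcal{PF}_{\mathrm{com}}\) and a segment \(C_{y,j}^A\). The goal for that segment is to obtain, in the common archive, a tree weakly dominating (under \(Y\)) the representative \(\rho_y(q_{j+1}^y)\) of the right support endpoint.

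\textbf{Step 1: the base case.} By condition 4 of Definition~\ref{def:representative-compatible-aux-fillability}, the left endpoint \(\rho_y(q_1^y)\) lies in the party-wise representative pools. Those pools are present once Lemma~\ref{lem:partywise-conv-front-generation} has been applied. Every tree in the pools is submitted to the archive through \(Q_t\) in the generation that produced it. Lemma~\ref{lem:auxiliary-pareto-joint-nondominated} shows that auxiliary-Pareto representatives are joint-nondominated, so they are accepted. Lemma~\ref{lem:archive-preservation-fill} then shows that the certificate persists. By condition 2, the canonical tree itself, or a tree with identical \(Y\), stays in the archive.

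\textbf{Step 2: the induction across segments.} Segments are processed left to right for each \(y\), and each segment is charged by one of two routes.

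\emph{Local route.} Suppose the archive holds the canonical predecessor of the next point in \(C_{y,j}^A\). With probability \(1-p_g\) the archive-local branch is taken. The specified receiver is selected with probability at least \(1/A_{\max}\). By condition 3, the required single exchange is sampled with probability \(\Omega(1/m^2)\). The expected cost per point is therefore \(O(A_{\max}m^2/(1-p_g))\) generations. Each generation uses \(O(1)\) evaluations, so this is also the cost in evaluations. Filling all of \(C_{y,j}^A\) this way costs \(O(A_{\max}m^2|C_{y,j}^A|/(1-p_g))\).

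\emph{CPR route.} This applies to a CPR-good segment, once \(Z^{\mathrm L}_{y,j}\) is archived. The CPR branch is taken with probability \(p_g\). The receiver \(Z^{\mathrm L}_{y,j}\) is chosen with probability at least \(1/A_{\max}\). The minimizing provider \(B\) is chosen with probability at least \(1/P_{\max}\). UniformRepair then outputs \(Z^{\mathrm R}_{y,j}\) with probability \(1/|\mathcal T(G[Z^{\mathrm L}\cup B])|=1/\omega_{y,j}\). The expected cost is \(O(A_{\max}P_{\max}\omega_{y,j}/p_g)\). This single jump reaches the right endpoint directly.

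For a CPR-good segment I take the minimum of the two routes and bound it by the CPR-route cost; for other segments I use the local route. Summing the local costs over the non-good segments gives \(C_A-G_{\mathrm{CPR}}\). Summing the CPR costs gives \(\Omega_{\mathrm{CPR}}\). This yields the first bound. The second bound then follows immediately from \(G_{\mathrm{CPR}}\ge N_{\mathrm{CPR}}C^A_{\min}\).

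\textbf{Step 3: reaching the witness.} For each \(y\), the witness \(R_y^{\mathrm{fill}}\) is \(\rho_y(q)\) for some support point \(q\in S_y^A\). Support points are exactly segment endpoints, so after all segments up to \(q\) are processed, the archive holds a tree no worse than \(R^{\mathrm{fill}}_y\). Lemma~\ref{lem:archive-preservation-fill} keeps this certificate through later updates.

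\textbf{Main obstacle.} The delicate point is that the local route requires the exact canonical predecessor tree in the archive, not merely a dominating tree. A tree that only dominates in \(Y\) need not be one exchange away from the next point. My plan is to argue, via Lemma~\ref{lem:auxiliary-pareto-joint-nondominated}, that every \(\rho_y(q)\) is joint-nondominated, so it cannot be displaced except by a tree with the same \(Y\). Condition 2 then makes that tree the canonical representative, which is identical to \(\rho_y(q)\). Hence the exact predecessor persists once generated.

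A second issue is that the waiting times of different segments are not independent. I expect this to be harmless: each waiting time is bounded by a geometric variable with the stated success probability per generation, independent of the history, so expectations add regardless of dependence.
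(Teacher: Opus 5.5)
Your proposal is correct and follows essentially the same argument as the paper. It bounds each local progress point outside CPR-good segments by \(O(A_{\max}m^2/(1-p_g))\) and each CPR-good segment by \(O(A_{\max}P_{\max}\omega_{y,j}/p_g)\), sums these to obtain the \(C_A-G_{\mathrm{CPR}}\) and \(\Omega_{\mathrm{CPR}}\) terms, and keeps the exact canonical predecessors in the archive via Lemma~\ref{lem:auxiliary-pareto-joint-nondominated} and the tie-breaking of condition~2. The differences are only presentational: you state explicitly the base case from condition~4 (which the paper places in the proof of Theorem~\ref{thm:runtime-mpmomst-instance}), the \(O(1)\) evaluations per generation, and the history-independent geometric domination that justifies adding the phase expectations.
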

	
	\begin{proof}
		Fix \(y\in\mathcal{PF}_{\mathrm{com}}\) and a segment \(C_{y,j}^A\).
		Representative-compatible auxiliary fillability gives an exchange ordering
		of the representatives on this segment. This ordering is used only to
		identify successful events in the analysis; the algorithm does not need to
		know \(y\), \(A_y^{\mathrm{avg}}\), or the ordering.
		
		Consider a local progress point not bypassed by a CPR-good segment. Once its
		predecessor representative is present in the common archive, selecting this
		receiver has probability at least \(\Omega(1/A_{\max})\). The local archive
		step is chosen with probability \(1-p_g\). A specified one-edge exchange is
		sampled with probability \(\Omega(1/m^2)\). Hence the probability of
		realizing this local progress event in one iteration is at least
		\[
		\Omega\!\left(
		\frac{1-p_g}{A_{\max}m^2}
		\right),
		\]
		and its expected waiting time is
		\[
		O\!\left(
		\frac{A_{\max}m^2}{1-p_g}
		\right).
		\]
		There are \(C_A-G_{\mathrm{CPR}}\) auxiliary local progress points not
		bypassed by CPR-good segments, giving
		\[
		O\!\left(
		\frac{A_{\max}m^2}{1-p_g}
		\left(C_A-G_{\mathrm{CPR}}\right)
		\right).
		\]
		
		Now consider a CPR-good segment \((y,j)\in\mathcal G_{\mathrm{CPR}}\). Let \(Z_{y,j}^{\mathrm L}=\rho_y(q_j^y)\) and \(Z_{y,j}^{\mathrm R}=\rho_y(q_{j+1}^y)\), and let \(B\) be a party-wise provider attaining \(\omega_{y,j}\). Once
		\(Z_{y,j}^{\mathrm L}\) is present in the common archive, selecting it as the
		receiver has probability at least \(\Omega(1/A_{\max})\). Selecting \(B\)
		from the provider pool has probability at least \(\Omega(1/P_{\max})\). The
		CPR step is chosen with probability \(p_g\). Since the segment is CPR-good,
		\[
		Z_{y,j}^{\mathrm R}
		\in
		\mathcal T(G[Z_{y,j}^{\mathrm L}\cup B]).
		\]
		UniformRepair returns \(Z_{y,j}^{\mathrm R}\) with probability at least
		\(1/\omega_{y,j}\). The expected waiting time for this CPR event is
		\[
		O\!\left(
		\frac{A_{\max}P_{\max}}{p_g}\omega_{y,j}
		\right).
		\]
		Summing over all CPR-good segments gives
		\[
		O\!\left(
		\frac{A_{\max}P_{\max}}{p_g}
		\Omega_{\mathrm{CPR}}
		\right).
		\]
		
		Every representative on an auxiliary convex front is Pareto-optimal under
		\(A_y^{\mathrm{avg}}\). By
		Lemma~\ref{lem:auxiliary-pareto-joint-nondominated}, such a representative
		cannot be strictly dominated under \(Y\). If another tree with the same joint
		objective vector is encountered, the canonical archive rule keeps the
		representative specified by
		Definition~\ref{def:representative-compatible-aux-fillability}. Thus the
		edge structure required for subsequent local or CPR progress is retained.
		Together with Lemma~\ref{lem:archive-preservation-fill}, the generated
		representatives suffice to preserve the cover certificates. The second
		displayed bound follows from
		\(G_{\mathrm{CPR}}\ge N_{\mathrm{CPR}}C_{\min}^A\).
	\end{proof}
	
	\begin{theorem}[Instance-parameterized runtime bound for BPBOMST]
		\label{thm:runtime-mpmomst-instance}
		Consider the representative-pool version of CPR-NSGA-II for BPBOMST with
		CPR probability \(p_g\in(0,1)\), common archive update under joint
		nondominance, and UniformRepair for edge-union recombination. Assume the BPBOMST
		instance is representative-compatible auxiliary-fillable. Then the expected
		number of fitness evaluations until the common archive contains a
		\(2\lambda_{\mathrm{fill}}\)-common approximation cover of
		\(\mathcal{PF}_{\mathrm{com}}\) is at most
		\[
		O\!\left(
		m^2W\log W\cdot C_{\mathrm{pw}}
		+
		\frac{A_{\max}m^2}{1-p_g}
		\left(C_A-G_{\mathrm{CPR}}\right)
		+
		\frac{A_{\max}P_{\max}}{p_g}
		\Omega_{\mathrm{CPR}}
		\right).
		\]
		Equivalently,
		\[
		O\!\left(
		m^2W\log W\cdot C_{\mathrm{pw}}
		+
		\frac{A_{\max}m^2}{1-p_g}
		\left(C_A-N_{\mathrm{CPR}}C_{\min}^A\right)
		+
		\frac{A_{\max}P_{\max}}{p_g}
		\Omega_{\mathrm{CPR}}
		\right)
		\]
		is an upper bound. With \(A_{\max}=O(W^3)\) and \(P_{\max}\le 2(W+1)\), we obtain
		\[
		O\!\left(
		m^2W\log W\cdot C_{\mathrm{pw}}
		+
		\frac{m^2W^3}{1-p_g}
		\left(C_A-N_{\mathrm{CPR}}C_{\min}^A\right)
		+
		\frac{W^4}{p_g}
		\Omega_{\mathrm{CPR}}
		\right).
		\]
		Since \(1\le\lambda_{\mathrm{fill}}\le2\), the obtained archive is always a
		\(4\)-common approximation cover.
	\end{theorem}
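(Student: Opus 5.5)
The bound will be assembled as a two-phase sequential argument, with the phase costs added by linearity of expectation. Phase~1 is the party-wise provider construction. By Lemma~\ref{lem:partywise-conv-front-generation}, within expected \(O(m^2W\log W\cdot C_{\mathrm{pw}})\) fitness evaluations both representative populations \(\mathcal P_1^t,\mathcal P_2^t\) contain all canonical party-wise convex-front representatives. Since \(N_p\ge W+1\), the representative-pool update never discards a nondominated party-wise objective vector once it is present, so these providers persist. In particular, condition~4 of Definition~\ref{def:representative-compatible-aux-fillability} then makes the endpoint representatives \(\rho_y(q_1^y)\) available for every \(y\). These endpoints reach the common archive: they are submitted through \(Q_t\) or initialization, and by Lemma~\ref{lem:auxiliary-pareto-joint-nondominated} they are joint-nondominated, so the canonical archive rule retains them.

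Phase~2 is the second-layer filling. Starting from the state reached in Phase~1, I will invoke Lemma~\ref{lem:second-layer-filling-instance} directly. It bounds the additional expected evaluations by
\[
O\!\left(\frac{A_{\max}m^2}{1-p_g}(C_A-G_{\mathrm{CPR}})+\frac{A_{\max}P_{\max}}{p_g}\Omega_{\mathrm{CPR}}\right)
\]
until the archive holds a tree \(T\) with \(Y(T)\le Y(R_y^{\mathrm{fill}})\) for every witness. One accounting point needs checking. Each generation evaluates only a constant number of new trees (two party-wise offspring and one common offspring), so generation counts and fitness-evaluation counts agree up to constants, consistent with the FE convention in Section~\ref{sec:preliminaries}. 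The per-iteration success probabilities in both lemmas are already per-evaluation up to constants, so no extra population-size factor appears.

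The approximation guarantee then follows from the cover certificates. By Lemma~\ref{lem:archive-preservation-fill}, once a tree no worse than \(R_y^{\mathrm{fill}}\) is archived, the archive keeps such a tree forever. The witnesses come from \(\mathcal S_{\mathrm{fill}}(y)\), so Theorem~\ref{thm:layered-support-cover} gives \(Y(R_y^{\mathrm{fill}})\le 2L_{\mathrm{avg}}(R_y^{\mathrm{fill}}\mid y)\,y\le 2\lambda_{\mathrm{fill}}y\). Combining the two inequalities, the archive is a \(2\lambda_{\mathrm{fill}}\)-common approximation cover in the sense of Definition~\ref{def:common-approx-cover}.

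The remaining forms of the bound are substitutions. The second form uses \(G_{\mathrm{CPR}}\ge N_{\mathrm{CPR}}C_{\min}^A\). The third inserts \(A_{\max}=O(W^3)\) and \(P_{\max}\le2(W+1)\), so that \(A_{\max}P_{\max}=O(W^4)\). The final sentence follows from Corollary~\ref{cor:average-projection-cover}, which gives \(\lambda_{\mathrm{fill}}\le2\) and hence factor at most \(4\).

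The main obstacle is the hand-off between the phases. Lemma~\ref{lem:second-layer-filling-instance} presupposes that segment endpoints and CPR providers \(B\) are present when needed, and that canonical tie-breaking keeps exactly the representatives \(\rho_y(q)\) rather than other trees with equal \(Y\). I would settle this by observing two facts. Condition~2 of the fillability definition makes \(\rho_y(q)\) canonical under \(Y\). Providers \(B\) are canonical party-wise representatives, which Phase~1 places in the pools permanently.
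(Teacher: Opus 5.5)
Your proposal is correct and follows essentially the same route as the paper's proof. Like the paper, you generate the providers via Lemma~\ref{lem:partywise-conv-front-generation}, fill the second layer via Lemma~\ref{lem:second-layer-filling-instance}, keep the certificates via Lemma~\ref{lem:archive-preservation-fill}, and obtain the other forms by substituting \(G_{\mathrm{CPR}}\ge N_{\mathrm{CPR}}C_{\min}^A\), \(A_{\max}=O(W^3)\), and \(P_{\max}\le 2(W+1)\). Your derivation of \(Y(R_y^{\mathrm{fill}})\le 2\lambda_{\mathrm{fill}}y\) from Theorem~\ref{thm:layered-support-cover}, and your check that each generation costs only a constant number of evaluations, spell out steps the paper states ``by construction'' or leaves implicit.
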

	
	\begin{proof}
		Lemma~\ref{lem:partywise-conv-front-generation} gives
		\[
		O\!\left(m^2W\log W\cdot C_{\mathrm{pw}}\right)
		\]
		expected evaluations for generating and retaining the party-wise
		convex-front provider representatives. The endpoint representatives needed
		to start the auxiliary filling processes are submitted to the common archive
		when generated. After these representatives are available,
		Lemma~\ref{lem:second-layer-filling-instance} bounds the expected additional
		time until the common archive contains a tree no worse than every witness in
		\(\mathcal R_{\mathrm{fill}}\). By construction,
		\(\mathcal R_{\mathrm{fill}}\) is a
		\(2\lambda_{\mathrm{fill}}\)-common approximation cover of
		\(\mathcal{PF}_{\mathrm{com}}\), and
		Lemma~\ref{lem:archive-preservation-fill} ensures that the certificates are
		not destroyed by later archive updates. Summing the two phases gives the
		first bound. The other bounds follow from
		\(G_{\mathrm{CPR}}\ge N_{\mathrm{CPR}}C_{\min}^A\),
		\(A_{\max}=O(W^3)\), and \(P_{\max}\le 2(W+1)\).
	\end{proof}
	
	\begin{corollary}
		\label{cor:witness-compatible-runtime}
		If, for every \(y\in\mathcal{PF}_{\mathrm{com}}\), the representative-compatible auxiliary filling structure contains the witness \(R_y^\star\) selected in the definition of \(\lambda_{\mathrm{eff}}\), then \(\lambda_{\mathrm{fill}}=\lambda_{\mathrm{eff}}\), and the runtime bound in Theorem~\ref{thm:runtime-mpmomst-instance} holds for a \(2\lambda_{\mathrm{eff}}\)-common approximation cover.
	\end{corollary}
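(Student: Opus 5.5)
The corollary is a specialization of Theorem~\ref{thm:runtime-mpmomst-instance}, so the plan is to show that, under the stated witness-compatibility hypothesis, the cover quality $2\lambda_{\mathrm{fill}}$ proved there equals $2\lambda_{\mathrm{eff}}$. The runtime expression can then be reused verbatim. Nothing in the runtime argument depends on which witness is chosen inside $\mathcal S_{\mathrm{fill}}(y)$. The bound in Lemma~\ref{lem:second-layer-filling-instance} is phrased through the segment structure $C_{y,j}^A$, the CPR-good set $\mathcal G_{\mathrm{CPR}}$, and the ambiguity $\Omega_{\mathrm{CPR}}$, all of which are independent of the witness selection.

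\emph{Step 1.} I will show $\lambda_{\mathrm{fill}}\le\lambda_{\mathrm{eff}}$. Fix $y\in\mathcal{PF}_{\mathrm{com}}$. By hypothesis $R_y^\star$ lies in the representative-compatible filling structure, meaning $R_y^\star=\rho_y(q)$ for some $q\in S_y^A$. It also satisfies $A_{p,y}^{\mathrm{avg}}(R_y^\star)\le2$, since $R_y^\star\in\mathcal S_{\mathrm{avg}}(y)$. Hence $R_y^\star\in\mathcal S_{\mathrm{fill}}(y)$. By the minimality in the definition of $R_y^{\mathrm{fill}}$, we get $L_{\mathrm{avg}}(R_y^{\mathrm{fill}}\mid y)\le L_{\mathrm{avg}}(R_y^\star\mid y)$. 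Taking the maximum over $y$ gives $\lambda_{\mathrm{fill}}\le\lambda_{\mathrm{eff}}$.

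\emph{Step 2.} I will show $\lambda_{\mathrm{eff}}\le\lambda_{\mathrm{fill}}$, using $\mathcal S_{\mathrm{fill}}(y)\subseteq\mathcal S_{\mathrm{avg}}(y)$. Every element $\rho_y(q)$ with $q\in S_y^A$ has auxiliary value $q\in\operatorname{supp}(F_y^A)$, by condition 1 of Definition~\ref{def:representative-compatible-aux-fillability}. It also satisfies the factor-$2$ constraint, so it belongs to $\mathcal S_{\mathrm{avg}}(y)$. Therefore $L_{\mathrm{avg}}(R_y^\star\mid y)\le L_{\mathrm{avg}}(R_y^{\mathrm{fill}}\mid y)$, and maximizing over $y$ gives the reverse inequality and hence equality.

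\emph{Step 3.} Once $\lambda_{\mathrm{fill}}=\lambda_{\mathrm{eff}}$ is established, Theorem~\ref{thm:runtime-mpmomst-instance} states that, within the displayed expected number of fitness evaluations, the archive contains trees no worse than every witness in $\mathcal R_{\mathrm{fill}}$. Hence it contains a $2\lambda_{\mathrm{fill}}=2\lambda_{\mathrm{eff}}$-common approximation cover, and Lemma~\ref{lem:archive-preservation-fill} keeps this property under later updates.

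The only delicate point is Step 2. It requires that a point of $\operatorname{supp}(F_y^A)$, realized by a filling representative, counts as a support point in the sense used for $\mathcal S_\phi(y)$. This holds because both sets refer to the same auxiliary front $\mathcal{PF}_{A_y^{\mathrm{avg}}}=F_y^A$. A second subtlety is possible ties in the $\arg\min$ choices. These are harmless because only the optimal values $L_{\mathrm{avg}}$ enter the definitions of $\lambda_{\mathrm{eff}}$ and $\lambda_{\mathrm{fill}}$.
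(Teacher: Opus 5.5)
Your proof is correct and follows the paper's route: establish $\lambda_{\mathrm{fill}}=\lambda_{\mathrm{eff}}$, then read the cover quality off Theorem~\ref{thm:runtime-mpmomst-instance}. Your two inclusions, $R_y^\star\in\mathcal S_{\mathrm{fill}}(y)$ and $\mathcal S_{\mathrm{fill}}(y)\subseteq\mathcal S_{\mathrm{avg}}(y)$, are exactly what justifies the paper's one-line claim that the fill witnesses coincide with the $\lambda_{\mathrm{eff}}$ witnesses, and comparing optimal values of $L_{\mathrm{avg}}$ rather than the selected trees correctly handles possible ties in the $\arg\min$.
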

	
	\begin{proof}
		Under the stated condition, the fill witnesses selected in the representative-compatible auxiliary fronts coincide with the witnesses defining \(\lambda_{\mathrm{eff}}\). Hence \(\lambda_{\mathrm{fill}}=\lambda_{\mathrm{eff}}\). The claimed runtime bound follows directly from Theorem~\ref{thm:runtime-mpmomst-instance}.
	\end{proof}
	
	The parameters in Theorem~\ref{thm:runtime-mpmomst-instance} are all
	instance-structural. The term \(C_A\) is the total size of the auxiliary
	convex fronts induced by the average projections over all
	\(y\in\mathcal{PF}_{\mathrm{com}}\), and it plays the role of
	\(|\operatorname{conv}(F^*)|\) in the bi-objective MST analysis. The term
	\(C_{\min}^A\) is the smallest nonempty auxiliary convex-segment size.
	The term \(N_{\mathrm{CPR}}\) counts the auxiliary convex segments whose
	right endpoints are reachable from their left endpoints by edge-union recombination,
	while \(G_{\mathrm{CPR}}\) records the exact local-filling gain. The term
	\(\Omega_{\mathrm{CPR}}\) is the total spanning-tree ambiguity paid by
	UniformRepair. The term \(C_{\mathrm{pw}}\) is the total size of the
	party-wise convex fronts that supply provider edge structures.
	
	A CPR-good segment of size \(s\) and ambiguity \(\omega\) is useful when the
	local filling time it saves dominates the CPR time it costs. Ignoring
	constant factors, this requires
	\[
	\frac{A_{\max}m^2}{1-p_g}s
	\gtrsim
	\frac{A_{\max}P_{\max}}{p_g}\omega,
	\]
	or equivalently
	\[
	s
	\gtrsim
	\frac{1-p_g}{p_g}
	\cdot
	\frac{P_{\max}}{m^2}
	\cdot
	\omega.
	\]
	Thus the theorem predicts a speed-up on instances where edge-union recombination can
	jump over long auxiliary convex segments and the corresponding union
	subgraphs have moderate spanning-tree ambiguity.
	
	\subsection{Implications of the BPBOMST Analysis}
	\label{subsec:bpbomst-implications}
	
	The BPBOMST analysis clarifies why common-solution search should not be
	reduced either to independent party-wise optimization or to a flattened
	many-objective formulation. Both baselines are natural, but neither is
	aligned with the common Pareto structure targeted here.
	
	A party-wise baseline solves the bi-objective MST problem of each party
	independently and then intersects the returned approximation sets. Suppose
	party \(p\) returns a set \(\mathcal A_p\) that approximates
	\(\mathcal{PS}_p\). The post-processing rule would form \(\mathcal A_{\cap}=\mathcal A_1\cap\mathcal A_2\).
	This strategy is simple, but a party-wise approximation guarantee only
	states that each party-wise Pareto tree can be approximated under the
	objectives of the same party. It does not require the approximating trees
	selected for different parties to coincide.
	
	\begin{proposition}[Independent approximation is insufficient]
		\label{prop:independent-insufficient}
		For BPBOMST, party-wise approximation guarantees for \(\mathcal{PS}_1\) and
		\(\mathcal{PS}_2\) alone do not imply that
		\(\mathcal A_1\cap\mathcal A_2\) contains a common approximation cover of
		\(\mathcal{PF}_{\mathrm{com}}\).
	\end{proposition}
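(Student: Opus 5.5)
This is a non-implication claim, so the plan is to build one explicit counterexample. I want a BPBOMST instance with nonempty $\mathcal{PF}_{\mathrm{com}}$, together with sets $\mathcal A_1,\mathcal A_2$ such that each $\mathcal A_p$ satisfies a party-wise approximation guarantee for $\mathcal{PS}_p$, yet $\mathcal A_1\cap\mathcal A_2$ is not an $\alpha$-common approximation cover for any $\alpha$. The cleanest version makes the intersection empty. By Definition~\ref{def:common-approx-cover}, an empty set cannot cover a nonempty $\mathcal{PF}_{\mathrm{com}}$, since it provides no tree $T$ at all.

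For the construction I would take a small graph, such as a triangle or a $4$-cycle, where every spanning tree omits exactly one edge. I would assign positive integer weights so that some tree $T^\star$ is Pareto-optimal for both parties; the simplest way is to give $T^\star$ equal best values in every component, so that $\mathcal{PS}_{\mathrm{com}}\ni T^\star$. I would also arrange two further trees $T_1\ne T_2$ with two properties. First, $F_1(T_1)=F_1(T^\star)$, for example by putting equal Party~1 weights on the two edges exchanged between $T^\star$ and $T_1$. Second, $F_2(T_2)=F_2(T^\star)$, by the analogous equality of Party~2 weights. Then I set $\mathcal A_1=\{T_1\}$ and $\mathcal A_2=\{T_2\}$. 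Each party-wise set exactly attains every party-wise Pareto vector, so it is a $1$-approximation (hence an $\alpha$-approximation for every $\alpha\ge1$) of $\mathcal{PF}_p$. Meanwhile $\mathcal A_1\cap\mathcal A_2=\emptyset$.

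The step that needs care is checking $\mathcal{PS}_1$ and $\mathcal{PS}_2$ explicitly on the small graph, by enumerating its three or four spanning trees. I have to confirm two things: that each $\mathcal A_p$ really covers all of $\mathcal{PF}_p$, not just $F_p(T^\star)$, and that $T^\star$ is indeed common Pareto-optimal. To keep this enumeration easy, I would choose weights so that each party-wise front is the single point $F_p(T^\star)$.

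As a stronger variant, I would also require $Y(T_1)$ to be large in Party~2's objectives and $Y(T_2)$ large in Party~1's objectives, by scaling the other party's weights on the edges involved. This shows that even allowing $\mathcal A_1\cup\mathcal A_2$ gives no bounded common factor. It is optional, since the claim concerns only the intersection. Finally, I would remark that the obstruction is exactly the one described before the proposition: party-wise guarantees fix objective vectors within a single party, not tree identities across parties.
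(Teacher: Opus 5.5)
Your counterexample is correct, but it takes a different route from the paper. The paper argues only at the level of definitions: a party-wise guarantee constrains $F_p$-values, not tree identities, so the trees $T_1$ and $T_2$ that approximate $T^\star$ for the two parties need not coincide, ``hence the intersection may be empty''. It adds, informally, that even a nonempty intersection need not approximate the same common point under all four components of $Y$.

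The paper never exhibits an instance where this actually happens, so your explicit witness is what makes the non-implication rigorous. The paper's version is shorter and names the general mechanism, including the nonempty failure mode, which you do not need.

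Your plan is realizable as stated. Take a triangle with edges $a,b,c$ and $T^\star=\{a,b\}$, with $\mathbf w^{(1)}(a)=(1,1)$, $\mathbf w^{(1)}(b)=\mathbf w^{(1)}(c)=(2,2)$, $\mathbf w^{(2)}(b)=(1,1)$ and $\mathbf w^{(2)}(a)=\mathbf w^{(2)}(c)=(2,2)$. Then $\mathcal{PS}_1=\{\{a,b\},\{a,c\}\}$, $\mathcal{PS}_2=\{\{a,b\},\{b,c\}\}$ and $\mathcal{PF}_{\mathrm{com}}=\{(3,3,3,3)\}$. The sets $\mathcal A_1=\{\{a,c\}\}$ and $\mathcal A_2=\{\{b,c\}\}$ are exact party-wise covers with empty intersection. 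Since even $\mathcal A_p\subseteq\mathcal{PS}_p$ holds, the example survives stricter readings of ``party-wise guarantee''.

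One caution on your optional strengthening: it cannot be carried out on a triangle or $4$-cycle. There $T^\star$ has a single non-tree edge $f$. Any $T_1,T_2\neq T^\star$ therefore have the form $T^\star-e_1+f$ and $T^\star-e_2+f$ with $e_1\neq e_2$ in $T^\star$. The condition $F_2(T_2)=F_2(T^\star)$ forces $\mathbf w^{(2)}(f)=\mathbf w^{(2)}(e_2)$. Hence $F_2(T_1)=F_2(T^\star)-\mathbf w^{(2)}(e_1)+\mathbf w^{(2)}(e_2)<2F_2(T^\star)$, while $F_1(T_1)=F_1(T^\star)$. So $\mathcal A_1\cup\mathcal A_2$ always contains a common $2$-approximation of $Y(T^\star)$; if $T_1$ or $T_2$ equals $T^\star$, the union contains $T^\star$ itself. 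To make the union fail by a large factor, you need a graph in which $T_1$ and $T_2$ import different non-tree edges.
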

	
	\begin{proof}
		The party-wise guarantee is defined separately for each \(F_p\). For a
		common Pareto-optimal tree \(T^\star\in\mathcal{PS}_{\mathrm{com}}\), party \(1\) may
		approximate \(T^\star\) by a tree \(T_1\) that is good under \(F_1\), while
		party \(2\) may approximate the same \(T^\star\) by a different tree \(T_2\)
		that is good under \(F_2\). The definition imposes no requirement that
		\(T_1=T_2\), nor does it require that the two approximation sets overlap on a
		tree that approximates \(T^\star\) under both parties. Hence the intersection
		may be empty. Even when it is nonempty, membership in both party-wise
		approximation sets does not imply approximation of the same common Pareto
		point under the four components of \(Y\).
	\end{proof}
	
	Flattening takes the opposite route. It combines all party objectives into a
	single high-dimensional vector. For BPBOMST, the flattened objective is
	\[
	\mathbf f_{\mathrm{flat}}(T)
	=
	\bigl(
	f_1^{(1)}(T),f_2^{(1)}(T),
	f_1^{(2)}(T),f_2^{(2)}(T)
	\bigr)
	=
	Y(T).
	\]
	For a general MPMOMST instance with \(M\) parties and \(k_p\) objectives for party \(p\), flattening gives \(K=\sum_{p=1}^{M}k_p\) objectives. Flattening preserves all objective components, but it changes
	the search object. The flattened Pareto set is the nondominated set under
	the full \(K\)-dimensional objective vector, whereas BPBOMST targets \(\mathcal{PF}_{\mathrm{com}}=\{Y(T):T\in\mathcal{PS}_1\cap\mathcal{PS}_2\}\).
	
	\begin{proposition}[Archive-size pressure under flattening]
		\label{prop:flattening-archive-pressure}
		In a \(K\)-objective flattened formulation with positive integer objective
		values bounded by \(W\), a front-preserving elitist analysis must allow for
		mutually nondominated objective vectors whose number scales as
		\[
		\Omega(W^{K-1})
		\]
		in the worst case.
	\end{proposition}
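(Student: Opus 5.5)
The plan is to exhibit, in $[W]^K$, an explicit antichain of size $\Theta(W^{K-1})$ under componentwise minimization. A front-preserving elitist analysis has to be valid on every instance whose objective values lie in this range, so it must budget for a nondominated set of this size. The canonical candidate is a level set of the coordinate sum. Fix $S$ and take
\[
L_S:=\Bigl\{v\in\{1,\dots,W\}^K:\ \textstyle\sum_{i=1}^{K}v_i=S\Bigr\}.
\]
If $v\ne v'$ both lie in $L_S$ and $v\le v'$ componentwise, then some coordinate is strictly smaller, so $\sum_i v_i<\sum_i v'_i$, which is a contradiction. Hence $L_S$ is an antichain and all its vectors are mutually nondominated.

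Next comes the counting step. We choose $S$ near the middle, $S=\lceil K(W+1)/2\rceil$. Restrict the first $K-1$ coordinates to the box $\{\lceil W/4\rceil,\dots,\lfloor 3W/4\rfloor\}^{K-1}$, perturbed slightly if necessary so the residual stays in range. Then the residual $v_K=S-\sum_{i<K}v_i$ lies in $\{1,\dots,W\}$ for a constant fraction of these choices. The simplest way is to choose the box width $W/(2(K-1))$ around $(W+1)/2$, so the residual deviates from $(W+1)/2$ by at most $W/2$. This yields $|L_S|\ge (W/(2K))^{K-1}=\Omega(W^{K-1})$ for fixed $K$. A matching upper bound $O(W^{K-1})$ follows from the same observation used for $A_{\max}$: an antichain contains at most one point per choice of the first $K-1$ coordinates. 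This upper bound is not required, but it shows that the order is tight.

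The remaining step, and the main obstacle, is to connect this abstract antichain to the flattened problem, that is, to argue that such vectors can actually be realized as the flattened nondominated set. For the "must allow for" phrasing, it suffices to argue at the level of objective spaces. A front-preserving elitist analysis that stores one representative per nondominated objective vector can only assume the a priori bound that is valid for all objective sets in $[W]^K$. Since $L_S$ could appear as the whole attained front, the population or archive capacity cannot be bounded below $|L_S|$ without further instance assumptions. If a realizable MST instance is desired instead, I would first try to align with $W=(n-1)w_{\max}$: take a graph with many parallel-like edge choices and $K$-dimensional weights that are permutations of a fixed vector with constant coordinate sum. Then every spanning tree has the same total over coordinates, and by the sum argument the attained vectors form an antichain. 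Showing that these attain $\Omega(W^{K-1})$ distinct vectors would require a counting argument over edge multisets, so I would present the objective-space version as the proof and mention the realizable construction only as a remark.
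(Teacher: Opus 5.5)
Your proposal is correct and takes the same approach as the paper. The paper also takes a middle level set of the coordinate sum in $\{1,\dots,W\}^K$, notes that it is an antichain under componentwise minimization, and concludes that a front-preserving elitist method needs capacity $\Omega(W^{K-1})$, arguing only in the objective lattice as you do; your explicit count of the middle level and the matching $O(W^{K-1})$ upper bound fill in details the paper leaves implicit.
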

	
	\begin{proof}
		Each objective coordinate takes integer values in \(\{1,\ldots,W\}\). In a \(K\)-dimensional objective lattice, all points with the same coordinate sum form an antichain under component-wise minimization. Choosing a middle level of the lattice gives an antichain of size \(\Omega(W^{K-1})\). An elitist algorithm that attempts to preserve all relevant nondominated flattened objective vectors must therefore provide population or archive capacity proportional to the size of such an antichain.
	\end{proof}
	
	This capacity statement is not a claim that every BPBOMST instance realizes	the largest possible antichain. Its role is to explain why a flattened front-preserving analysis is controlled by the ambient objective dimension \(K\). The BPBOMST bound in Theorem~\ref{thm:runtime-mpmomst-instance} is controlled instead by \(C_A\), \(C_{\min}^A\), \(N_{\mathrm{CPR}}\), \(\Omega_{\mathrm{CPR}}\), and \(C_{\mathrm{pw}}\). These quantities describe auxiliary common-cover geometry, party-wise
	provider fronts, and edge-union reachability.
	
	The role of CPR is therefore not merely to add diversity. Party-wise search generates edge structures that are good under individual bi-objective evaluations. The common archive records progress under \(Y\). Edge-union recombination	connects these two layers by injecting party-wise edge structures into archive receivers. Independent search preserves party-wise quality but does not coordinate the representatives needed for common approximation. Flattened search coordinates all objectives but loses the party structure
	that makes cross-party edge transfer meaningful. CPR-NSGA-II occupies the intermediate position. It keeps party-wise structure available and uses the common archive to test whether such structure can be assembled into common approximate solutions.
	
	The main implication is that the relevant object in multi-party	combinatorial search is neither the flattened Pareto front nor the union of independently approximated party-wise fronts. It is the interaction between the common Pareto structure and the cross-party recombination structure. The former determines what should be covered, while the latter determines whether party-wise components can be assembled efficiently into common approximate
	solutions.
	
	\section{Experiments}
	\label{sec:experiments}
	
	In this section, we empirically evaluate CPR-NSGA-II on the MP-JCG and BPBOMST instances. Rather than providing an exhaustive empirical benchmark, these experiments are designed to corroborate our theoretical analyses. Specifically, they aim to verify whether the observed search dynamics align with the mechanisms established in the previous sections, thereby illustrating the effect of cross-party recombination on consensus search.
	
	\subsection{Consensus Search on MP-JCG}
	\label{subsec:experiments-mpjcg}
	
	To examine the runtime trend established in Section~\ref{sec:mpjcg-analysis}, we record the number of fitness evaluations (FEs) required to identify the common Pareto set across varying problem sizes. Specifically, we set the gap parameter \(k=3\) and vary the decision dimension \(n \in \{10,20,\dots,100\}\). For each configuration, we perform 10 independent runs with a maximum evaluation budget of \(FE_{\max}=1{,}000{,}000\). In the CPR-based implementation (denoted as CNSGA-II-IS), the population size is fixed at 50, the mutation rate is set to \(1/n\), and the inter-party crossover probability is set to 0.5. This implementation closely mirrors the analytical CPR-NSGA-II variant, with the exception of adopting practical finite-population parameters.
	
	Figure~\ref{fig:mpjcg-fe} compares the FEs of CNSGA-II-IS with those of the payoff-guided mutation baseline, \textsc{GEMPMO\_payoff}. The empirical results are consistent with the theoretical prediction: cross-party recombination effectively reduces the dominant waiting time by assembling complementary components, thereby bypassing the rare mutation events required by the baseline. % A one-sided Mann--Whitney \(U\) test indicates that the reduction in FEs is statistically significant at the 5\% level for all tested dimensions \(n\ge20\).
	
	\begin{figure}[H]
		\centering
		\includegraphics[width=0.7\linewidth]{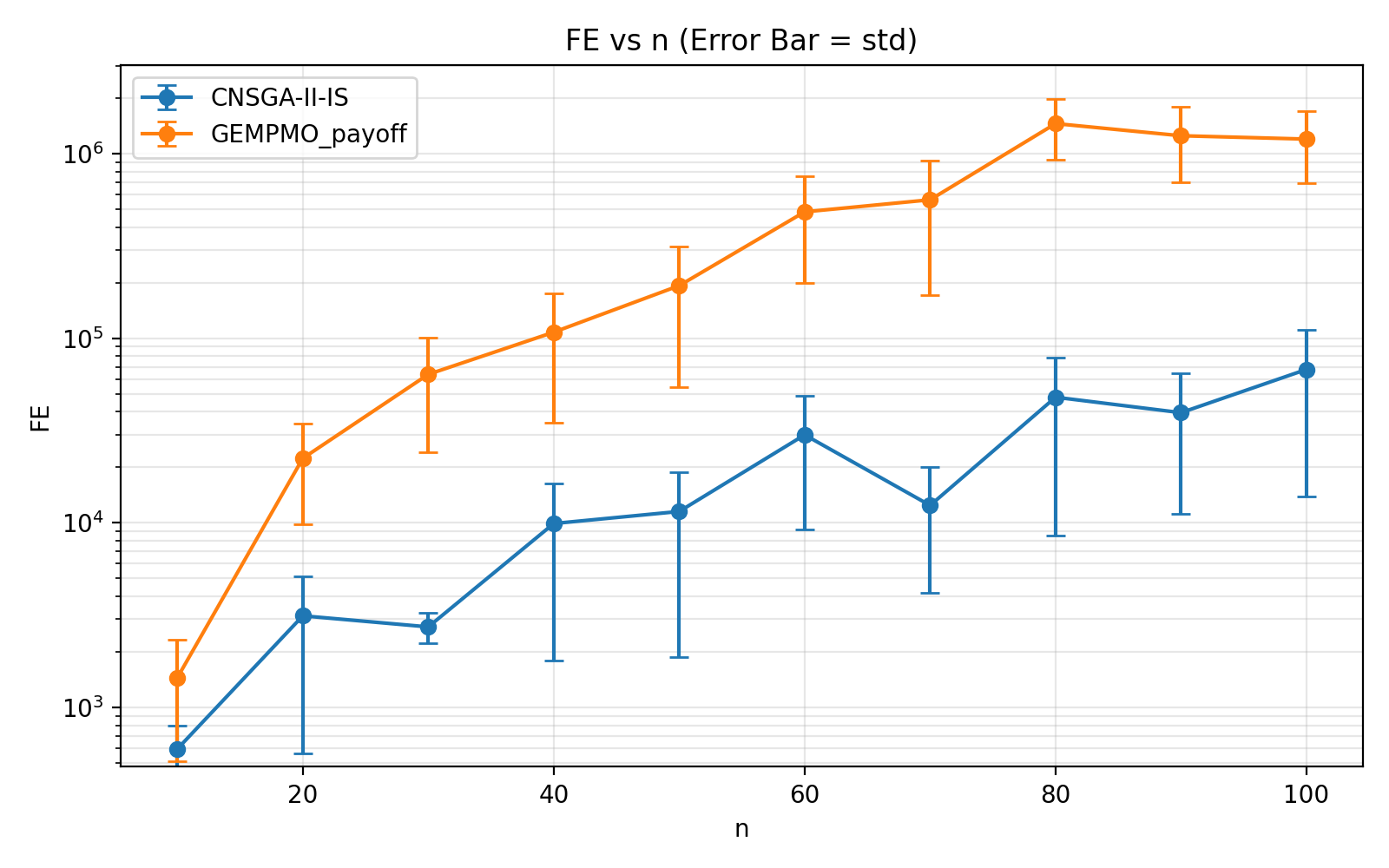}
		\caption{Fitness evaluations required to identify the common Pareto set on MP-JCG instances. Error bars represent one standard deviation over 10 independent runs.}
		\label{fig:mpjcg-fe}
	\end{figure}
	
	\subsection{Consensus Search on BPBOMST}
	\label{subsec:experiments-bpbomst}
	
	For the BPBOMST instances, we evaluate the efficiency of reaching prescribed common approximation ratios with respect to a known common Pareto front. Each instance is generated by initially specifying a set of common Pareto-optimal spanning trees, and subsequently constructing the graph topology and party-wise edge weights such that these trees constitute the target common Pareto front. This controlled setting allows the approximation ratio to be evaluated directly against a known consensus target, which aligns with the common-cover hitting-time perspective discussed in Section~\ref{sec:bpbomst-analysis}. We conduct experiments on randomly generated bi-party, two-objective-per-party undirected graphs with node sizes \(n\in\{5,\dots,29\}\). The graphs are generated from a base path topology augmented with random edges, capping the total number of edges at \(2n\). For each problem size, we perform 5 independent runs with a dynamic evaluation budget of \(FE_{\max}=20{,}000n\). The population size is set to \(100(n-1)+1\), and edge weights are assigned uniformly at random. We record the FEs required by CPR-NSGA-II and the independent party-wise baseline, CNSGA-II-Par, to achieve approximation ratios of \(\alpha\in\{2,3,4\}\).
	
	Figure~\ref{fig:mpmomst-approx-ratio} reports the FEs required to achieve the target approximation ratios across different problem sizes. On the evaluated instances, CPR-NSGA-II attains the target ratios utilizing fewer evaluations than the independent party-wise baseline. This observation corroborates the theoretical utility of the common archive and edge-union recombination. Specifically, while the independent baseline may improve individual party-wise fronts, it does not guarantee the generation of trees that simultaneously approximate the common Pareto objective vector for both parties. In contrast, CPR-NSGA-II leverages the common archive to track joint objective-vector progress and utilizes edge-union recombination to transfer party-wise edge structures into common-archive receivers.
	
	\begin{figure}[H]
		\centering
		\subfloat[\(\alpha=2\)]{\includegraphics[width=0.32\linewidth]{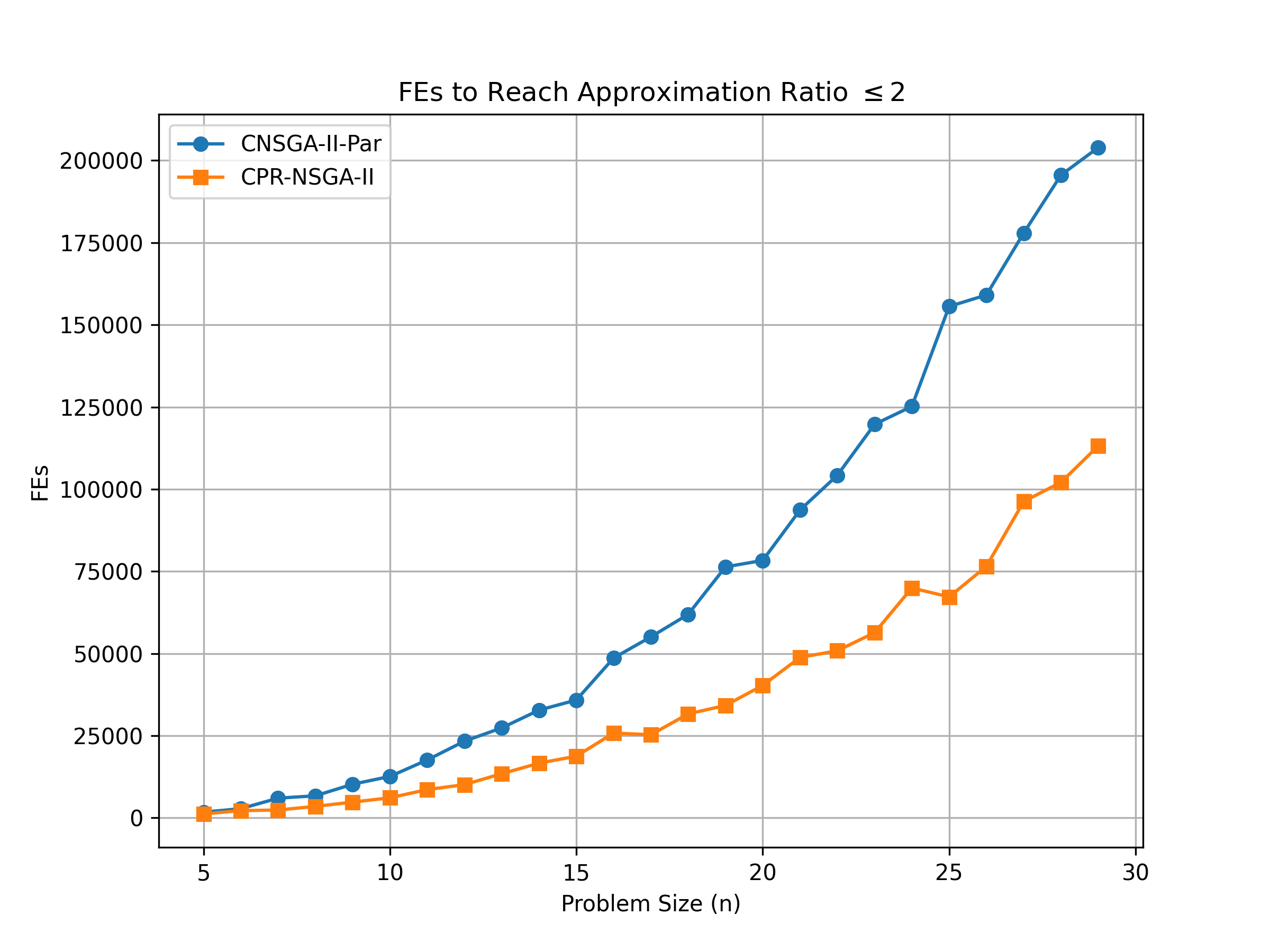}}\hfill
		\subfloat[\(\alpha=3\)]{\includegraphics[width=0.32\linewidth]{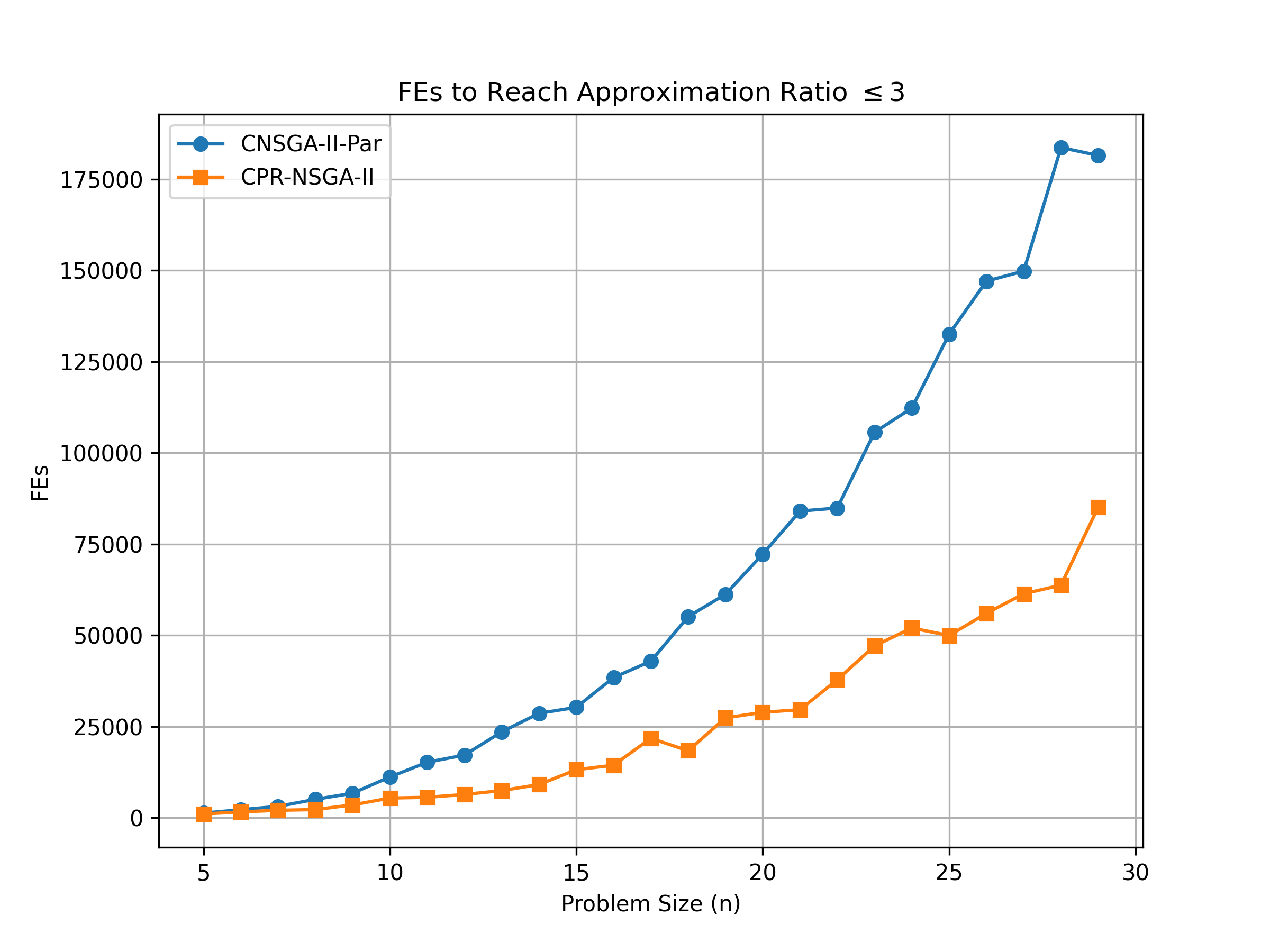}}\hfill
		\subfloat[\(\alpha=4\)]{\includegraphics[width=0.32\linewidth]{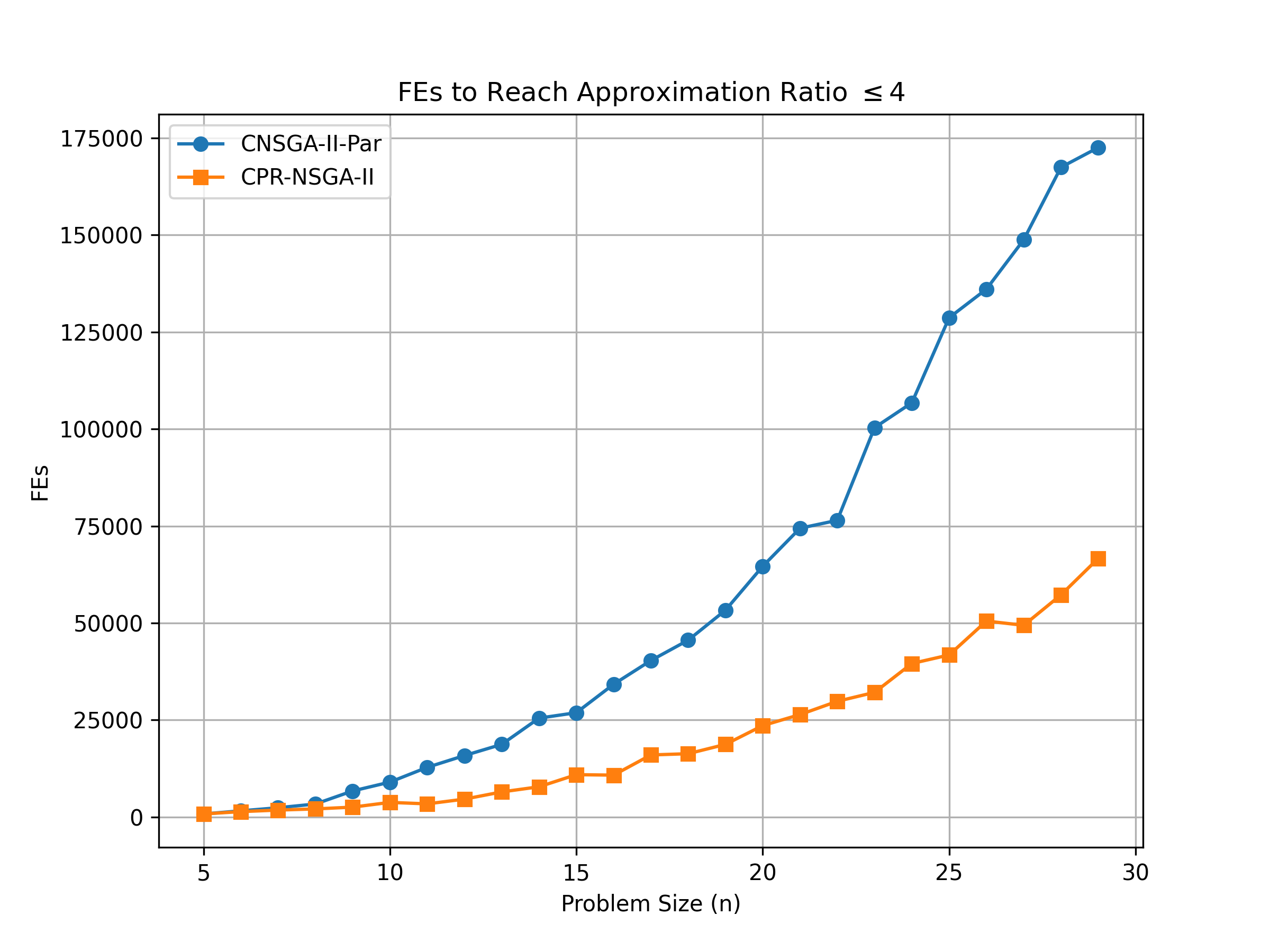}}
		\caption{Fitness evaluations required to achieve target approximation ratios \(\alpha\in\{2,3,4\}\) on BPBOMST instances. The panels correspond to \(\alpha=2\), \(\alpha=3\), and \(\alpha=4\), respectively.}
		\label{fig:mpmomst-approx-ratio}
	\end{figure}
	
	The results from both experimental settings support a unified qualitative conclusion across different search spaces. In MP-JCG, cross-party recombination merges complementary bit patterns from different parties, thereby expediting the convergence to the common Pareto set. In BPBOMST, it combines edge structures preserved during party-wise search, validating their effectiveness through a common archive. These findings align with the theoretical mechanisms identified in Sections~\ref{sec:mpjcg-analysis} and~\ref{sec:bpbomst-analysis}, demonstrating that cross-party recombination is beneficial when party-wise search maintains components capable of being assembled into common candidate solutions.
	
	\section{Discussion and Conclusions}
	\label{sec:conclusion}
	
	This paper investigated multi-party multi-objective optimization (MPMOP) from a consensus-search perspective, theoretically analyzing how cross-party recombination alters dominant search bottlenecks. Our analysis distinguishes multi-party consensus search from conventional many-objective optimization. Specifically, flattening party-wise objectives into a single high-dimensional vector preserves all objective components but shifts the search target from the common Pareto structure to a more expansive nondominated set in the ambient space. Conversely, maintaining party-wise populations preserves heterogeneous search information, allowing cross-party recombination to assemble compatible components into common candidate solutions.
	
	On the pseudo-Boolean benchmark MP-JCG, we proved that a payoff-guided mutation baseline incurs an expected fitness-evaluation complexity of \(\Theta(n^2)\). In contrast, an analytical CPR-NSGA-II variant discovers both common Pareto-optimal solutions in \(O(n\log n)\) expected evaluations under the stated assumptions. This efficiency gain stems from replacing a rare mutation-driven gap-crossing event with the direct assembly of complementary party-wise components: Party~2 supplies a nearly complete prefix, Party~1 provides a constant-length suffix template, and a boundary crossover merges them into a safe intermediate solution.
	
	For the BPBOMST problem (a bi-party, two-objective-per-party spanning tree setting), we developed a layered support-cover analysis. The symmetric average projection of each common Pareto objective vector induces an auxiliary bi-objective MST instance. Suitable support representatives yield a party-level factor-2 cover, and lifting this bound to the original four objectives produces a \(2\lambda\)-common approximation guarantee, where \(\lambda\in[1,2]\). Furthermore, we derived an instance-parameterized expected runtime bound for a representative-pool CPR-NSGA-II variant under representative-compatible auxiliary fillability, edge-union reachability, and uniform repair. By separating the effects of auxiliary common-cover geometry, party-wise provider-front size, CPR-good shortcut gains, and edge-union repair ambiguity, this bound theoretically justifies when cross-party recombination accelerates common-cover construction, advancing beyond its treatment as a black-box heuristic.
	
	Our empirical observations corroborate these theoretical mechanisms. On MP-JCG, the CPR-based implementation demands fewer fitness evaluations to identify the common Pareto set compared to the payoff-guided baseline. On BPBOMST, CPR-NSGA-II achieves target common approximation ratios faster than the independent party-wise baseline. While the independent baseline improves individual bi-objective fronts, it fails to ensure that the resulting representatives coincide on common approximate solutions. CPR-NSGA-II mitigates this by preserving party-wise structures and employing a common archive to verify whether these structures can be assembled to approximate the common Pareto front.
	
	These findings also elucidate why multi-party consensus search should not be conflated with conventional many-objective optimization. Although flattening coordinates all components, it can unnecessarily expand the search target, increasing the population or archive capacity required by front-preserving elitist algorithms, and dispersing search efforts across trade-off regions irrelevant to the common Pareto set. The proposed CPR framework circumvents this by keeping the party-wise structure explicit and leveraging recombination to aggregate cross-party complementary information.
	
	The provable advantages of cross-party recombination inherently depend on the structural properties of the problem instances, particularly cross-party complementarity. If party-wise objectives are highly aligned or the search space lacks recombinable substructures, recombination may prove neutral or even disruptive. Several analytical limitations also delineate the scope of the current theory. First, the MP-JCG analysis assumes a constant gap parameter; scaling this parameter with problem size may alter the complexity separation between recombination-based and mutation-only search. Second, the \(O(n\log n)\) bound for MP-JCG relies on an analytical CPR-NSGA-II variant that isolates recombination effects via structured template construction and random template availability. A more comprehensive theory should derive template availability directly from standard population dynamics. Third, the BPBOMST analysis is confined to the bi-party, two-objective-per-party regime and assumes representative-compatible auxiliary fillability, uniform edge-union repair, and instance-dependent reachability parameters. Finally, the limitations identified for the flattening approach are primarily based on worst-case capacity arguments rather than tight hitting-time lower bounds.
	
	Several avenues remain open for future research. One direction is to extend the BPBOMST framework to scenarios with more parties and additional objectives per party. Another is to derive runtime bounds explicitly in terms of complementarity parameters, the population dynamics that preserve useful party-wise structures, and the repair ambiguity introduced by edge-union recombination.
	
	\bibliographystyle{model2-names.bst}
	\bibliography{AIJ}
	
\end{document}